\def\BdCor{\text{BdCor}}
\def\dCor{\text{dCor}}
\theoremstyle{plain}  
\newtheorem{definition}{Definition}
\newtheorem{lemma}{Lemma}
\newtheorem{proposition}{Proposition}
\newtheorem{corollary}{Corollary}
\newtheorem{theorem}{Theorem}
\newcommand{\M}{\mathcal{M}}
\newcommand{\pspace}{\mathcal{Z}}
\newcommand{\hilb}{\mathcal{H}}
\newcommand*{\indep}{%
  \mathbin{%
    \mathpalette{\@indep}{}%
  }%
}
\newcommand*{\nindep}{%
  \mathbin{
    \mathpalette{\@indep}{\not}
  }%
}
\newcommand*{\@indep}[2]{%
  \sbox0{$#1\perp\m@th$}
  \sbox2{$#1=$}
  \sbox4{$#1\vcenter{}$}
  \rlap{\copy0}
  \dimen@=\dimexpr\ht2-\ht4-.2pt\relax
  \kern\dimen@
  {#2}%
  \kern\dimen@
  \copy0 
} 
\title{Bayesian Kernelised Test of (In)dependence with Mixed-type Variables}
\author{ {\bf Alessio Benavoli} \\
School of Computer Science and Statistics\\
Trinity College Dublin\\
Ireland\\
\And
{\bf Cassio de Campos}  \\
Mathematics and Computer Science      \\
Eindhoven University of Technology \\
Netherlands\\
}
\pgfplotsset{compat=1.16}
\begin{document}

\maketitle

\begin{abstract}
  A fundamental task in AI is to assess (in)dependence between mixed-type variables (text, image, sound). We propose a Bayes\-ian kernelised correlation test of (in)dependence using a Dirichlet process model. The new measure of (in)dependence allows us to answer some fundamental questions: Based on data, are (mixed-type) variables independent? How likely is dependence/independence to hold? How high is the probability that two mixed-type variables are more than just weakly dependent? We theoretically show the properties of the approach, as well as algorithms for fast computation with it. We empirically demonstrate the effectiveness of the proposed method by analysing its performance and by comparing it with other frequentist and Bayesian approaches on a range of datasets and tasks with mixed-type variables.
   
\end{abstract}

\section{Introduction}\label{sec:intro}
Most traditional data analysis approaches are hindered by handling either continuous or categorical variables but not both.
The digital era has led to a rapid increase in data diversity and volume. This requires techniques that can effectively deal with mixed-type data
(images, text, sound), and that are able to extract useful information from data and to communicate meaningful insights.
%
We focus on a particular aspect of data analysis through the following Question:
\textit{Given mixed-type variables, are they dependent or are they independent?}
  
Distance correlation ($\dCor$) \citep{szekely2009brownian,szekely2007measuring} is a measure of dependence that generalises Pearson correlation defined for paired variables of arbitrary (and not necessarily equal) type. $\dCor$ takes values in $[ 0 , 1 ]$ and equals zero if and only if independence holds. It detects both linear and nonlinear associations. Moreover, it has been shown~\citep{sejdinovic2013equivalence} that its statistics can be defined via the \textit{squared Hilbert-Schmidt norm of the cross-covariance operator} (HSIC) of the kernel embedding of the distribution  into \emph{Reproducing Kernel Hilbert Spaces}
(RKHSs)~\citep{doran2014permutation,eric2008testing,fukumizu2008kernel,gretton2012kernel,gretton2008kernel}. Therefore, $\dCor$ can be easily generalised to any type of data by using kernel embedding. Using $\dCor$ statistics, a Null-Hypothesis Significance Test (NHST) of independence can be derived.
$\dCor$ is potentially a very powerful and interpretative tool to answer our Question but it has two major drawbacks: (i) It has a bias towards dependence that increases with the dimension of the variables being compared (to remove this bias, only an ad-hoc solution has been proposed~\citep{SZEKELY2013193}); 
(ii) The p-value-based NHST derived from $\dCor$ cannot really answer our Question about independence.
In NHST, dependence is ``declared'' whenever the p-value is below a certain significance threshold. 
P-value is the probability that any $\dCor$ generated from the null hypothesis, according to the intended sampling process, has magnitude greater than or equal to that of the observed $\dCor_{obs}$, that is, $\text{p-value}=p(\text{any } \dCor_{null}  \geq   \dCor_{obs})$.
Since a p-value is evaluated only under the null hypothesis, it cannot say anything directly about the comparison between the null and alternative hypotheses. Using a NHST we can \textbf{never declare that the two variables are independent}. Thus, such test is not able to distinguish between statistical and practical significance (that is, the difference between weak and strong dependence).

\begin{figure*}
\centering
  \begin{tabular}{c  c  c }
    \includegraphics[width=.16\linewidth,trim={1cm 0.4cm 1cm 0},clip]{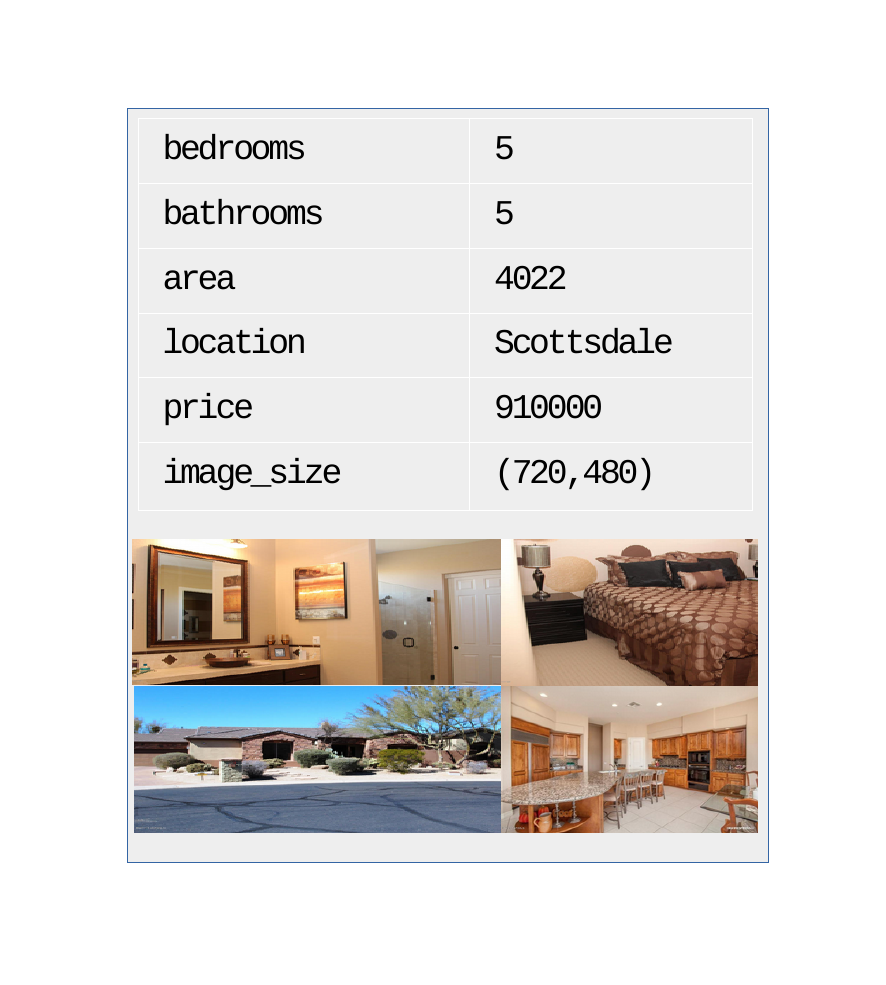} &
    \includegraphics[width=.26\linewidth,trim={2cm 0.4cm 2cm 0},clip]{{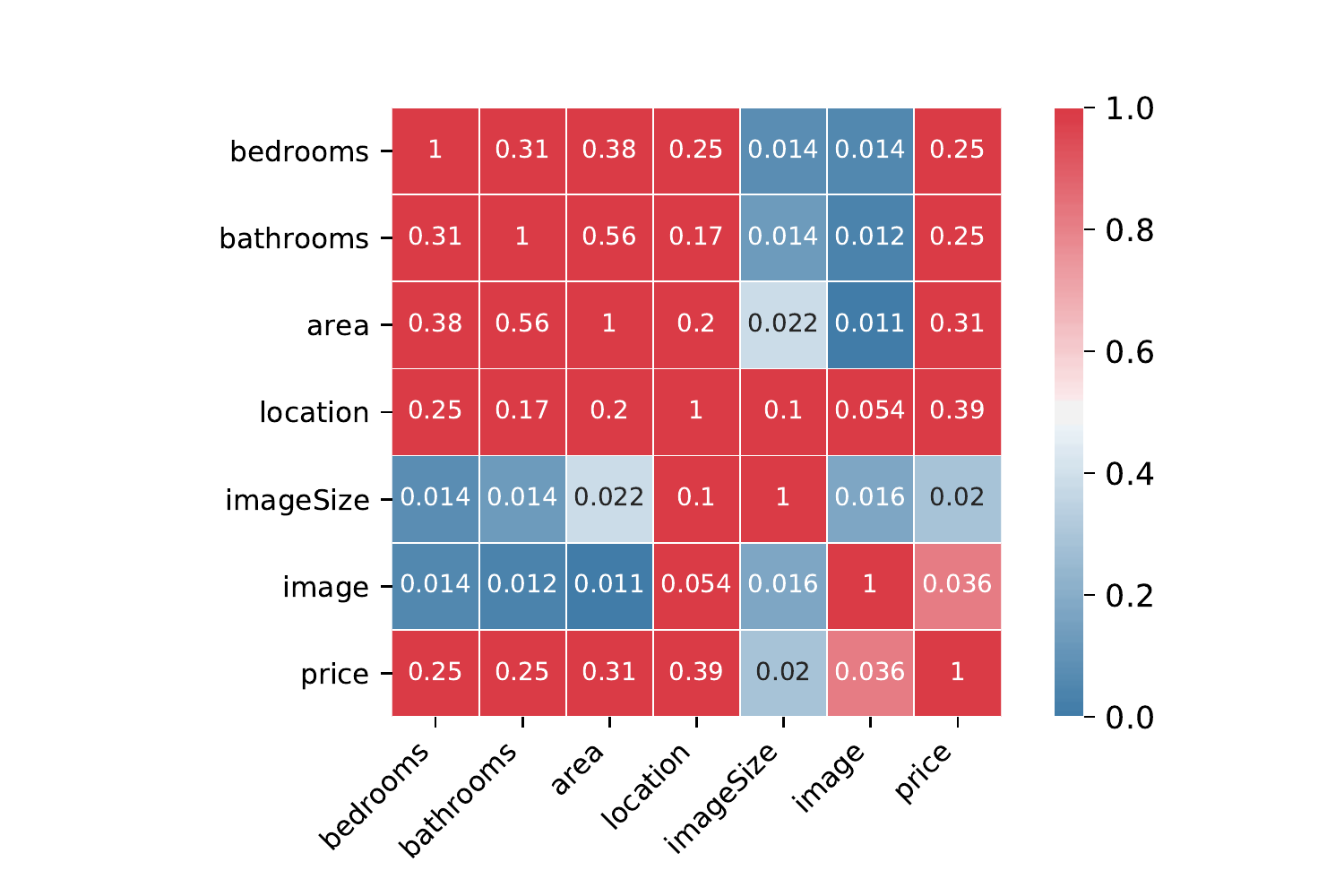}} &
    \includegraphics[width=.50\linewidth,trim={3cm 0.0cm 2cm 0},clip]{{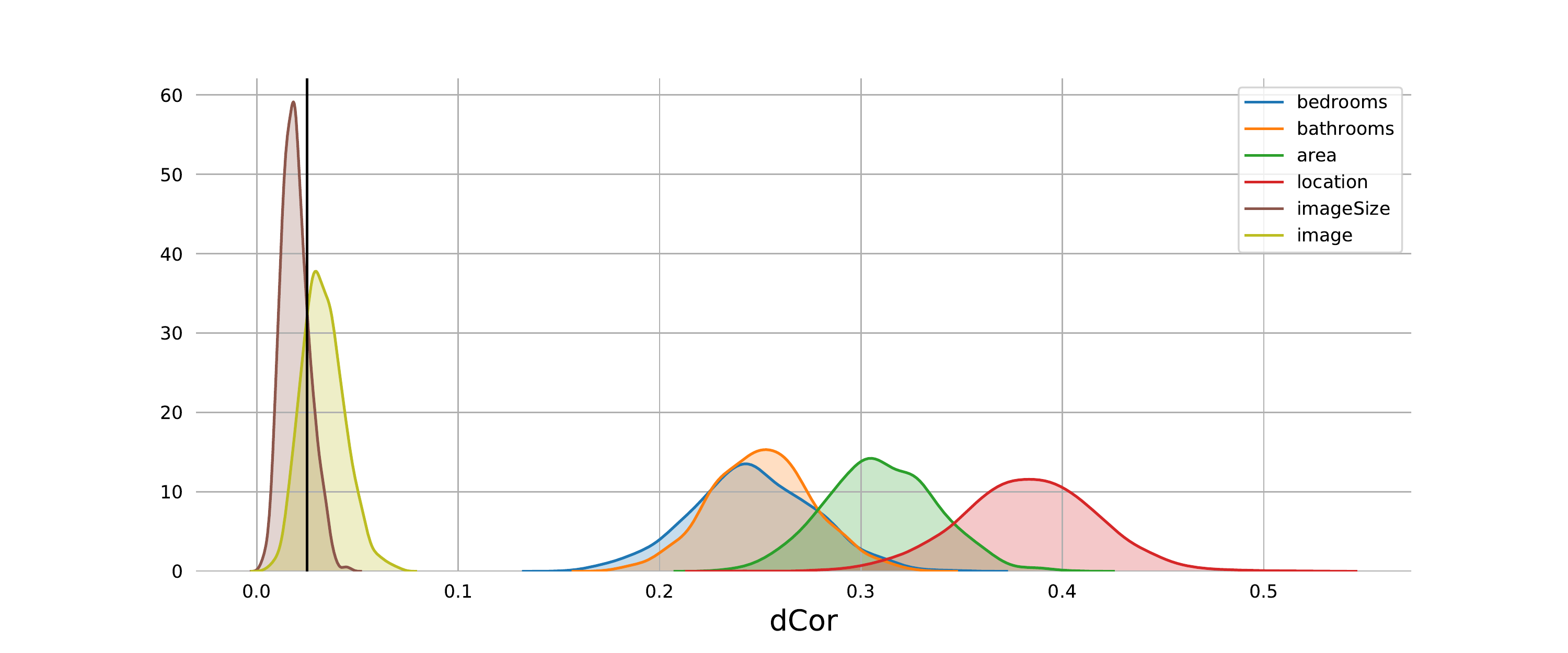}}\\
    \small (a) & \small (b) & \small (c)\\
  \end{tabular}
\caption{BKR applied to the House dataset. (a) An example of entry; (b) heatmap of $\BdCor$ among attributes; (c) Posterior of attributes' relation with price (ROPI is the region at the left of the black vertical line).}
\label{Fig:1}
\end{figure*}

Questions such as those in the abstract are questions about posterior probabilities, which can be naturally provided by the Bayesian methods. We propose a Bayesian Kernelised Correlation (BKR) test of (in)dependence based on $\dCor$. First, we
 derive a nonparametric posterior distribution over HSIC by employing a Dirichlet process (DP) prior, and then we derive $\BdCor$, the Bayesian $\dCor$. Second, we propose a new approach to remove the bias of $\BdCor$ consisting on estimating its distribution under independence by means of an \textit{invariant DP}. In particular, we consider invariance to exchangeability. Third, to make the approach scalable to high dimensions, 
we provide a Bayesian extension of the low-rank kernel approximation developed for HSIC~\cite{zhang2018large}. 

In hypothesis testing, we often need to perform multiple comparisons between variables. The NHST analysis controls the family-wise Type I error (FWER), namely the probability of finding 
at least one Type I error (the error we incur by rejecting the null hypothesis when it is true) among the null hypotheses which are rejected when performing the multiple comparisons.
Bonferroni-like corrections for multiple comparisons can be used to control FWER~\cite{demvsar2006statistical}.
Such corrections simplistically treat the multiple comparisons as independent from each other. However, when comparing variables $\{A,B,C\}$, the outcome of the comparisons (A,B), (A,C), (B,C) are usually \textit{not} independent. 
Our contribution in this regard is to extend the approach proposed in~\cite{benavoli2015b} to develop a \textit{joint} procedure for the analysis of the multiple comparisons which accounts for their dependences. We analyse the posterior probability computed through DP, identifying statements of \textit{joint} association which have high posterior probability. This new test can answer our Question in an interpretative manner, and can solve the several drawbacks of NHST. It allows us to quantify the strength of a relationship and so to introduce the concept of \textit{Region Of Practical Independence} (ROPI).
ROPI indicates a range of $\BdCor$ values that are considered to be practically equivalent to the null value representing independence, and can be designed with a particular application in mind. This gives us the possibility of statistically distinguishing between weak and strong dependence, which is important for greater interpretability. 

We compare our approach with other methods on 
both synthetic and real data. For instance, Figure~\ref{Fig:1} illustrates the application of BKR to the \textit{Houses} dataset~\cite{ahmed2016house} that combines visual (4 images tiled together), numeric (number of bedrooms and bathrooms, area) and textual (location) attributes to be used for housing price estimation.
As further attribute, we use the size (in pixels) of the house frontal image as originally uploaded by the real estate agent. An example of one of the 535 entries in this dataset is showed in Figure~\ref{Fig:1}(a). Figure~\ref{Fig:1}(b)
reports a summary of the results for all pairwise (potentially mixed-type) comparisons performed using BKR (further details are provided in the Supp. Material); each number in a heatmap cell is the  posterior mean of $\BdCor$, and the colour of the cell is the posterior probability of practical dependence $p(\BdCor>0.025|Data)$.
A dark red means high probability of dependence, while a dark blue means high probability of independence (that is
$p(\BdCor<0.025|Data)\gg 0.5$). Figure \ref{Fig:1}(c) shows the posterior distribution over $\BdCor$ for all the pairwise comparisons of price vs.\ each of the other 6 variables. We immediately see that number of bedrooms and bathrooms, area and location have strong association with price (the mass of the posterior is concentrated around $\BdCor=0.3$).
The association between image and price is weaker, but the mass of the posterior is significantly outside ROPI, meaning that we expect the image to have some predicting power for Price (using both images and the above 4 variables yields a slightly better estimation accuracy compared to not using the images~\cite[Fig.7]{ahmed2016house}). As expected, Imagesize is instead practically independent to Price (mass largely inside ROPI) and so we could ignore it. On the other hand, the NHST based on HSIC~\citep{gretton2008kernel} reports p-value 0.0008 for Imagesize vs.\ Price. This shows that NHSTs may detect many spurious relationships due to their inability of assessing independence. BKR can explain this weak dependence from the heatmap: image size and location are dependent, which is understandable since real estate agents work by zone.

\section{Kernel embeddings of measures}

Let $\pspace$ be a Borel measurable space. By  $\M^+ (\pspace)$ we denote the set of all probability measures  (Borel)  on $\pspace$.

\begin{definition} \label{def: Reproducing}
Let $\hilb$ be a real Hilbert space (HS) on $\pspace$. A function $k:\pspace \times \pspace \rightarrow \mathbb{R}$ is called a reproducing kernel of $\hilb$ if: 
\begin{enumerate}
\item $\forall z \in \pspace, k(\cdot,z) \in \hilb$;
\item $\forall z \in \pspace, \forall f \in \hilb, \langle f,k(\cdot,z) \rangle = f(z)$.
\end{enumerate}
If $\hilb$ has a reproducing kernel, then it is called a Reproducing Kernel HS~\citep{aronszajn1950theory}. 
\end{definition}
Therefore, $k(\cdot,z)$ is the evaluation map in $\hilb$. For any $x,y \in \pspace$, 
$k(x,y) = \langle k(\cdot,x), k(\cdot,y) \rangle$
is a symmetric positive definite function. The Moore-Aronszajn theorem~\citep{aronszajn1950theory} states that for any symmetric positive
definite function $k:\pspace \times \pspace \rightarrow \mathbb{R}$, there exists a unique HS of functions $\hilb$ defined on $\pspace$
such that $k$ is the reproducing kernel of $\hilb$. Since $k$ defines uniquely the RKHS $\hilb$, the latter is 
denoted as $\hilb_k$.
For $x,y \in \mathbb{R}^p$, an example of reproducing kernel is the square exponential kernel $k(x,y) = \exp(-\tfrac{\| x- y\|^2}{2\ell^2})$,
$\ell > 0$.

In non-parametric testing, we may represent probability measures $\nu\in\M^+(\pspace)$ with
elements of a RKHS~\citep{berlinet2011reproducing,smola2007hilbert}.
\begin{definition}
  Let $k$ be a kernel on $\pspace$, and $\nu \in \M(\pspace)$. The embedding of measure $\nu$ into RKHS $\hilb_k$ is
  $\mu_k (\nu) \in \hilb_k$ such that $\int f(z) d\nu(z) = \ \langle f,\mu_k(\nu) \rangle _{\hilb_k},~ \forall f \in \hilb_k.$\footnote{We assume 
    that the integral of any RKHS function $f$ under measure $\nu$ can be computed as the inner product between $f$ and the kernel embedding
    $\mu_k (\nu)$ in  $\hilb_k$. As an alternative, the kernel embedding may be defined via the use of the Bochner integral
    $\mu_k (\nu) = \int k(\cdot,z) d\nu(z)$.}
\end{definition}
Embeddings allow the definition of distance measures between probability distributions. 
%
For any kernels $k_{\mathcal X}$ and $k_{\mathcal Y}$ on the  domains $\mathcal X$ and $\mathcal Y$, we have that $k=k_{\mathcal X}\otimes k_{\mathcal Y}$ given by 
$k\left(\left(x,y\right),\left(x',y'\right)\right)=k_{\mathcal X}(x,x')k_{\mathcal Y}(y,y')$
is a valid kernel on the product domain $\mathcal X \times \mathcal Y$.
The RKHS of $k=k_{\mathcal X}\otimes k_{\mathcal Y}$ is
isometric to $\mathcal H_{k_\mathcal X}\otimes \mathcal H_{k_\mathcal
  Y}$, which can be viewed as the space of Hilbert-Schmidt operators between $\mathcal H_{k_\mathcal Y}$ and $\mathcal H_{k_\mathcal X}$. 
This allows us to define a RKHS-based measure of dependence between variables $X$ and $Y$. 

\begin{definition}
 Let $X$ and $Y$ be random variables on domains $\mathcal X$ and
 $\mathcal Y$ (non-empty topological spaces). Let $k_{\mathcal X}$ and
 $k_{\mathcal Y}$ be kernels on $\mathcal X$ and $\mathcal Y$,
 respectively. The Hilbert-Schmidt Independence Criterion (HSIC) $\Xi_{k_\mathcal X,k_\mathcal Y}(X,Y)$ of $X$
 and $Y$ is the maximum mean discrepancy between the joint measure $P_{XY}$ and the product of marginals $P_{X}P_{Y}$, computed with the  kernel $k=k_{\mathcal X}\otimes k_{\mathcal Y}$:
 \begin{equation}
 \label{eq:hsic_def}
 \begin{aligned}
  &\Xi_{k_\mathcal X,k_\mathcal Y}(X,Y)=\Big\| E_{XY}[k_\mathcal{X}(.,X) \otimes k_\mathcal{Y}(.,Y)] \\
  &- E_X [k_\mathcal{X}(.,X)] \otimes E_Y [k_\mathcal{Y}(.,Y)]
  \Big\|^2_{\hilb_{k_\mathcal{X}\otimes k_\mathcal{Y}}}\, .
 \end{aligned}
 \end{equation}
\end{definition}
The argument of the norm in \eqref{eq:hsic_def} can be identified as a cross-covariance operator $\mathcal{C}_{XY}$ and, therefore, HSIC is  the squared Hilbert-Schmidt norm, $\| \mathcal{C}_{XY}\|_{HS}^2$, of $\mathcal{C}_{XY}$.
This norm can be normalised 
$$\dCor=\frac{\| \mathcal{C}_{XY}\|_{HS}^2}{\| \mathcal{C}_{XX}\|_{HS}\| \mathcal{C}_{YY}\|_{HS}}$$
to obtain the \textit{distance correlation} \citep{sejdinovic2013equivalence,szekely2009brownian,szekely2007measuring}, that can be thought as a kernelised version of the correlation coefficient (its range is $[0,1]$, where $0$ means independence).
%

The HSIC  can be decomposed as follows \citep{gretton2008kernel}.
\begin{proposition}
\label{prop:HSIC_as_expectations}
The HSIC of $X$ and $Y$ can be written as:
\begin{align}
\label{eq:HSIC_as_expectations}
\text{HSIC}(X,Y)&= E_{XX'YY'} \big(k_\mathcal{X}(X,X') k_\mathcal{Y}(Y,Y')\big) \\\nonumber
& + E_{XX'}\big(k_\mathcal{X}(X,X')\big) E_{YY'} \big(k_\mathcal{Y}(Y,Y')\big)\\\nonumber
& - 2 E_{XY}\big(E_{X'}( k_\mathcal{X}(X,X')) E_{Y'}(k_\mathcal{Y}(Y,Y'))\big). 
\end{align}
\end{proposition}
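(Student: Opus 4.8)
The plan is to expand the squared RKHS norm appearing in the definition \eqref{eq:hsic_def} and then collapse every resulting inner product to an ordinary scalar expectation, using the reproducing property of Definition~\ref{def: Reproducing} together with the factorisation of the tensor-product kernel $k_{\mathcal X}\otimes k_{\mathcal Y}$ recalled just before the definition of HSIC. Write $a = E_{XY}[k_{\mathcal X}(\cdot,X)\otimes k_{\mathcal Y}(\cdot,Y)] \in \hilb_{k_{\mathcal X}\otimes k_{\mathcal Y}}$ for the mean embedding of the joint law $P_{XY}$, and set $b = \mu_X\otimes\mu_Y$ with $\mu_X = E_X[k_{\mathcal X}(\cdot,X)]$ and $\mu_Y = E_Y[k_{\mathcal Y}(\cdot,Y)]$; a brief check using the isometry $\hilb_{k_{\mathcal X}\otimes k_{\mathcal Y}}\cong \hilb_{k_{\mathcal X}}\otimes\hilb_{k_{\mathcal Y}}$ shows that $b$ is the mean embedding of the product measure $P_X P_Y$. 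Then $\text{HSIC}(X,Y) = \|a-b\|^2 = \langle a,a\rangle - 2\langle a,b\rangle + \langle b,b\rangle$, and it remains to identify the three terms.

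For $\langle a,a\rangle$ I would pull both expectations out of the inner product (linearity of the Bochner integral, equivalently the defining property of the embedding), introducing an independent copy $(X',Y')$ of $(X,Y)$, and apply $\langle k_{\mathcal X}(\cdot,x)\otimes k_{\mathcal Y}(\cdot,y),\, k_{\mathcal X}(\cdot,x')\otimes k_{\mathcal Y}(\cdot,y')\rangle = k_{\mathcal X}(x,x')\,k_{\mathcal Y}(y,y')$; this yields $E_{XX'YY'}[k_{\mathcal X}(X,X')k_{\mathcal Y}(Y,Y')]$, the first term of \eqref{eq:HSIC_as_expectations}. The same tensor factorisation gives $\langle b,b\rangle = \langle\mu_X,\mu_X\rangle\,\langle\mu_Y,\mu_Y\rangle$, and since $\langle\mu_X,\mu_X\rangle = E_{XX'}[k_{\mathcal X}(X,X')]$ (and likewise for $Y$) this is the second term. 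For the cross term, $\langle a,b\rangle = E_{XY}\big[\langle k_{\mathcal X}(\cdot,X),\mu_X\rangle\,\langle k_{\mathcal Y}(\cdot,Y),\mu_Y\rangle\big]$, and the reproducing property gives $\langle k_{\mathcal X}(\cdot,X),\mu_X\rangle = \mu_X(X) = E_{X'}[k_{\mathcal X}(X,X')]$, so $\langle a,b\rangle = E_{XY}\big[E_{X'}(k_{\mathcal X}(X,X'))\,E_{Y'}(k_{\mathcal Y}(Y,Y'))\big]$. Substituting into $\langle a,a\rangle - 2\langle a,b\rangle + \langle b,b\rangle$ reproduces \eqref{eq:HSIC_as_expectations} verbatim.

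The computation itself is short; where the argument actually has content is in justifying every interchange of expectation with the inner product and of the Bochner integral with the tensor-product map, and the repeated use of Fubini to separate the independent copies. This is the main obstacle, and it is handled by the standard integrability assumptions (e.g.\ $E[\sqrt{k_{\mathcal X}(X,X)}]<\infty$ and $E[\sqrt{k_{\mathcal Y}(Y,Y)}]<\infty$), under which all the mean embeddings exist in their respective RKHSs, the embeddings are bounded linear functionals, and all the exchanges above are legitimate. The only real care needed beyond that is bookkeeping: keeping track of which fresh independent copy of $X$ or $Y$ is introduced at each expansion so that the final expression has exactly the form stated.
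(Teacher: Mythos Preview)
Your proof is correct and is exactly the standard expansion argument. The paper does not actually supply its own proof of this proposition: it is stated with a citation to \cite{gretton2008kernel}, and the supplementary material proves only Lemma~\ref{lem:lem0}, Theorems~\ref{th:HSIC}, \ref{th:postmean}, \ref{th:low-rank}, and Corollary~\ref{co:fast}. The derivation you give---expand $\|a-b\|^2$, introduce independent copies, apply the tensor factorisation $\langle u\otimes v, u'\otimes v'\rangle = \langle u,u'\rangle\langle v,v'\rangle$ and the reproducing property---is precisely the argument in the cited reference, so there is nothing to compare.
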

Estimators of HSIC have been used as statistics in non-parametric independence testing~\citep{gretton2008kernel}.
An empirical  estimate of the HSIC statistics from i.i.d.\ samples  $(x_1,y_1),\dots,(x_n,y_n)$ on $\mathcal{X}\times\mathcal{Y}$ can be obtained by replacing expectation operators with the sample mean \cite{gretton2008kernel}. 
Hence, \cite{gretton2008kernel} proposed a non-parametric NHST to determine if two variables $X$ and $Y$ are dependent, that is, $X \nindep Y$ ($H_1$), or independent,
that is, $X \indep Y$ ($H_0$).
$H_0$ is the so-called  null hypothesis and $H_1$ is the alternative hypothesis. This test can be carried out by computing the probability that any HSIC generated from the null hypothesis has magnitude greater than or equal to that of the observed $\text{HSIC}_{obs}$, that is, $\text{p-value}=p(\text{any } \text{HSIC}_{null}  \geq \text{HSIC}_{obs})$.
A very similar approach follows to perform NHST using $\dCor$.
Note that the space of all possible $\text{HSIC}_{null}$ that might have been observed depends on how the data were intended
to be sampled.\footnote{In case the data were intended to be collected until
a threshold sample size was achieved, the space of all
possible $\text{HSIC}_{null}$ is the set of all HSIC values with that exact
sample size. This is the conventional assumption and also the
assumption used in~\citep{gretton2008kernel} to compute p-values,
though seldom made explicit.}
If the intention is to collect data for a certain duration of time or if the intention of the practitioner is to perform multiple tests, a different p-value must be computed. For multiple comparisons, this comes from Bonferroni-like corrections to control the FWER.
%
Moreover, such kernel NHST independence test does not provide any measure of evidence for the null hypothesis.
While elegant, they are unfortunately affected by the drawbacks which characterise all NHSTs.

\section{Dirichlet Process}
Let us consider again  $\M^+(\pspace)$, the space of probability measures on  $(\pspace,\mathcal{B}_{\pspace})$, equipped with the weak topology and the
corresponding Borel $\sigma$-field $\mathcal{B}_{M^+}$. Let $\mathbb{M}^+$ 
be the set of all probability measures on $(\M^+,\mathcal{B}_{\M^+})$. We call any element $\upsilon \in \mathbb{M}^+$ a  non-parametric prior.
An element of  $ \mathbb{M}^+$ is called a Dirichlet Process (DP) measure $\mathcal{D}(\nu)$ with base measure $\nu$ if for every finite measurable
partition $B_1,\dots,B_m$ of $\pspace$, the vector $(P(B_1),\dots,P(B_m))$ has a Dirichlet distribution with parameters $(\nu(B_1),\dots,\nu(B_m))$, where $\nu(\cdot)$ is a finite positive Borel measure on $\pspace$  \citep{Ferguson1973}.
As an example, consider the partition $B_1=A$ and $B_2=A^c=\pspace\backslash A$ for some measurable set $A \in \pspace$; then if $P \sim \mathcal{D}(\nu)$, let $s = \nu(\pspace)$ stand for the total mass of $\nu(\cdot)$; from the definition of the DP we have that
$(P(A),P(A^c))\sim Dir(\nu(A),s-\nu(A) )$, which is a Beta distribution.
By computing the moments of the Beta distribution, we derive:
\begin{equation}
\label{eq:priormom}
\resizebox{.9\hsize}{!}{$
\begin{array} {rl}
\mathcal{E}[P(A)]=\dfrac{\nu(A)}{s},  &\mathcal{V}[P(A)]=\dfrac{\nu(A)(s-\nu(A))}{s^2(s+1)},
\end{array}$}
\end{equation}
where we have used the calligraphic letters $\mathcal{E}$ and
$\mathcal{V}$ to denote  expectation and variance with respect to the DP.
This highlights that the normalized measure $\nu^*(\cdot) = \nu(\cdot)/s$ of the DP reflects the prior expectation of $P$, while the scaling parameter $s$ controls how much $P$ deviates from its mean.
If $P \sim \mathcal{D}(\nu)$, we shall also describe this by saying $P \sim Dp(s,\nu^*)$.
Let $P\sim Dp(s,\nu^*)$ and $f$ be a real-valued bounded function
defined on $(\pspace,\mathcal{B})$. Then the expectation (with respect
to the DP) of $E[f]$ is
\begin{equation}
\label{eq:expf}
\mathcal{E}\big[E(f)\big]=\mathcal{E}\left[\int f dP\right]=\int f d\mathcal{E}[P] = \int f d\nu^*.
\end{equation}
One of the properties of DP priors is that the posterior distribution of $P$ is again a DP.
Let $Z_1,\dots,Z_n$ be an independent and identically distributed
sample from $P$ and $P \sim Dp(s,\nu^*)$. The posterior distribution of $P$ given the observations is 
\begin{equation}
\label{eq:DPPost}
P|Z_1,\dots,Z_n \sim Dp\left(s+n, \frac{s\nu^* + \sum_{i=1}^n \delta_{Z_i}}{s+n}\right),
\end{equation}
where $\delta_{Z_i}$ is an atomic probability measure centered at $Z_i$. The DP is therefore a conjugate prior, since the posterior for $P$ is again a DP with updated unnormalized base measure $\nu+ \sum_{i=1}^n \delta_{Z_i}$.
From Eqs. \eqref{eq:priormom}, \eqref{eq:expf} and
\eqref{eq:DPPost}, we can derive the posterior mean and
variance of $P(A)$ for an event $A$, and the posterior expectation of $f$.
A useful property of the DP (\cite[Ch.3]{ghosh2003bayesian}) is the following:
 Let $P$ have distribution $Dp(s+n, \tfrac{s \nu^*+ \sum_{i=1}^n \delta_{Z_i}}{s+n})$. We can write
\begin{equation} \label{eq:mixing}
P=w_0 P_0+ \sum_{i=1}^n w_i \delta_{Z_i},
\end{equation}
with  $(w_0,w_1,\dots,w_n)\sim Dir(s,1,\dots,1)$,  $P_0 \sim Dp(s,\nu^*)$.

Since the DP is a  measure on probability distribution functions, it can be used to model prior information on probability measures:
\begin{equation}
 \label{eq:embDP}
 \mu_k (P) = E(k(\cdot,z))\!=\int\! k(\cdot,z) d P(z),~ P \sim Dp(s,\nu^*),
\end{equation}
and, therefore, can be used to compute a posterior
distribution over HSIC.

In some applications, the underlying probability measure is required to satisfy specific constraints (e.g., symmetries). This leads to \textit{invariant DPs}.
In particular, we consider invariance to exchangeability (that is, permutation invariance). The exchangeable measures on $\mathcal{Z}^n$ are invariant measures under the action of the permutation group $\Pi_n$ of order $n$ on the order of the coordinates of the vectors. We can  build a  DP that is invariant under the action of the permutation group.
Let $IDp(s,\nu^*,\Pi_n)$ denotes such invariant DP prior. Then the posterior distribution of $P$ given the observations is 
\begin{equation}
\label{eq:IDPPost}
P|Z_1,\dots,Z_n \sim Dp\left(s+n, \dfrac{s\nu^* +\frac{1}{n!}\sum\limits_{\pi \in \Pi_n} \sum\limits_{i=1}^n \delta_{Z_{i\pi}} }{s+n}\right).
\end{equation}
To understand the above formula, assume that we focus on the components $X,Y$ of the vectors $Z_i$, that is, $Z_i=[X_i^T,Y_i^T]^T$ for $i=1,\dots,n$. Then we can define $Z_{i\pi}$ as $([X_i^T,Y_{\pi(1)}^T]^T,[X_i^T,Y_{\pi(2)}^T]^T,\dots,[X_i^T,Y_{\pi(n)}^T]^T)$, where 
$[\pi(1),\dots,\pi(n)]$ denotes a permutation of  $[1,\dots,n]$.

\section{Bayesian Kernel Independence Test}
To carry out Bayesian analysis, we start by specifying a prior distribution on the unknown quantity
of interest, that is, $P_{XY}$. In particular, we consider a non-parametric prior by  assuming that  the joint distribution $P_{XY}$ is DP distributed: $P_{XY}\sim Dp(s,\nu^*)$.
By introducing the joint variable $Z=[X,Y]^T$, from Equation~\eqref{eq:DPPost} we have that the posterior distribution of $P(Z):=P_{XY}(Z)$ is:
$P=w_0 P_0 + \sum_{i=1}^n w_i \delta_{Z_i}$,  
 with $(w_0,w_1,\dots,w_n)\sim Dir(s,1,\dots,1)$ and $P_0\sim Dp(s,\nu^*)$.
 Hence, based on  Proposition \ref{prop:HSIC_as_expectations},  we derive the following result (proofs are in Supp. Material).

\begin{theorem}
 \label{th:HSIC}
Let  $Z,Z'\sim P$ and $P$ have distribution $w_0 P_0 + \sum_{i=1}^n w_i \delta_{Z_i}$, with $W=(w_0,w_1,\dots,w_n)\sim Dir(s,1,\dots,1)$ and $P_0\sim
Dp(s,\nu^*)$. Then
 \begin{align}
 \nonumber
\widehat{HSIC}(X,Y)&=W\mathbb{K}^{XX'YY'}W^T +W\mathbb{K}^{XX}W^TW\mathbb{K}^{YY}W^T\\
\label{eq:whsicxy}
 &-2\, W\left(\mathbb{K}^{XX}W^T\circ\mathbb{K}^{YY}W^T\right),
 \end{align}
   where $\circ$ denotes the Schur product, $\mathbb{K}^{XX}$ is a symmetric $(n+1)\times (n+1)$ matrix such that $\mathbb{K}^{XX}_{00}=\int K_{\mathcal{X}}(x,x') d(P_0(z)P_0(z'))$,   $\mathbb{K}^{XX}_{0i}=\int K_{\mathcal{X}}(x,X_i) d P_0(z)$ for $i>0$ and $\mathbb{K}^{XX}_{ij}=K(X_i,X_j) $ for $i=1,\dots,n$ and $j\geq i$;
   $\mathbb{K}^{YY}$ is similarly defined; $\mathbb{K}^{XX'YY'}$ is a
   symmetric $(n+1)\times (n+1)$ matrix such that
   $\mathbb{K}^{XX'YY'}_{00}=\int
   K_{\mathcal{X}}(x,x')K_{\mathcal{Y}}(y,y')  d(P_0(z)P_0(z'))$,
   $\mathbb{K}^{XX'YY'}_{0i}=\int
   K_{\mathcal{X}}(x,X_i)K_{\mathcal{Y}}(y,Y_i) d P_0(z)$ for $i>0$
   and
   $\mathbb{K}^{XX'YY'}_{ij}=K_{\mathcal{X}}(X_i,X_j)K_{\mathcal{Y}}(Y_i,Y_j)
   $ for $i=1,\dots,n$, $j\geq i$.
\end{theorem}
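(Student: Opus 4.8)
The plan is to substitute the mixture representation $P = w_0 P_0 + \sum_{i=1}^n w_i \delta_{Z_i}$ into the decomposition of HSIC given in Proposition~\ref{prop:HSIC_as_expectations} and then expand each of the three expectation terms as a bilinear (or quadrilinear) form in the weight vector $W$. The key observation is that for any integrable function $g$ on $\pspace$ (or $g$ on $\pspace\times\pspace$), an expectation under $P$ (or under $P\otimes P'$ with $Z,Z'$ i.i.d.\ from $P$) decomposes into a sum over the $n+1$ mixture components, with the atom at $Z_i$ contributing the plain evaluation $g(Z_i)$ and the diffuse component $P_0$ contributing an integral against $P_0$ (or $P_0\otimes P_0'$). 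This is exactly what defines the $(n+1)\times(n+1)$ matrices $\mathbb{K}^{XX}$, $\mathbb{K}^{YY}$, $\mathbb{K}^{XX'YY'}$ in the statement: the $00$ entry is the double integral against $P_0$, the $0i$ entries are single integrals against $P_0$ with the other argument pinned to the $i$-th data point, and the $ij$ entries for $i,j\geq 1$ are plain kernel evaluations.

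First, I would treat the first term $E_{XX'YY'}\big(k_{\mathcal X}(X,X')k_{\mathcal Y}(Y,Y')\big)$. Writing $Z=[X,Y]^T$ and $Z'=[X',Y']^T$ with $Z,Z'\iid P$, independence of the two draws gives $E_{ZZ'}[h(Z,Z')] = \sum_{a,b=0}^n w_a w_b\, \langle \text{component } a, h, \text{component } b\rangle$, where the bracket is the plain value $h(Z_i,Z_j)$ when $a=i\geq1$, $b=j\geq1$, the single $P_0$-integral when exactly one index is $0$, and the double $P_0\otimes P_0'$-integral when $a=b=0$. With $h((x,y),(x',y'))=k_{\mathcal X}(x,x')k_{\mathcal Y}(y,y')$ this sum is precisely $W\mathbb{K}^{XX'YY'}W^T$. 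Second, the term $E_{XX'}(k_{\mathcal X}(X,X'))\,E_{YY'}(k_{\mathcal Y}(Y,Y'))$ factorises into a product of two such bilinear forms, $\big(W\mathbb{K}^{XX}W^T\big)\big(W\mathbb{K}^{YY}W^T\big)$, by the same argument applied separately to the $X$- and $Y$-kernels (noting that $Z$ and $Z'$ share the same mixing weights, but the two inner expectations are over independent replicates, so they multiply). Third, the cross term $-2\,E_{XY}\big(E_{X'}(k_{\mathcal X}(X,X'))\,E_{Y'}(k_{\mathcal Y}(Y,Y'))\big)$ requires a little more care: here the outer expectation is over a single draw $Z=[X,Y]^T\sim P$, while $X'$ and $Y'$ come from independent replicates. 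Conditioning on the mixture component of the outer draw being $a$, the inner expectations become the $a$-th rows of $\mathbb{K}^{XX}W^T$ and $\mathbb{K}^{YY}W^T$ respectively, and their product summed against $w_a$ over $a=0,\dots,n$ is exactly $W\big(\mathbb{K}^{XX}W^T\circ\mathbb{K}^{YY}W^T\big)$, the Schur (entrywise) product appearing because the two inner factors are evaluated at the \emph{same} outer point. Adding the three pieces gives \eqref{eq:whsicxy}.

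The routine parts are the index bookkeeping: verifying the symmetry of the three matrices, checking that the $i=0$ boundary entries match the stated integrals, and confirming that ``$Z,Z'$ i.i.d.\ from $P$'' means both draws use the \emph{same} realised weights $W$ (so the randomness in $W$ is shared and does not get integrated out here — $\widehat{HSIC}$ remains a random variable, a function of $W$ and $P_0$). The main obstacle, and the only step that is genuinely easy to get wrong, is the cross term: one must be careful that the outer variable $Z=[X,Y]^T$ is a single coupled draw, so its $X$- and $Y$-parts fall in the \emph{same} mixture component, which is what turns an outer product of two vectors into their Schur product rather than a second bilinear form $W(\cdot)(\cdot)W^T$. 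Once this coupling is correctly accounted for, everything else is a direct expansion, and the result follows.
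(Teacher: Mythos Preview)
Your proposal is correct and follows essentially the same route as the paper: substitute the mixture form of $P$ into the three-term expansion of HSIC from Proposition~\ref{prop:HSIC_as_expectations} and collect each term as a quadratic form in $W$. The paper packages the basic bilinear identity $E_{X,X'}[K(X,X')]=W\mathbb{K}^{XX}W^T$ into a separate lemma and then invokes it together with the paired structure $Z=[X,Y]$, whereas you carry out the same expansion inline and are in fact more explicit than the paper about why the cross term produces a Schur product rather than a second bilinear form.
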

$\widehat{HSIC}$ is the   $HSIC$ computed with respect to the posterior distribution $P$.

\begin{theorem}
 \label{th:postmean}
 The posterior mean of $\widehat{HSIC}(X,Y)$ (given in the Supp. Material) tends to the asymptotic limit of the sample estimate of $HSIC(X,Y)$ (Eq.~\eqref{prop:HSIC_as_expectations}) for $n \rightarrow \infty$.
\end{theorem}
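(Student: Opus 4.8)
The plan is to compute $\mathcal{E}[\widehat{HSIC}(X,Y)]$ in closed form as a function of $n$, the Gram matrices of the sample, and the base measure $\nu^*$, and then let $n\to\infty$. The expectation in \eqref{eq:whsicxy} is over the independent pair $W\sim Dir(s,1,\dots,1)$ and $P_0\sim Dp(s,\nu^*)$, so by linearity I would handle the three summands separately: the quadratic form $W\mathbb{K}^{XX'YY'}W^T$, the product of two quadratic forms $W\mathbb{K}^{XX}W^T\,W\mathbb{K}^{YY}W^T$, and the cubic term $W(\mathbb{K}^{XX}W^T\circ\mathbb{K}^{YY}W^T)$. Expanding the matrix and Schur products turns each summand into a sum over index tuples in $\{0,1,\dots,n\}$ of an entry-product of the $\mathbb{K}$-matrices times a monomial in $(w_0,\dots,w_n)$ of degree $2$, $4$ and $3$ respectively, so the whole computation reduces to (i) Dirichlet moments of $W$ and (ii) $P_0$-expectations of the entries carrying a $0$-index.

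For step (i) I would use the explicit moment formula: writing $x^{(m)}=x(x+1)\cdots(x+m-1)$, a $Dir(\alpha_0,\dots,\alpha_n)$ vector satisfies $\mathcal{E}\big[\prod_i w_i^{a_i}\big]=\big(\prod_i\alpha_i^{(a_i)}\big)/(\textstyle\sum_i\alpha_i)^{(\sum_i a_i)}$, and here $\alpha_0=s$, $\alpha_i=1$ $(i\geq1)$, $\sum_i\alpha_i=s+n$; in particular every degree-$d$ moment is $\Theta(n^{-d})$ with a leading constant depending only on the pattern of coincidences among the indices and on whether the index $0$ is present (the latter contributing an extra factor $\sim s$). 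For step (ii) the single integrals $\mathbb{K}^{XX}_{0i},\mathbb{K}^{XX'YY'}_{0i}$ have $P_0$-expectation given by \eqref{eq:expf}, i.e.\ integration against $\nu^*$, while the $00$ entries, being of the form $\mathcal{E}_{P_0}\!\big[\int\!\!\int f\,dP_0\,dP_0\big]$, equal $\tfrac1{s+1}\int f(z,z)\,d\nu^*+\tfrac{s}{s+1}\int\!\!\int f\,d\nu^*d\nu^*$ by the P\'olya-urn characterisation of two draws from a DP; since $k_{\mathcal X}$ and $k_{\mathcal Y}$ are bounded, all these numbers are uniformly bounded in $n$.

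Next I would collect terms by order in $n$. The dominant contribution to each summand comes from the tuples of pairwise-distinct indices, all $\geq1$: there are $\Theta(n^2)$, $\Theta(n^4)$ and $\Theta(n^3)$ of them and their Dirichlet weights are $\sim n^{-2}$, $\sim n^{-4}$, $\sim n^{-3}$, so they reproduce, respectively, $\tfrac1{n^2}\sum_{i,j}k_{\mathcal X}(X_i,X_j)k_{\mathcal Y}(Y_i,Y_j)$, the product $\big(\tfrac1{n^2}\sum_{i,j}k_{\mathcal X}(X_i,X_j)\big)\big(\tfrac1{n^2}\sum_{i,j}k_{\mathcal Y}(Y_i,Y_j)\big)$ and $-\tfrac2{n^3}\sum_{i,j,l}k_{\mathcal X}(X_i,X_j)k_{\mathcal Y}(Y_i,Y_l)$ --- exactly the three terms of the biased ($V$-statistic) estimator of $HSIC(X,Y)$ obtained by replacing expectations in Proposition~\ref{prop:HSIC_as_expectations} with sample means. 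Every remaining tuple either forces a coincidence among summation indices or contains the index $0$; in both cases the number of such tuples is at most $\Theta(n^{d-1})$ for a degree-$d$ term while the weight is still $\Theta(n^{-d})$, so, using the uniform bound on the $\mathbb{K}$-entries, the total of these residual terms is $O(1/n)$. Hence $\mathcal{E}[\widehat{HSIC}(X,Y)]$ equals the biased sample $HSIC$ plus an $O(1/n)$ remainder; since the biased sample $HSIC$ is consistent for $HSIC(X,Y)$ (with bias $O(1/n)$, \cite{gretton2008kernel}), the posterior mean converges to the same limit, which is the claim.

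The step I expect to be the main obstacle is the purely combinatorial bookkeeping of the quartic and cubic summands: one must enumerate every coincidence pattern of the summation indices (and the special role of index $0$), attach the correct rising-factorial moment to each, verify that only the ``all distinct, all $\geq1$'' pattern survives in the limit, and bound the count of tuples realising each of the other patterns by the stated $\Theta(n^{d-1})$. None of this is deep, but it is where all the care is needed; the uniform boundedness of the kernels is what makes every residual pattern negligible.
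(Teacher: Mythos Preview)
Your proposal is correct and follows essentially the same strategy as the paper: expand $\widehat{HSIC}$ into Dirichlet monomials of degrees $2$, $3$, $4$, plug in the closed-form Dirichlet moments, and observe that only the ``all indices distinct and $\geq 1$'' patterns survive in the $n\to\infty$ limit, reproducing the $V$-statistic sample HSIC. The only difference is that the paper's sketch first informally drops the $w_0P_0$ contribution by appealing to $P\approx\sum_{i\geq1}w_i\delta_{Z_i}$ for large $n$, whereas you carry the index-$0$ terms through explicitly and bound them via the $P_0$-expectations and the $\Theta(n^{d-1})$ counting; your route is simply a more careful execution of the same idea, with the bonus of an explicit $O(1/n)$ remainder.
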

This shows the connection between Expressions~\eqref{eq:HSIC_as_expectations} and $\widehat{HSIC}(X,Y)$
and allows us to consequently derive its asymptotic consistency  (see Supp. Material). 

The advantage of the Bayesian test is that we  can compute the posterior distribution of $\widehat{HSIC}$ by Monte Carlo sampling the weight vector $W$ from the Dirichlet distribution
$Dir(s,1,\dots,1)$ and $P_0$ from the prior $Dp(s,\alpha^*)$ (for instance, via stick-breaking). 
This posterior distribution represents the credibility across possible HSIC values that takes into account both prior knowledge and collected data.
By querying the posterior distribution, we can evaluate the probability of the hypotheses.
First, we define
\begin{equation}
\label{eq:Krho}
   \overline{HSIC}(X,Y) =\frac{\widehat{HSIC}(X,Y)}{\sqrt{\widehat{HSIC}(X,X)\widehat{HSIC}(Y,Y)}}.
\end{equation} 
This is the Bayesian equivalent of  \textit{distance correlation}. Provided that the three
terms  $\widehat{HSIC}(X,Y)$, $\widehat{HSIC}(X,X)$, and $\widehat{HSIC}(Y,Y)$ 
are computed
w.r.t. the same joint distribution $P_{XY}$, then we have $0\leq \overline{HSIC}(X,Y)  \leq 1$.

In our case, the hypotheses are $X \nindep Y$ versus $X \indep Y$, and hypothesis testing can be performed  by  computing 
the posterior probability that $\overline{HSIC}(X,Y) $ exceeds a suitable threshold $\tau(X,Y)$:
\begin{equation}
\label{eq:bayesdec}
\mathcal{P}\left[\overline{HSIC}(X,Y)-\tau(X,Y) >0\right],\end{equation} 
where $\mathcal{P}$ denotes the probability computed  with respect to $W\sim Dir(s,1,\dots,1)$ and $P_0\sim Dp(s,\alpha^*)$.
This gives us the posterior probability of the two variables being dependent, while $\mathcal{P}\left[\overline{HSIC}(X,Y)-\tau(X,Y) \leq 0\right]=1-\mathcal{P}\left[\overline{HSIC}(X,Y)-\tau(X,Y) >0\right]$
gives us the posterior probability of them being independent.
In order to define a meaningful threshold $\tau(X,Y)$, we simply shift $\overline{HSIC}$ to have zero mean whenever $X,Y$ are independent. To do so, we must compute the posterior mean
of $\overline{HSIC}$ under independence, which
is obtained by observing that, under independence, 
the sequence of observations $(X_1,Y_{\pi(1)}),\dots, (X_n,Y_{\pi(n)})$ is exchangeable, for any permutation
$[\pi(1),\dots,\pi(n)]$  of  $[1,\dots,n]$.
The posterior DP is given by Eq.\eqref{eq:IDPPost} and, therefore,  $\tau(X,Y)=\frac{1}{n!}\sum_{\pi \in \Pi_n} \widehat{HSIC}(X,Y_{\pi})/\sqrt{\widehat{HSIC}(X,X)\widehat{HSIC}(Y,Y)}$.
 The exact expression of the posterior mean requires the computation of $\widehat{HSIC}(X,Y_{\pi})$ for all possible permutations, which is infeasible for large $n$.
 In spite of that, we approximate this sum by Monte Carlo sampling (see Supp. Material). Hence, we define a Bayesian corrected $\dCor$ as follows: $\BdCor(X,Y)=$
\begin{equation}
\label{eq:Kdcor}
\begin{aligned}
   &=\left(\tfrac{\widehat{HSIC}(X,Y)}{\sqrt{\widehat{HSIC}(X,X)\widehat{HSIC}(Y,Y)}}-\mathcal{E}(\tau(X,Y))\right)\tfrac{1}{1-\mathcal{E}(\tau(X,Y))},\\
   &W=(w_0,\dots,w_n)\sim Dir(s,1,\dots,1), ~P_0\sim Dp(s,\nu^*).
\end{aligned}
\end{equation} 
This ensures  $\BdCor(X,Y)$ is centred at the origin whenever $X,Y$ are independent.
Similarly to the approach proposed in \citep{SZEKELY2013193}, the Eq.\eqref{eq:Kdcor} allows us to remove the bias of $\overline{HSIC}(X,Y)$ towards dependence  that increases with the dimension of the variables being compared.\footnote{Note that
$\BdCor(X,Y)$ can also assume values that are less than zero, the same happens for the correction proposed in \citep{SZEKELY2013193}.}
The normalization $1-\mathcal{E}(\tau(X,Y))$ guarantees
that $\BdCor(X,Y)$ can still span the whole interval $[0,1]$, which allows us to introduce a \textit{Region Of Practical Independence} (ROPI).
ROPI indicates a small range of $\BdCor(X,Y)$ values that are considered to be practically equivalent to the null value (the origin, representing
independence) for purposes of the particular application. 


For example, in feature selection, it is common practice to  decrease the significance level $\alpha$ of the independence test (for instance to $0.001$ or $0.0001$) in order to (indirectly) induce  sparsity and so to improve accuracy in high dimensional data. This is necessary because NHSTs do not allow one to accept the null hypothesis or to assess whether two variables are only weakly dependent. 
Instead, the proposed test gives us the flexibility of accepting the null hypothesis and assessing weak dependence through a suitable choice of the ROPI and, therefore, it naturally induces sparsity while retaining dependent features. 
Once a ROPI is chosen, we take a decision to reject a null value according to the rule: Independence is declared to be not credible, or rejected, if the posterior probability that $\BdCor>ROPI$ is greater than a suitable threshold.

The use of a ROPI around the null value implies that if the null value really is true, we will eventually ``accept'' the null value as the sample size becomes larger.
In fact, as the sample size increases, as the posterior tends to get concentrated and closer to the
true value. When the sample size becomes suitably large, the high-density interval of the posterior is almost certain to be narrow enough and close enough to the true value, and hence to fall (almost) completely within the ROPI. 
Hence, a natural way to define a ROPI is to declare independence when $\BdCor $ is small enough, that is, $\BdCor\leq ROPI$ for some $ROPI \in [0,1]$.
The pseudo-code to perform the Bayesian non-parametric kernel test of independence is reported in the Supp. Material.


\subsection{Priors, fast computation and large-scale approximation}
\label{sec:addon}
In order to completely specify the DP  prior on $P_{XY}(Z)$, we need to choose the prior probability measure $\nu^*(Z)$ and the scale parameter $s$.
Here we consider the limiting DP obtained for $s \rightarrow 0$~\citep{Ferguson1973}. 
In this way, the DP posterior does not depend on $\nu^*(Z)$. Thus, we can more efficiently compute $\widehat{HSIC}$ as follows:
\begin{corollary}
 \label{co:fast}
  $\widehat{HSIC}=Tr(K^{XX}RK^{YY}R)$ with $R=\text{diag}(W)-W^TW$ and $K_{ij}^{XX}=K(X_i,X_j), K_{ij}^{YY}=K(Y_i,Y_j)$.
\end{corollary}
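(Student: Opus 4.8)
The plan is to specialise the general formula of Theorem~\ref{th:HSIC} to the limiting prior $s\to 0$ and then recognise the result as a single trace. Under $W=(w_0,w_1,\dots,w_n)\sim Dir(s,1,\dots,1)$, the marginal of $w_0$ is $Beta(s,n)$, so $w_0\to 0$ almost surely as $s\to 0$ while $(w_1,\dots,w_n)$ tends to a uniform draw on the $n$-simplex; equivalently, one simply sets $w_0=0$. Because the entries $\mathbb{K}^{XX}_{00},\mathbb{K}^{XX}_{0i},\dots$ are integrals of bounded kernels and hence uniformly bounded, every summand in \eqref{eq:whsicxy} that carries a factor of $w_0$ vanishes in this limit. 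What remains are the lower-right $n\times n$ blocks of the three matrices, namely $\mathbb{K}^{XX}\to K^{XX}$ with $K^{XX}_{ij}=K(X_i,X_j)$, likewise $\mathbb{K}^{YY}\to K^{YY}$, and $\mathbb{K}^{XX'YY'}\to K^{XX}\circ K^{YY}$ since $\mathbb{K}^{XX'YY'}_{ij}=K_{\mathcal X}(X_i,X_j)K_{\mathcal Y}(Y_i,Y_j)$. Writing $W$ from now on for the $n$-vector $(w_1,\dots,w_n)$, this reduces \eqref{eq:whsicxy} to
\[
\widehat{HSIC}=W(K^{XX}\circ K^{YY})W^T+(WK^{XX}W^T)(WK^{YY}W^T)-2\,W\left(K^{XX}W^T\circ K^{YY}W^T\right).
\]

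The remaining step is the purely algebraic identity that this three-term expression equals $\mathrm{Tr}(K^{XX}RK^{YY}R)$ with $R=\mathrm{diag}(W)-W^TW$. Put $A=K^{XX}$, $B=K^{YY}$, let $w=W^T$, and $D=\mathrm{diag}(w)$, so $R=D-ww^T$. Expanding the product,
\[
\mathrm{Tr}(ARBR)=\mathrm{Tr}(ADBD)-\mathrm{Tr}(ADBww^T)-\mathrm{Tr}(Aww^TBD)+\mathrm{Tr}(Aww^TBww^T).
\]
I would then evaluate the four terms using cyclicity of the trace and the symmetry of $A$ and $B$: $\mathrm{Tr}(ADBD)=\sum_{i,j}w_iw_jA_{ij}B_{ij}=W(A\circ B)W^T$; $\mathrm{Tr}(Aww^TBww^T)=(w^TBw)(w^TAw)=(WAW^T)(WBW^T)$; and each of the two cross terms equals $w^TADBw=\sum_k w_k(Aw)_k(Bw)_k=W(AW^T\circ BW^T)$, where for $\mathrm{Tr}(Aww^TBD)$ one first rewrites it as $w^TBDAw$ and uses $A^T=A$, $B^T=B$. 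Adding the four contributions recovers exactly the displayed three-term formula, so $\widehat{HSIC}=\mathrm{Tr}(K^{XX}RK^{YY}R)$.

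I do not anticipate a real obstacle. The two points that merit a line of justification are (i) that the $s\to 0$ limit may be taken inside \eqref{eq:whsicxy}, which follows from uniform boundedness of the Gram-type entries together with $w_0\to 0$, so that the $P_0$-dependent entries never contribute; and (ii) the bookkeeping of the two Schur products when expanding $\mathrm{Tr}(ARBR)$ — in particular, checking that the cross terms $\mathrm{Tr}(ADBww^T)$ and $\mathrm{Tr}(Aww^TBD)$ coincide and therefore jointly produce the factor $2$. Both are elementary.
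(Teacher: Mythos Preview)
Your proof is correct and follows essentially the same route as the paper: both expand $\mathrm{Tr}(K^{XX}RK^{YY}R)$ with $R=\mathrm{diag}(W)-W^TW$ into four trace terms and identify them with the three summands of the reduced $\widehat{HSIC}$ formula via the Schur-product/trace identities. The only difference is that you spell out the $s\to 0$ reduction from Theorem~\ref{th:HSIC} (the paper leaves this implicit in the text preceding the corollary) and give slightly more index-level detail on the cross terms; neither changes the argument.
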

Note that each matrix in the above expression is $n \times n$ and, therefore, the computational cost and memory storage to determine a sample of  $\widehat{HSIC}$ becomes prohibitive for large $n$. A way to address this issue is to use the so called low-rank approximations of the kernel matrix. The most common approaches are the Nystr\"{o}m method and the random features. We have extended the approach in \cite{zhang2018large} to derive a  \textbf{low-rank approximation} of the posterior distribution of $\BdCor$ based on the Nystr\"{o}m method.
\begin{theorem}
 \label{th:low-rank}
$\widehat{HSIC}(X,Y)$ can be approximated as
$$
\widehat{HSIC}(X,Y) = Tr(K^{XX}RK^{YY}R) \approx ||\tilde{\phi}_{X}^T R \tilde{\phi}_{Y}||_F^2,\\
$$
where $||\cdot||_F$ is the Frobenius norm and  $\tilde{\phi}_X, \tilde{\phi}_Y\in \mathbb{R}^m$ are the feature  representation of the kernel 
(Nystr\"{o}m method) for the chosen approximating dimension $m$.
\end{theorem}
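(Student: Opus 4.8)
The plan is to start from the closed form provided by Corollary~\ref{co:fast}, namely $\widehat{HSIC}(X,Y) = Tr(K^{XX} R K^{YY} R)$ with $R = \mathrm{diag}(W) - W^T W$, and simply substitute the Nystr\"om low-rank factorisations of the two kernel Gram matrices. Recall that the Nystr\"om method, given a choice of $m$ inducing points, produces a feature matrix $\tilde{\phi}_X \in \mathbb{R}^{n \times m}$ whose $i$-th row is the $m$-dimensional feature representation of $X_i$, such that $K^{XX} \approx \tilde{\phi}_X \tilde{\phi}_X^T$, and likewise $K^{YY} \approx \tilde{\phi}_Y \tilde{\phi}_Y^T$; concretely, if $C_X \in \mathbb{R}^{n\times m}$ collects the kernel values between all points and the inducing set and $W_X \in \mathbb{R}^{m\times m}$ is the kernel block on the inducing set, one may take $\tilde{\phi}_X = C_X W_X^{-1/2}$ (or a pseudo-inverse), so that $\tilde{\phi}_X \tilde{\phi}_X^T = C_X W_X^{+} C_X^T$ is the standard Nystr\"om approximant, and similarly for $Y$. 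I would first state this factorisation as the only approximation used, everything afterwards being exact algebra.

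Substituting both approximations into Corollary~\ref{co:fast} gives $\widehat{HSIC}(X,Y) \approx Tr(\tilde{\phi}_X \tilde{\phi}_X^T R \tilde{\phi}_Y \tilde{\phi}_Y^T R)$. I would then apply the cyclic invariance of the trace to bring $\tilde{\phi}_X$ to the front, obtaining $Tr(\tilde{\phi}_X^T R \tilde{\phi}_Y \tilde{\phi}_Y^T R \tilde{\phi}_X)$, and observe that $R$ is symmetric: $\mathrm{diag}(W)$ is diagonal and $W^T W$ is a rank-one symmetric matrix, hence $R^T = R$. Writing $A := \tilde{\phi}_X^T R \tilde{\phi}_Y \in \mathbb{R}^{m\times m}$, the remaining factor satisfies $\tilde{\phi}_Y^T R \tilde{\phi}_X = \tilde{\phi}_Y^T R^T \tilde{\phi}_X = (\tilde{\phi}_X^T R \tilde{\phi}_Y)^T = A^T$, so the whole expression collapses to $Tr(A A^T) = \|A\|_F^2 = \|\tilde{\phi}_X^T R \tilde{\phi}_Y\|_F^2$, which is precisely the claimed identity.

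The computation itself is short, so there is no deep obstacle; the care needed is essentially bookkeeping. One should make explicit that the $\widehat{HSIC}$ of Corollary~\ref{co:fast} already incorporates the $s\to 0$ limit, so that $W = (w_1,\dots,w_n)$ lies on the simplex and $R$, $K^{XX}$, $K^{YY}$ are all $n\times n$; one should track the matrix shapes so that the contracted object $\tilde{\phi}_X^T R \tilde{\phi}_Y$ is the small $m\times m$ matrix that reduces the evaluation cost from $O(n^3)$ to $O(nm^2 + m^3)$; and one should note that the symbol ``$\approx$'' originates solely in the Nystr\"om approximations of $K^{XX}$ and $K^{YY}$, all subsequent steps being exact. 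The only point beyond~\cite{zhang2018large} is that the centring-type matrix is here the random $R$ arising from the Dirichlet weights rather than the deterministic centring matrix $H = I - \tfrac{1}{n}\mathbf{1}\mathbf{1}^T$; since $R$ is still symmetric the algebra is unchanged, and a Monte Carlo draw of $W$ therefore propagates directly to a Monte Carlo draw of the low-rank $\widehat{HSIC}$, which is the computational payoff.
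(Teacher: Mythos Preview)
Your proposal is correct and follows essentially the same route as the paper's own proof: substitute the Nystr\"om factorisations $K^{XX}\approx\tilde\phi_X\tilde\phi_X^T$, $K^{YY}\approx\tilde\phi_Y\tilde\phi_Y^T$ into the trace formula from Corollary~\ref{co:fast}, then use cyclicity of the trace and the symmetry of $R$ to collapse the result to $\|\tilde\phi_X^T R\,\tilde\phi_Y\|_F^2$. If anything you are more explicit than the paper, which simply writes the chain $Tr(\tilde K^{XX}R\tilde K^{YY}R)=Tr(\tilde\phi_X\tilde\phi_X^T R\tilde\phi_Y\tilde\phi_Y^T R)=\|\tilde\phi_X^T R\,\tilde\phi_Y\|_F^2$ without spelling out the symmetry argument.
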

The computational complexity is now $O(nm^2)$.
$\BdCor$ is a pairwise method, however in many applications
we must assess the association of more than one pair of variables at a time and then return joint statements of dependence/independence.
For this, we follow the approach described in \cite{benavoli2015b} that starts from the pair of variables  having the highest posterior probability of being dependent/independent and accepts as many statements as possible stopping when  the joint 
posterior probability of all being true is less than $\gamma$ (e.g. using $\gamma=0.85$ or $0.9$).
The multiple comparison proceeds as follows.
\vspace{-0.3cm}
\begin{itemize}
\itemsep0em 
\item For each pair $i,j$, perform a BKR test and derive the posterior probability $\mathcal{P}(\BdCor_{ij}>ROPI)$  and $\mathcal{P}(\BdCor_{ij}\leq ROPI)$ . Select the direction with highest posterior probability for each pair $i,j$.
\item Sort the posterior probabilities obtained in the previous step for the various pairwise comparisons in decreasing order. Let $\mathcal{P}_1,\dots,\mathcal{P}_k$ be the sorted posterior probabilities and $S_1,\dots,S_k$ the corresponding statements $\BdCor_{ij}\lesseqgtr ROPI$.  
\item Accept all the statements $S_i$ with $i \leq  \ell$, where $\ell$ is the greatest integer s.t.
$\mathcal{P}(S_1 \wedge S_2 \wedge \dots \wedge S_\ell)>\gamma$.
\end{itemize}
\vspace{-0.3cm}
If none of the hypotheses has at least $\gamma$ posterior probability of being true, then we reach no statements. 
The joint posterior probability of multiple statements $\mathcal{P}(S_1 \wedge S_2 \wedge \dots \wedge S_\ell)$ can be computed numerically by Monte Carlo sampling the vector $W$ from the same Dirichlet distribution as for each comparison $S_i$.  This way we ensure that the posterior probability $1-\mathcal{P}(S_1 \wedge  S_2 \wedge \dots \wedge  S_\ell)$ that there is an error in the list of accepted statements is lower than $1-\gamma$. 
Hence, this Bayesian approach does not assume independence between the different hypotheses, such as the NHST does; instead, it considers their joint distribution at once.
\vspace{-0.2cm}
\section{Numerical experiments}
\label{numexp}
\vspace{-0.2cm}
\subsection{Synthetic data}
\label{ss}

Two synthetic datasets named D1 and D2 were created in order to assess the \textit{decision accuracy}, that is, the fraction of independent and dependent relationships that were recovered at a given credible level.
D1 includes $6$ variables generated as follows:

\begin{small}
\begin{align*}
 X &\sim N(0,1),\quad Y \sim \text{B}^{-1}(\Phi(T)),\quad T\sim \mathcal{N}(0,1),\\
 C_X^{\rho}&=\rho X + (1-\rho^2)^{0.5} W_1, ~W_1\sim \mathcal{N}(0,1),\\
 D_X^{\rho}&=\text{B}^{-1}(\Phi(Z)),~  Z=\rho X\! + (1\!-\rho^2)^{0.5} W_2, ~W_2\sim \mathcal{N}(0,1),\\
  D_Y^{\rho}&=\text{B}^{-1}(\Phi(Z)),~  Z=\rho T\! + (1\!-\rho^2)^{0.5} W_3, ~W_3\sim \mathcal{N}(0,1),\\
\mathbb{C}_X^{\rho}&=U^{-1}(\Phi(Z)), ~Z=\rho X\! + (1\!-\rho^2)^{0.5} \mathbb{W}, ~\mathbb{W}\sim \mathcal{N}(0,I_{1024}),
\end{align*}
\end{small}\noindent 
where $\text{B}^{-1},U^{-1},\Phi$, are the inverse cumulative distribution function (CDF) of the Bernoulli distribution (with probability $0.5$), the inverse CDF of the Uniform distribution in $[0,1]$, and the CDF of the Normal distribution, respectively.
The variables $X,C_X^{\rho},\mathbb{C}_X^{\rho}$ are continuous, while $Y,D_X^{\rho},D_Y^{\rho}$ are binary.
$\mathbb{C}_X^{\rho}$ has dimension 1024 from a $32\times32$ image. This synthetic data
resembles the \textit{House} dataset discussed in Section~\ref{sec:intro}. 
By increasing $|\rho| \in [0,1]$ we increase the dependence among variables and, therefore, we evaluate the effectiveness of BKR.\footnote{Note that in practice we are simulating dependence using a Gaussian Copula.}
From the above relationships, we immediately see
that $X$ and $Y,D_Y^{\rho}$, as well as $Y$ and $C_X^{\rho},D_X^{\rho},\mathbb{C}_X^{\rho}$, are  pairwise independent. Therefore, for $\rho>0$, 
there are 7/15 independent variable pairs 
and 8/15 dependent variable pairs.
Table \ref{tab:1} in the Supp. Material shows an instance of D1 including $n=5$ observations generated with $\rho=0.9999$.

We compare BKR with the NHST 
based on HSIC \citep{gretton2008kernel} on 100 different Monte Carlo generated instances of D1.
For BKR, we make a decision of dependence when the posterior probability of $\dCor>ROPI$ is greater than $0.85$ and a decision
of independence when the posterior probability of $\dCor<ROPI$ is greater than $0.85$. ROPI has been fixed to $0.025$ in the simulations. Note that joint 
decisions for the 15 pairwise comparisons are made
according to the methodology described in Section~\ref{sec:addon}. For the HSIC test, we reject the independence hypothesis when the p-value is less than $\alpha=0.05/15$ (Bonferroni correction). However, the HSIC test cannot ever make a decision of independence. For BKR and HSIC, we use a square exponential kernel with length-scale set to the median distance
between points in input space (for continuous variables) and the indicator function (for binary variables).
With this choice, HSIC/BKR can detect any  dependence in D1.

\begin{figure}[htp]
\centering
\vspace{-9pt}
 \begin{tabular}{r}
\includegraphics[height=1.4in]{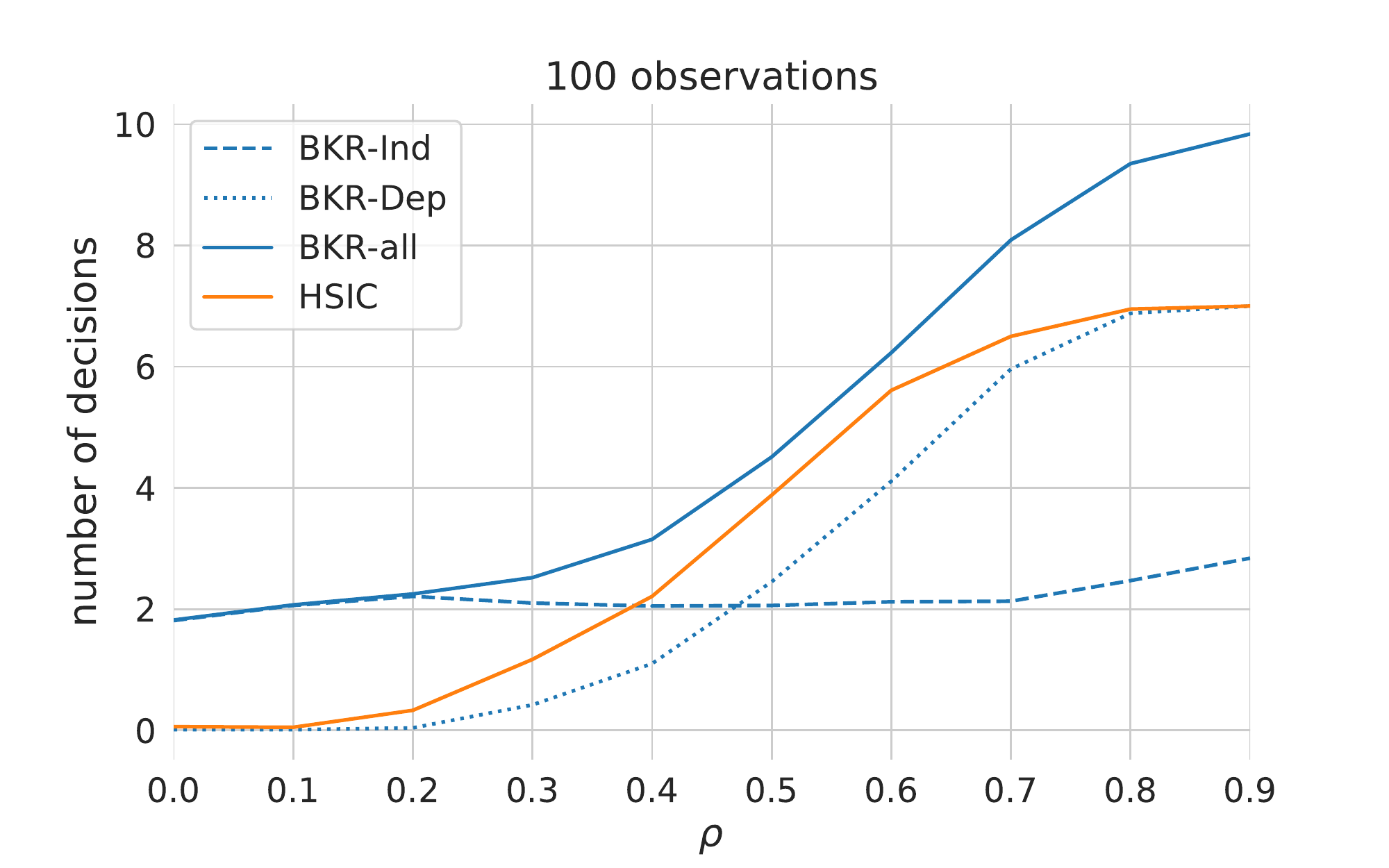}\\ 
\includegraphics[height=1.4in]{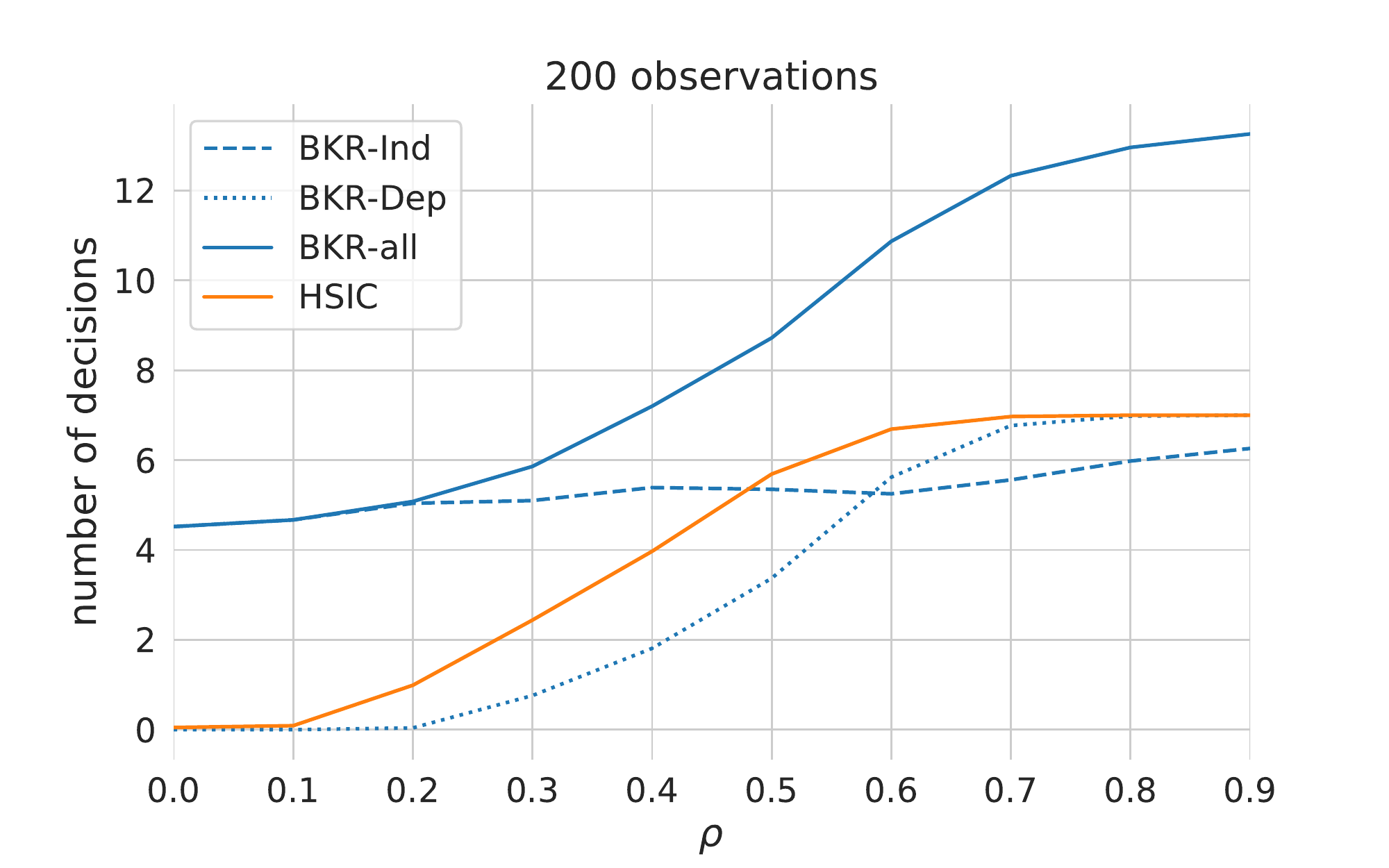} 
\end{tabular}
\vspace{-5pt}
\caption{Synthetic dataset D1. Overall, BKR always makes more decisions than HSIC.}
\label{fig:2}
\vspace{-10pt}
\end{figure}

Figure~\ref{fig:2}(top) shows results of the comparison
for a dataset D1 of $100$ observations.
We have reported the number of decisions of (i)
independence for BKR (BKR-Ind); (ii) dependence for BKR (BKR-Dep); (iii)  overall dependence/independence  for BKR (BKR-all); (iv) dependence for HSIC.
BKR makes overall more decisions than HSIC for all values of the simulated correlation $\rho$. For instance for $\rho=0.9$, BKR makes $\sim 10$ decisions and HSIC only $7$. The reason is that HSIC can never declare independence. It can also be observed that BKR is calibrated under the null hypothesis (number of dependence decisions is zero when $\rho=0$) and moreover it can detect two independences. Dataset D2 is generated in a similar way but simulating a nonlinear dependence using a Clayton Copula (instead of a Gaussian Copula). Similar results were obtained for the Clayton Copula which are reported in the Supp. Material.

\vspace{-0.2cm}
\paragraph{MIC.}
We compare BKR against the \emph{Maximal Information Coefficient} (MIC)~\citep{reshef2011detecting}. The MIC
statistic measures (in a non-parametric way) the strength of the association between two continuous variables; in principle it can detect any functional
dependence between two variables. A MIC-based NHST of independence is obtained as described in \citep{reshef2011detecting}.
For comparison, we consider the performance statistics from the 2008 Major
League Baseball (MLB) season~\cite{reshef2011detecting}
(131 variables).
\setlength{\columnsep}{0pt}
\begin{figure}
\begin{footnotesize}
\centering
 \setlength\tabcolsep{0.0pt}
  \begin{tabular}{c  c   }
    \includegraphics[width=.4\linewidth,trim={3cm 0.0cm 2.3cm 0},clip]{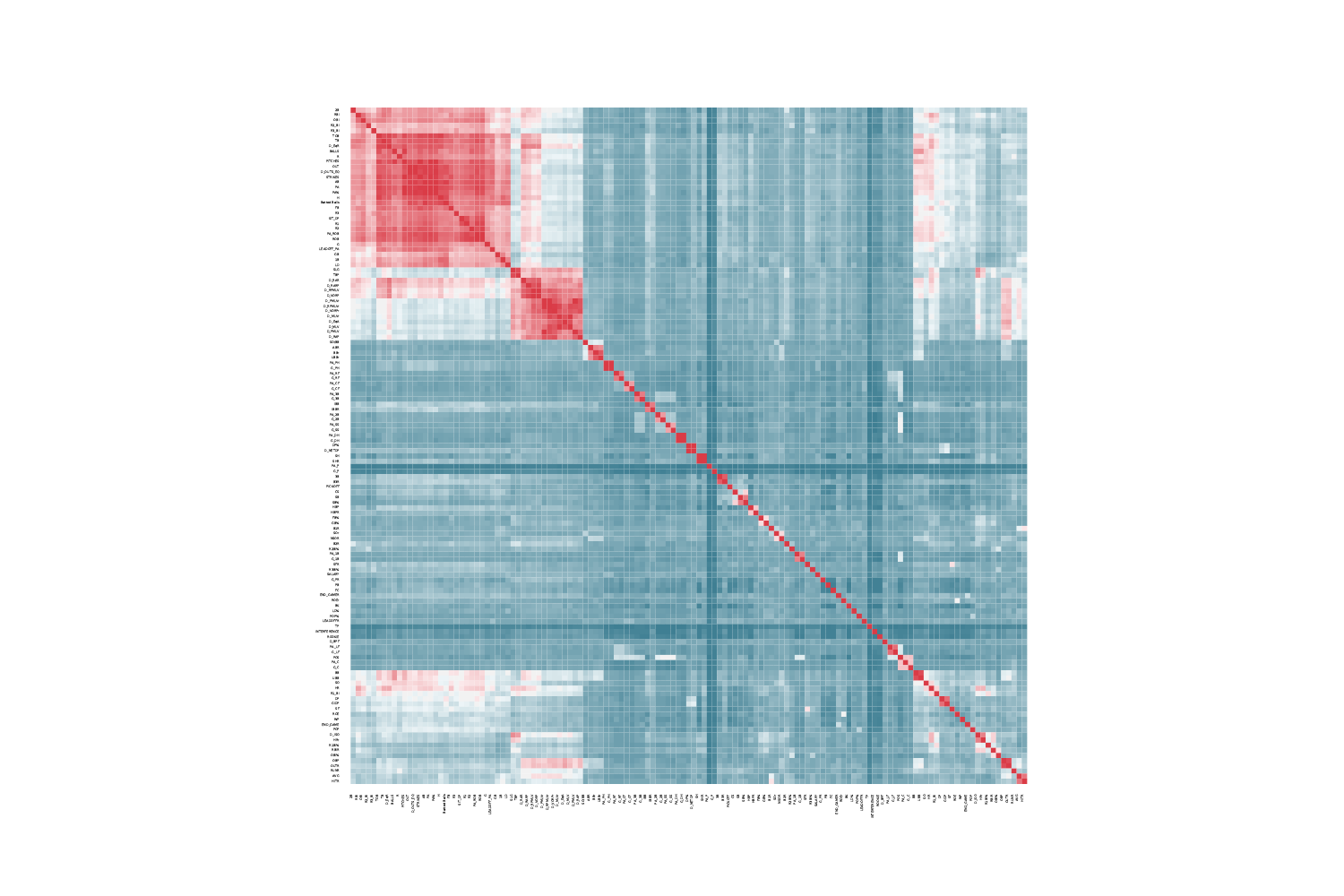} &
    \includegraphics[width=.4\linewidth,trim={3cm 0.0cm 2.3cm 0},clip]{{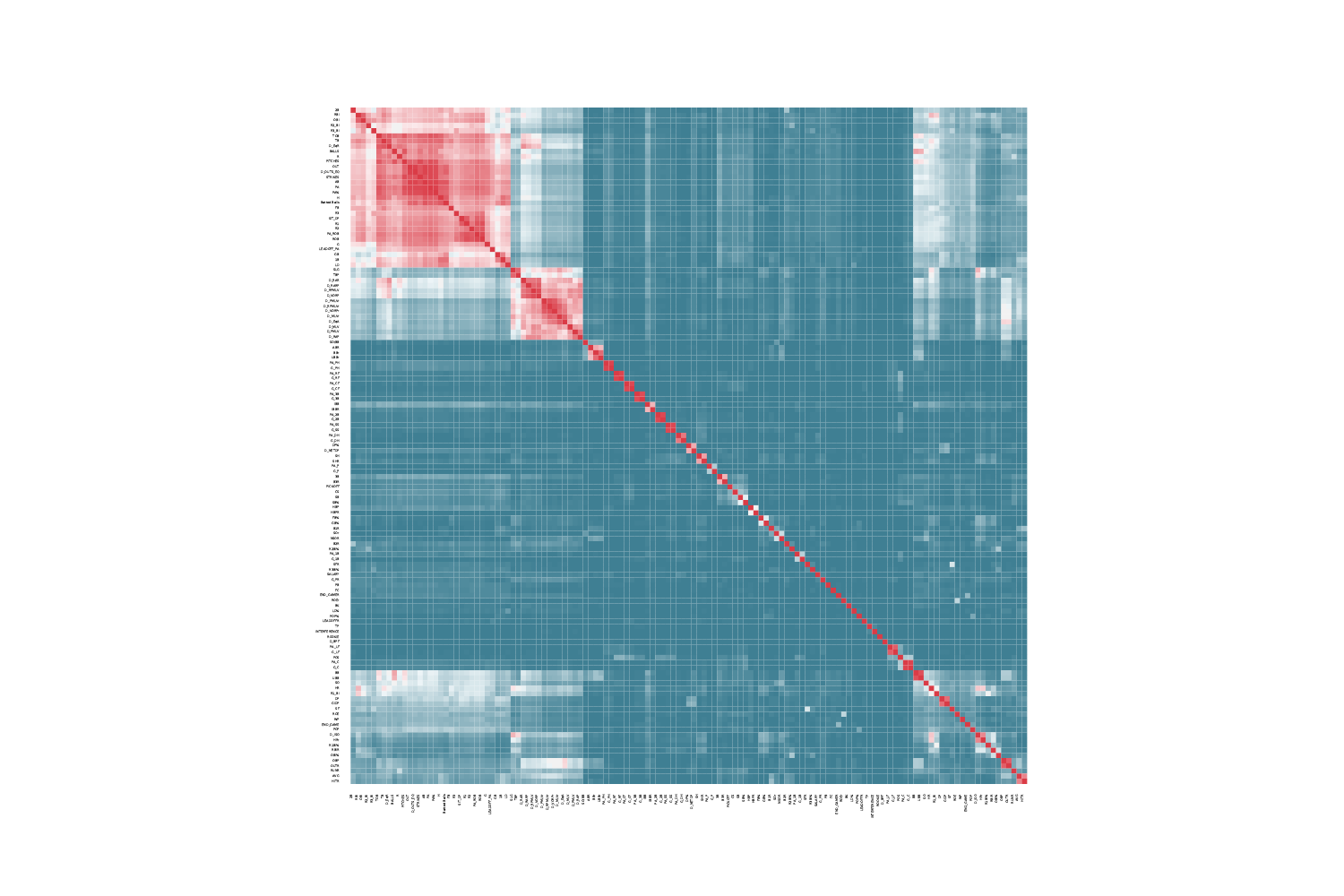}} \\
  \end{tabular}
\caption{MLB dataset overview heatmap: red shows dependences and blue shows independences. In the left, MIC statistics. In the right, $\mathcal{E}(\BdCor)$. (Details of the graphs are not visible -- they are intended to provide an overview picture only.)}
\label{Fig:mic1}
\end{footnotesize}
\end{figure}

Figure \ref{Fig:mic1} shows the values
 of the MIC statistics and the posterior mean
 $\mathcal{E}(\BdCor)$ as a heatmap for all the $8515$ pairwise comparisons. In both cases, red denotes large values
 of the statistics (a sign of dependence) and blue small values (a sign of independence). The two measures provide a similar evaluation of the strength of the pairwise associations: this shows that our proposed $\BdCor$ measure is commensurable.

\begin{figure*}
\centering
\begin{minipage}[t]{0.45\textwidth}
 \setlength\tabcolsep{0.0pt}
\begin{tabular}{c  c  c }
    \includegraphics[width=.30\linewidth,trim={3.8cm 0.0cm 2cm 0},clip]{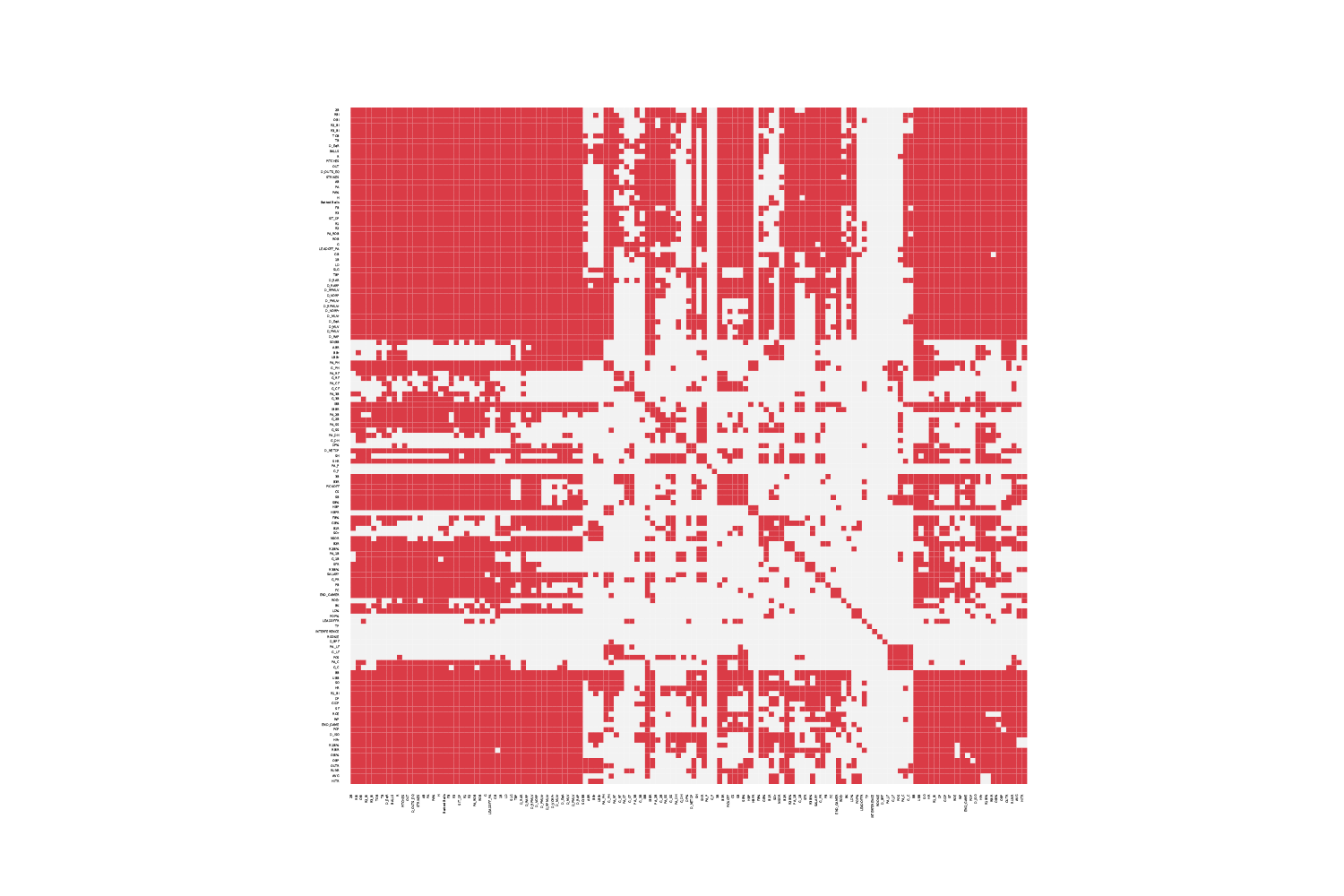} &
    \includegraphics[width=.30\linewidth,trim={3.8cm 0.0cm 2cm 0},clip]{{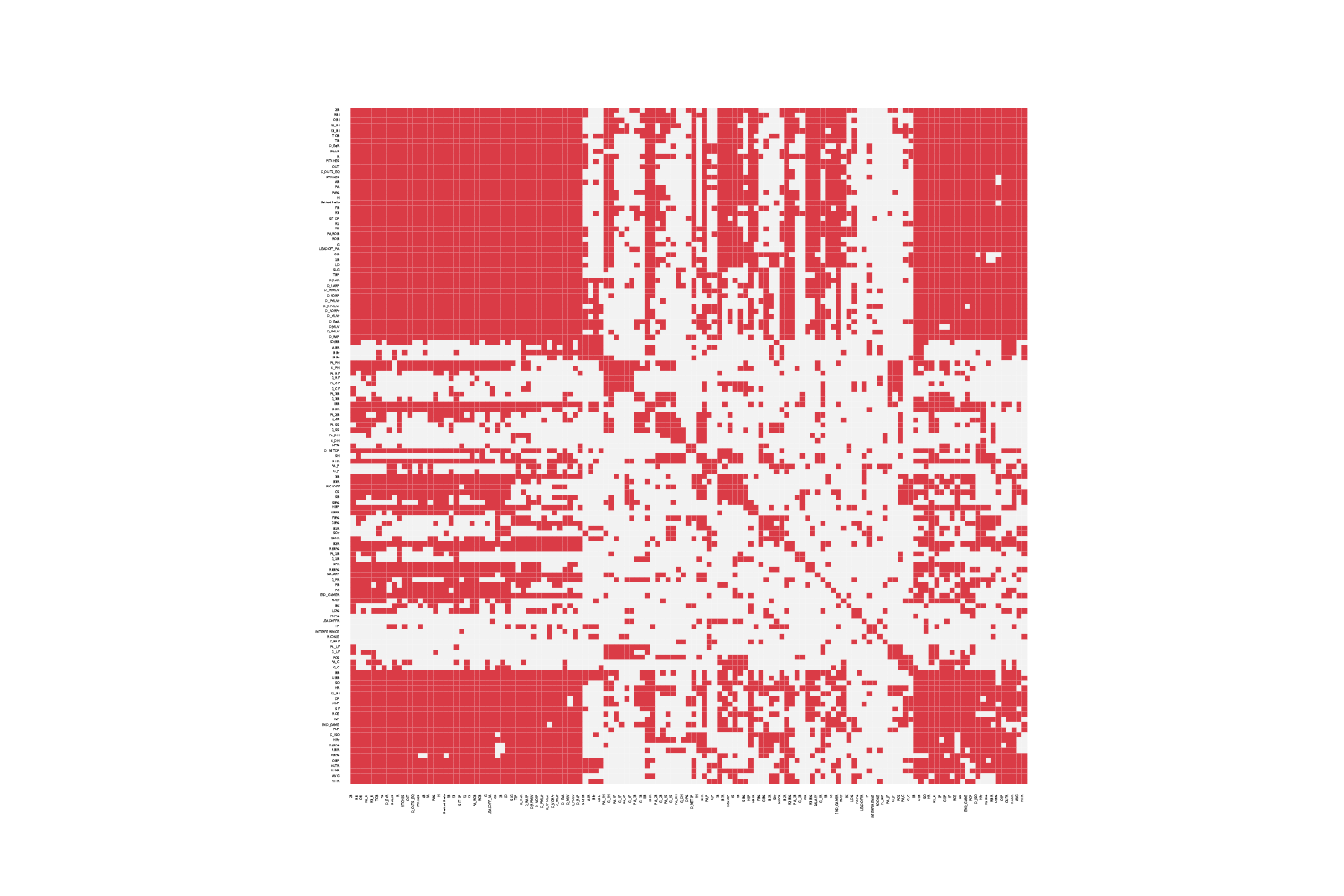}} &
    \includegraphics[width=.30\linewidth,trim={3.8cm 0.0cm 2cm 0},clip]{{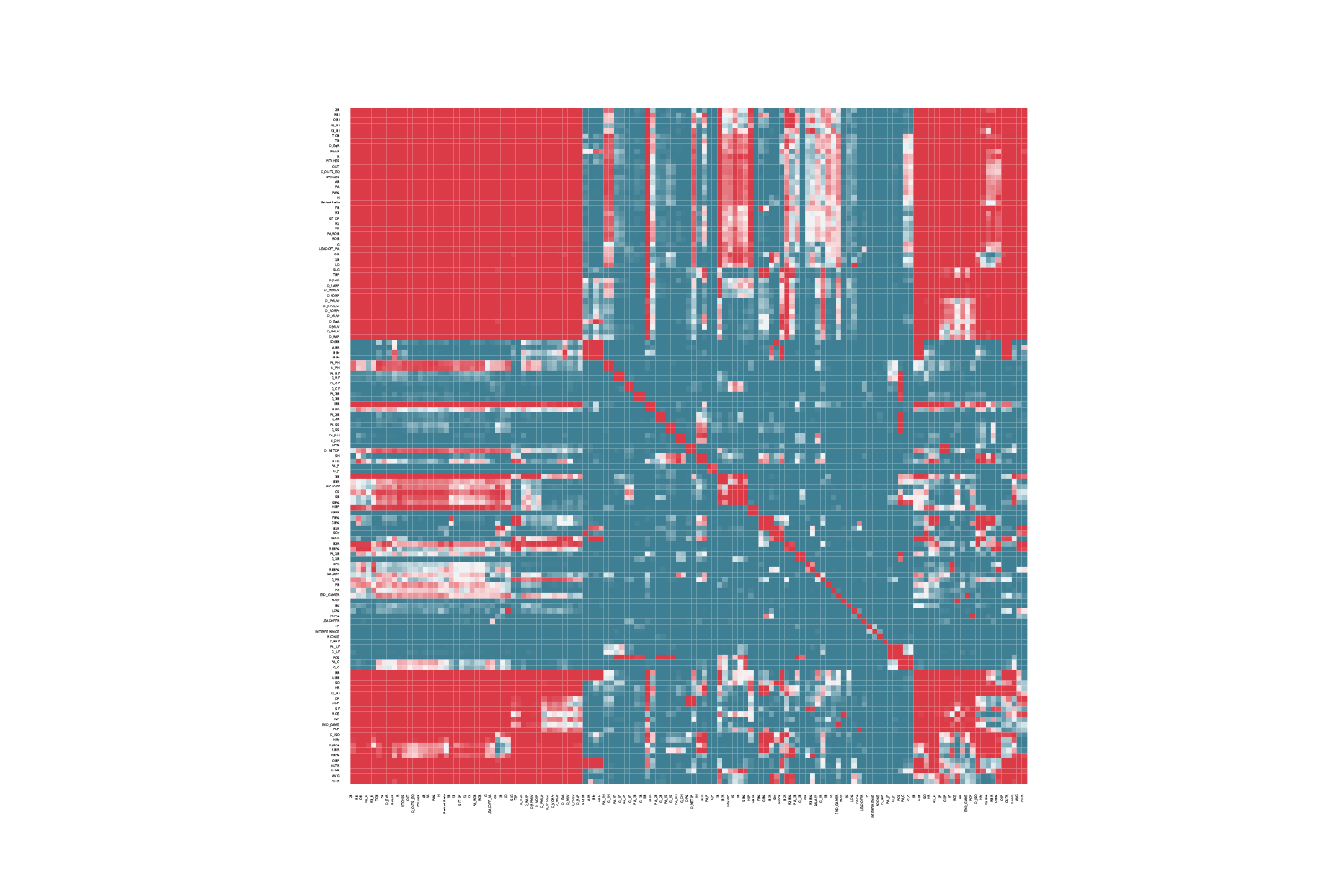}}\\
    \small HSIC & \small MIC & \small BKR\\
    \end{tabular}
\end{minipage}\hspace{-0.65cm}
\begin{minipage}[t]{0.57\textwidth}
 \setlength\tabcolsep{0.0pt}
 \begin{tabular}{c  c  c }
     \includegraphics[width=.35\linewidth,trim={0.3cm 0.0cm 1cm 0},clip]{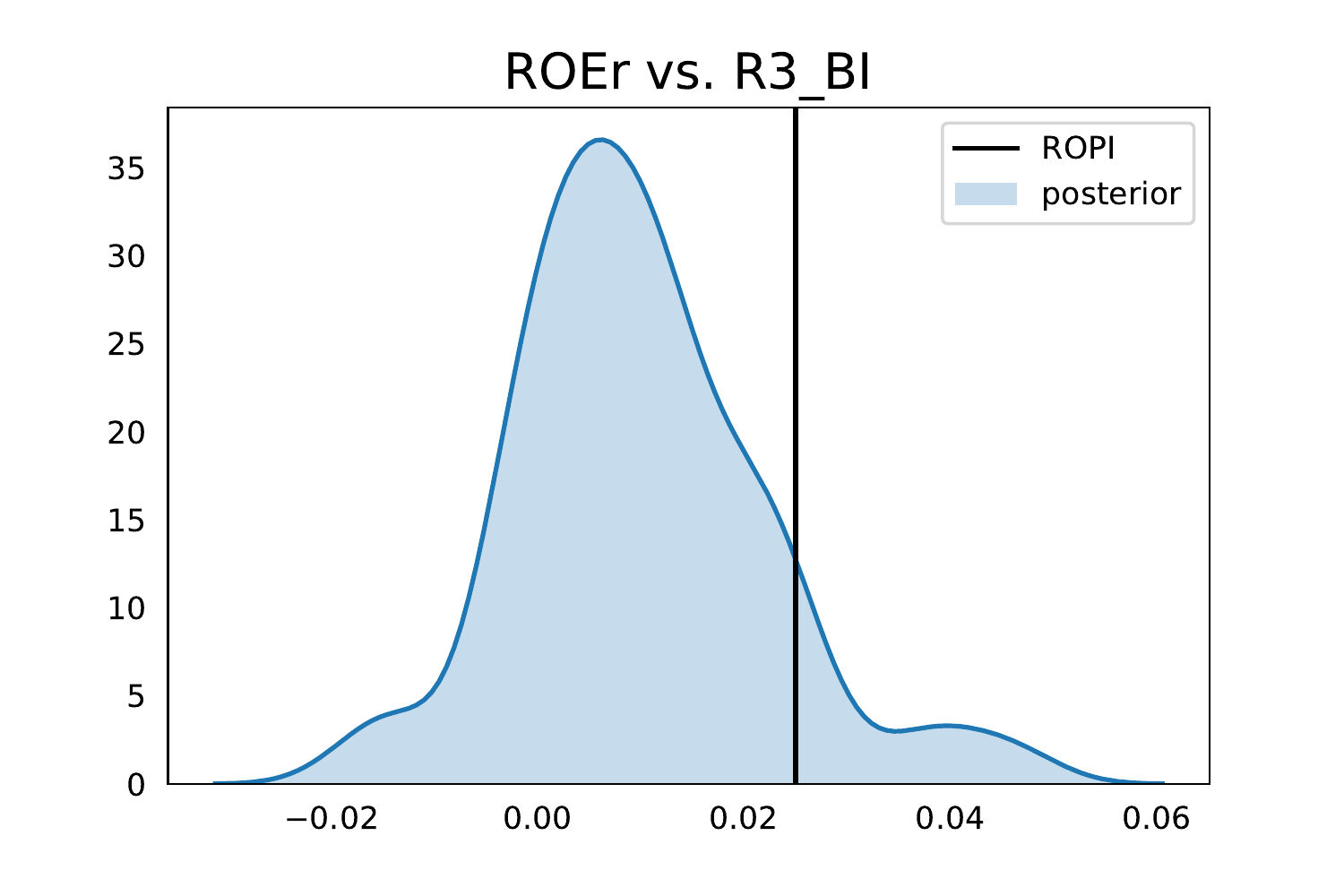} &
    \includegraphics[width=.35\linewidth,trim={0.3cm 0.0cm 1cm 0},clip]{{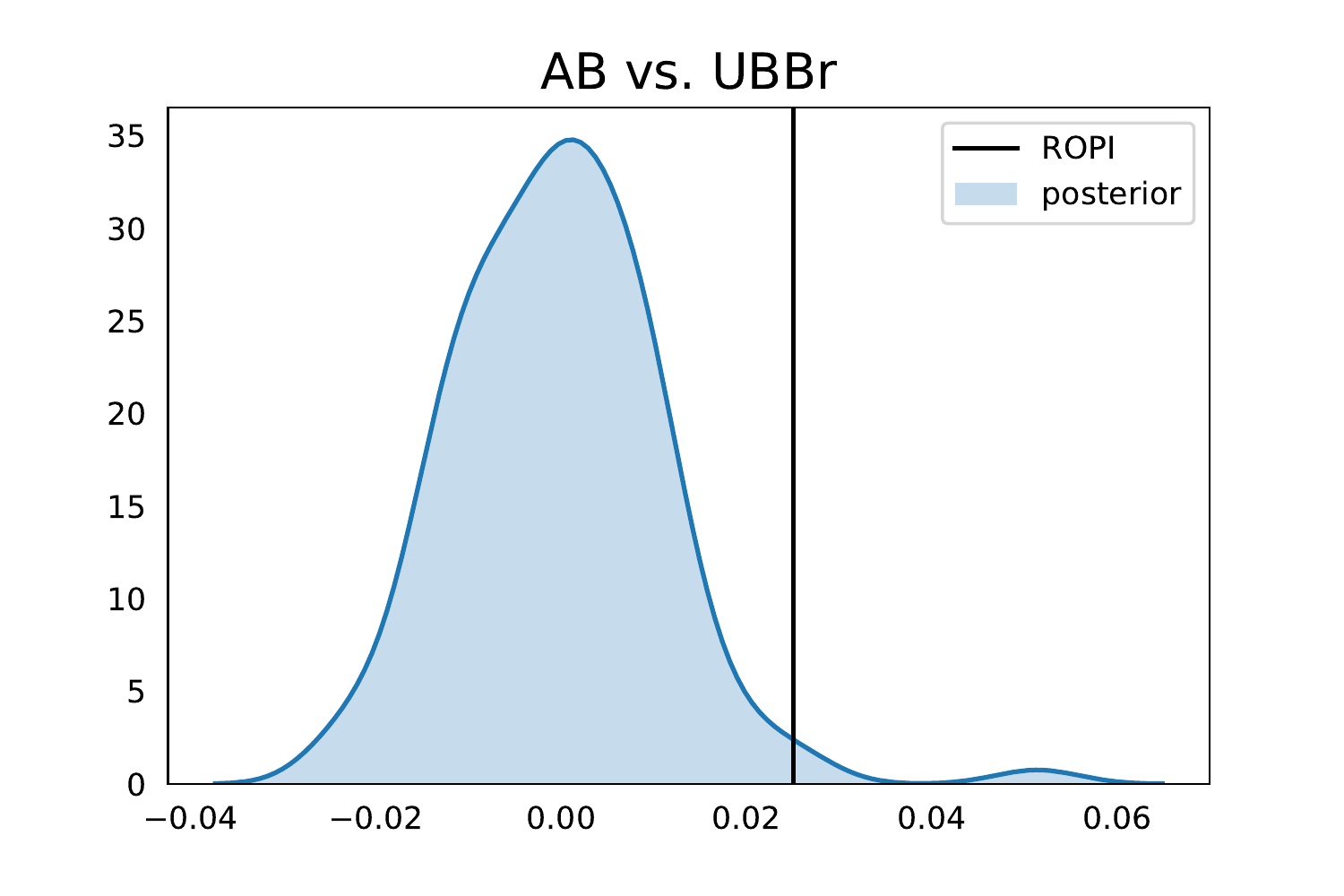}} &
    \includegraphics[width=.35\linewidth,trim={0.3cm 0.0cm 1cm 0},clip]{{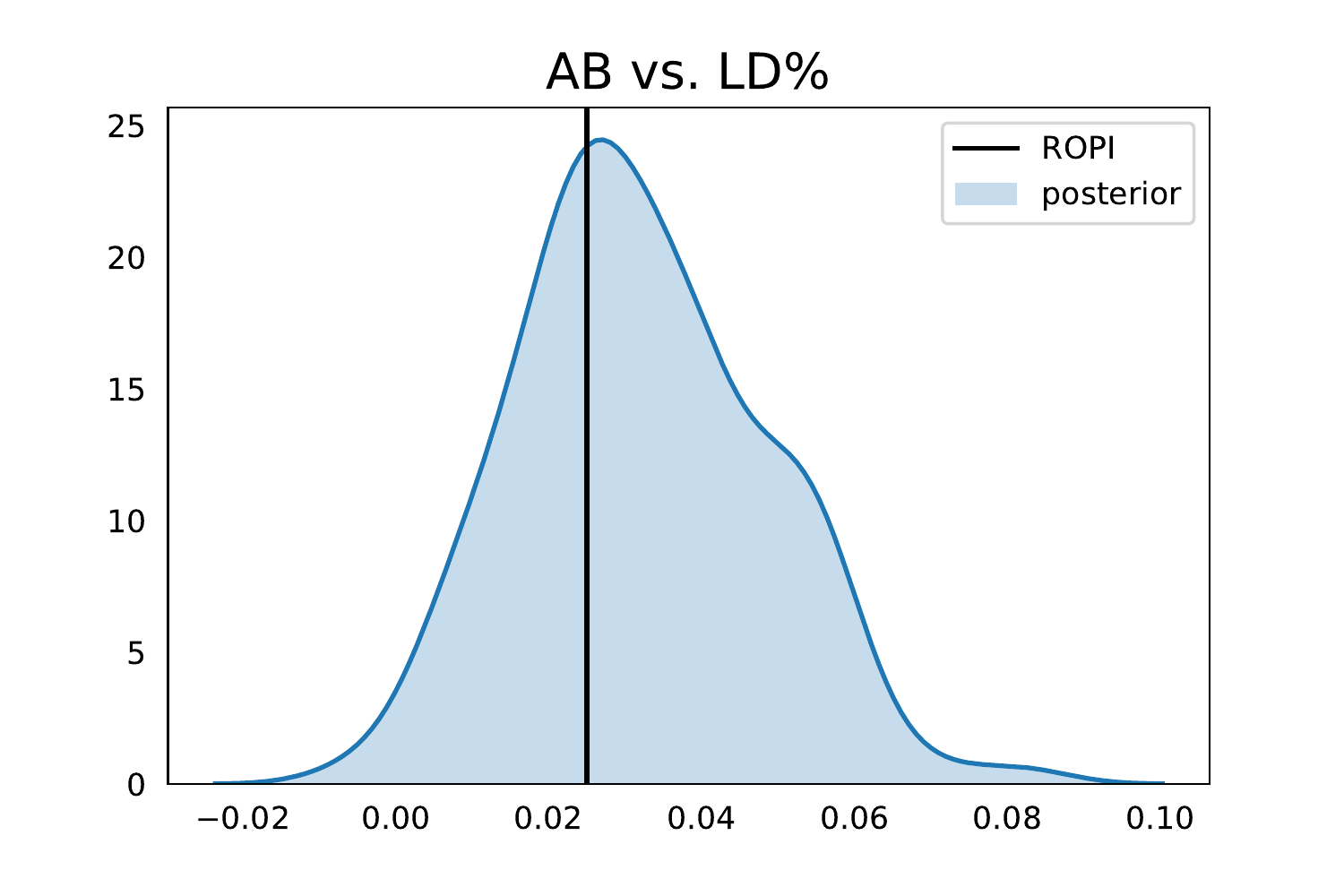}}\\
    \includegraphics[width=.35\linewidth,trim={0.1cm 0.0cm 0.7cm 0.0cm},clip]{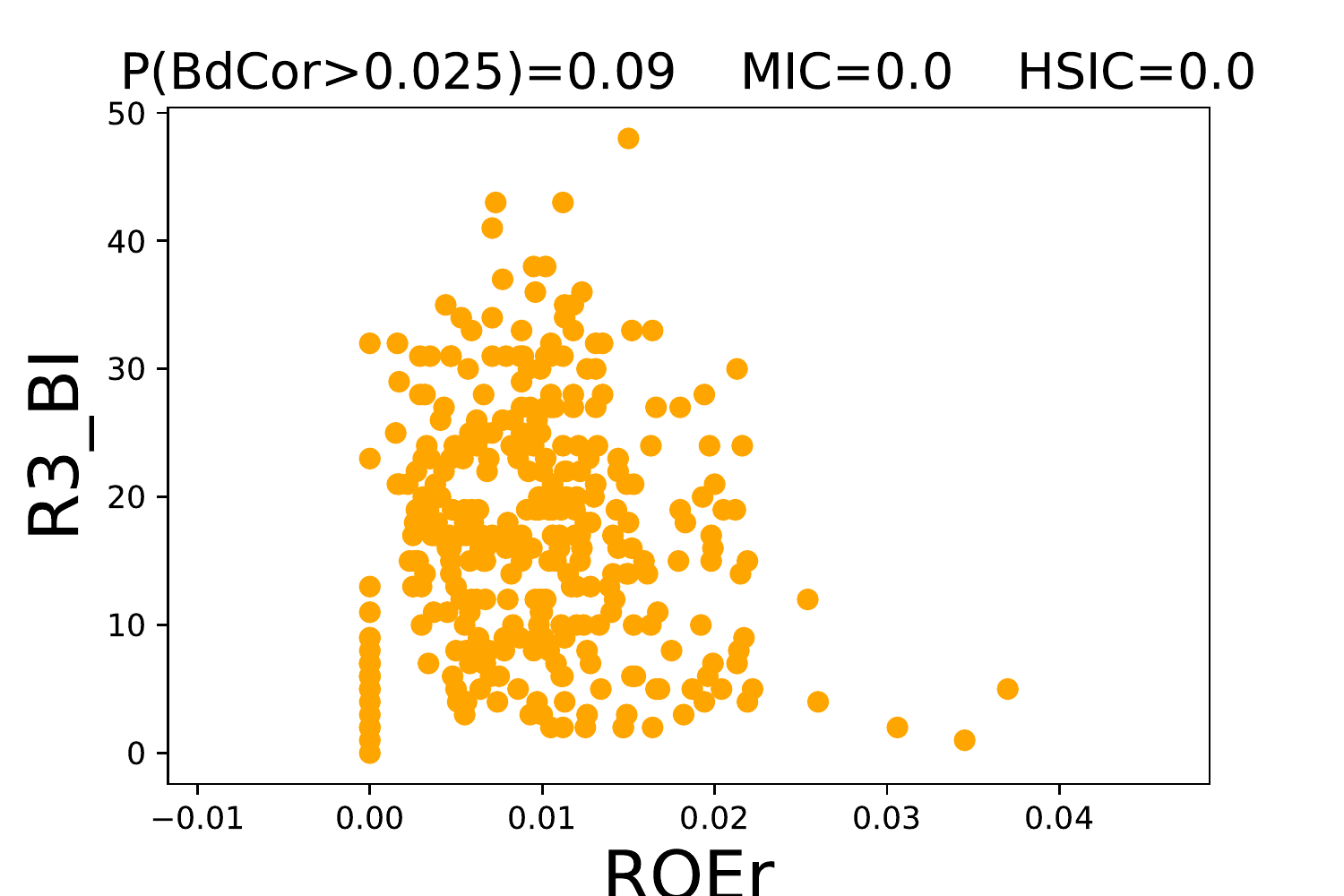} &
    \includegraphics[width=.35\linewidth,trim={0.0cm 0.0cm 0.7cm 0.0cm},clip]{{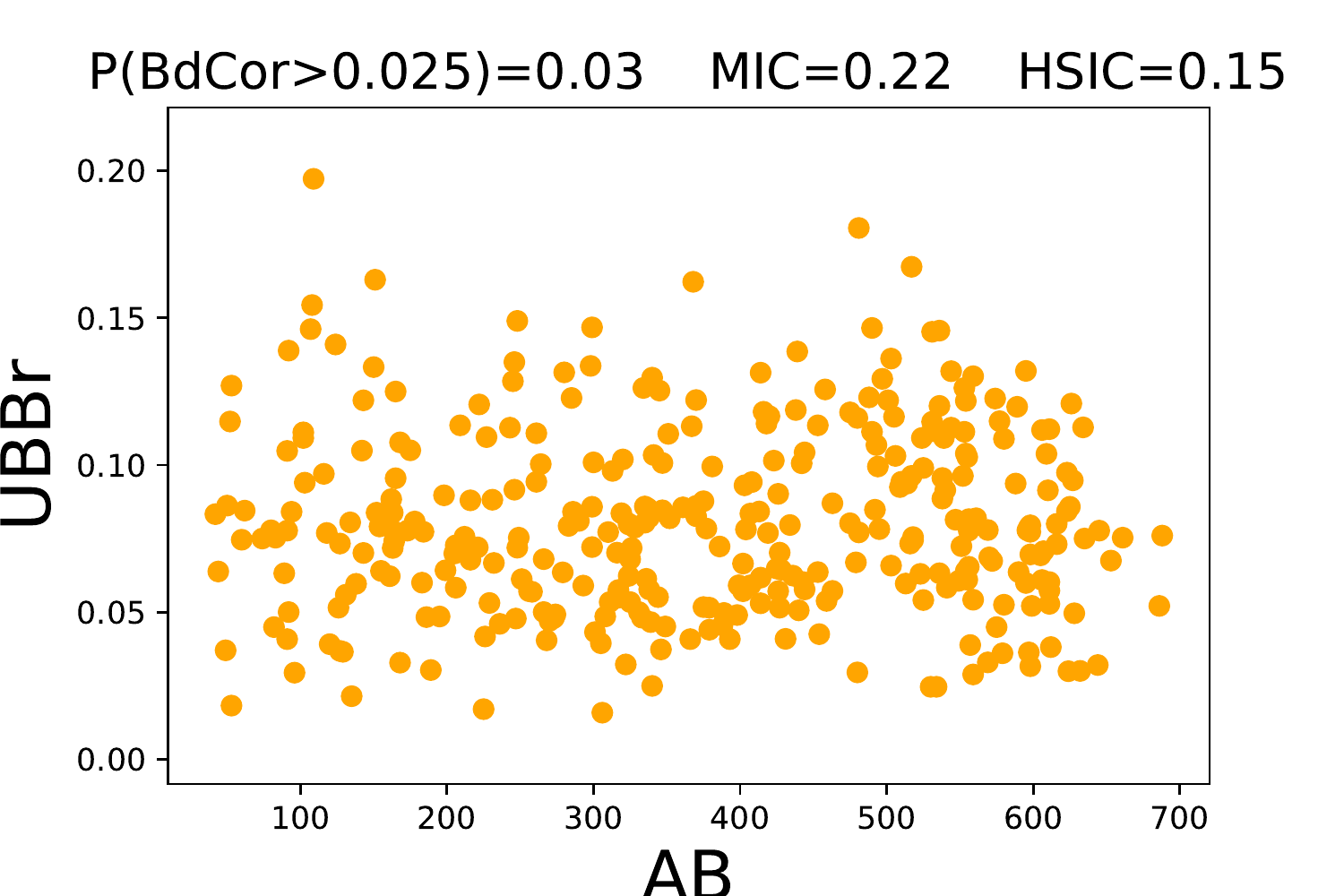}} &
    \includegraphics[width=.35\linewidth,trim={0.1cm 0.0cm 0.7cm 0.0cm},clip]{{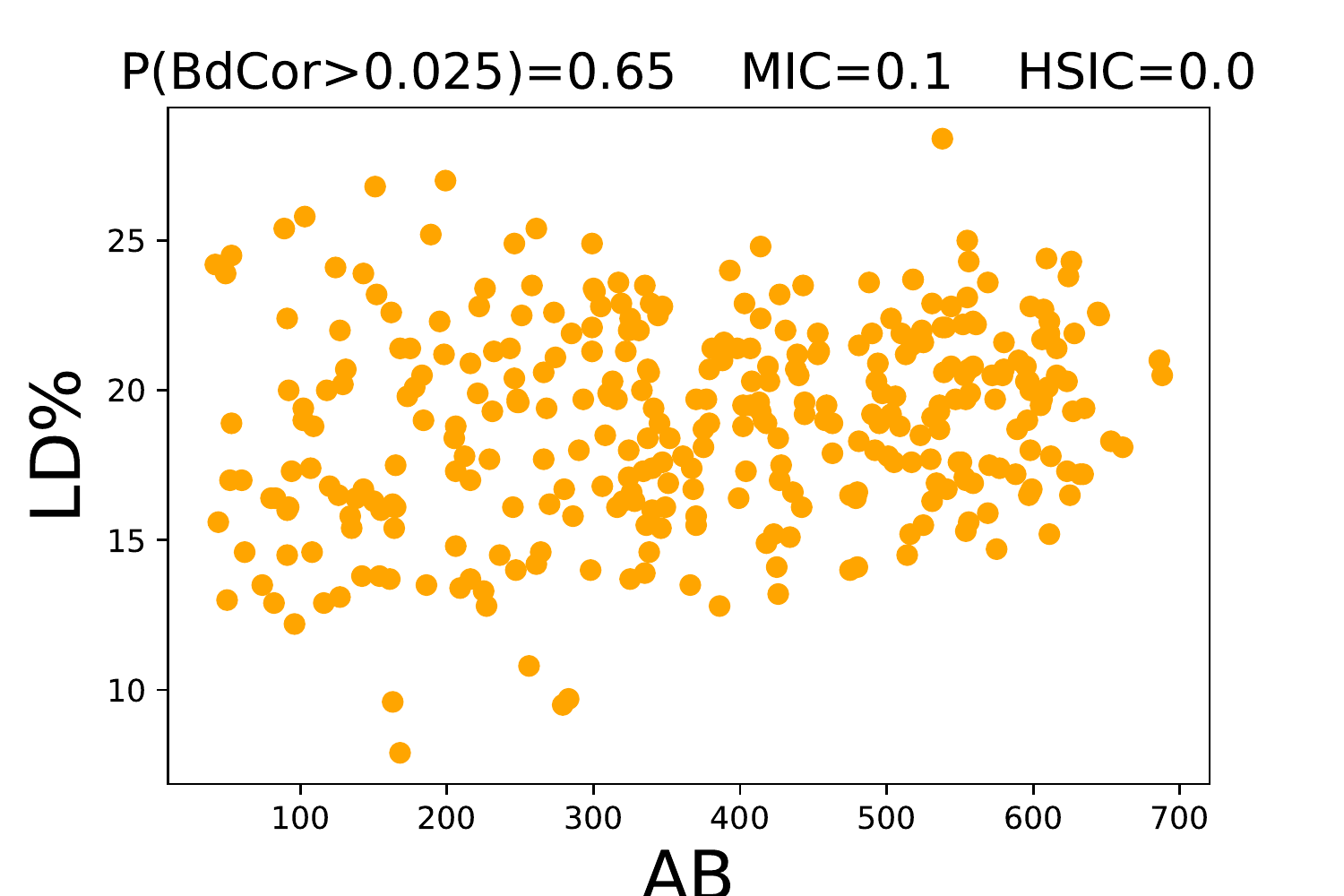}}\\
    \end{tabular}
\end{minipage}
\caption{{ Left: MLB heatmap overviews showing cases where HSIC and MIC cannot discriminate statistical and practical significance. Right: Scatter and density plots for three illustrative comparisons: (i) for ROEr vs. R3BI: BKR correctly declares practical independence, while both MIC and HSIC reject independence (from the scatter plot it is evident that the dependence is overall weak); (ii) for AB vs.\ UBBr, MIC and HSIC cannot make any decision, while BKR declares practical independence; (iii)
AB vs.\ LD\% MIC and BKR are undecided, while HSIC rejects dependence (scatter plot shows that variables are only weakly dependent).}}
\label{Fig:mic2}
\end{figure*}

The NHST based on MIC cannot convert association strength analysis in assessments of independence/weak dependence. Therefore, it is not able to distinguish between statistical and practical significance. This is illustrated in Figure~\ref{Fig:mic2}'s heatmaps where we compare the decisions of three methods.
NHSTs based on HSIC and MIC detect a large set of dependencies (red), but some of them are spurious. The white colour corresponds to statistically insignificant relationships ($\alpha = 0.05/8515$ with Bonferroni correction), which include relationships of independence that a NHST cannot detect. 
BKR is able to assess both dependence (red) and independence (blue). The scatter and density plots show the limits of NHSTs in  three comparisons.

\vspace{-0.2cm}
\paragraph{Gapminder.}
The Gapminder dataset includes 319 global developmental indicators (such as education, health, trade, poverty, population growth, and mortality rates) for 200 countries spanning over 5 centuries \cite{rosling2010gapminder}. We only consider the year 2002 and we compare three dependence detection techniques: BKR, HSIC (with Bonferroni correction), and another Bayesian approach called MI-Crosscat (the  mutual information computed using Crosscat).
Cross-Categorization (CrossCat) is a general Bayesian non-parametric method for learning the joint distribution over all variables in a mixed-type,  high-dimensional population. In particular, the learned joint distribution
can be used to perform a pairwise dependence/independence
test based on the mutual information 
\cite{pmlr-v54-saad17a}. The CrossCat prior induces sparsity over the dependencies; in BKR we can induce sparsity by increasing the width of the ROPI.

Figure \ref{Fig:gapmind0} shows the heatmap of the posterior mean $\mathcal{E}(\BdCor)$ computed using BKR.
\setlength{\columnsep}{5pt}
\begin{wrapfigure}{r}{0pt}
\centering
 \setlength\tabcolsep{0.0pt}
\centering
  \begin{tabular}{c}
    \includegraphics[width=.39\linewidth,trim={3cm 0.0cm 2.5cm 0},clip]{{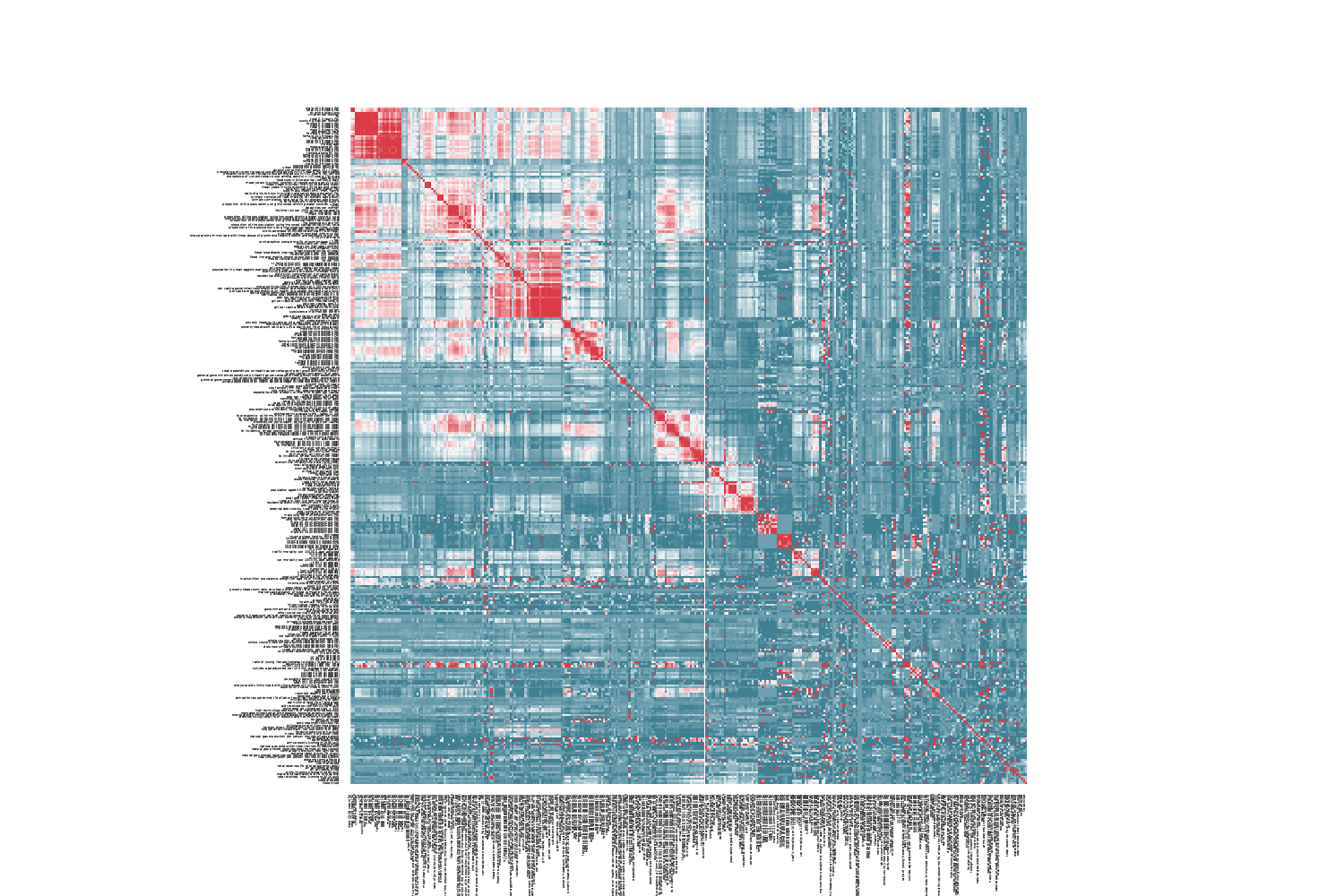}} \vspace{-0.4cm}
  \end{tabular}
  \caption{$\mathcal{E}(\BdCor)$.}
\label{Fig:gapmind0}
\vspace{-0.4cm}
\end{wrapfigure}
This allows us to immediately capture the strength of pairwise association (red means strong, blue means weak)
and to select a value for ROPI to induce sparsity. We select ROPI $0.15$ and plot the heatmap for the posterior probability of dependence (red) and independence (blue)
in Figure \ref{Fig:gapmind1}(top-row). 
\begin{figure}[htp]
  \begin{tabular}{ccc}
    \includegraphics[width=.31\linewidth,trim={3cm 0.0cm 2cm 0},clip]{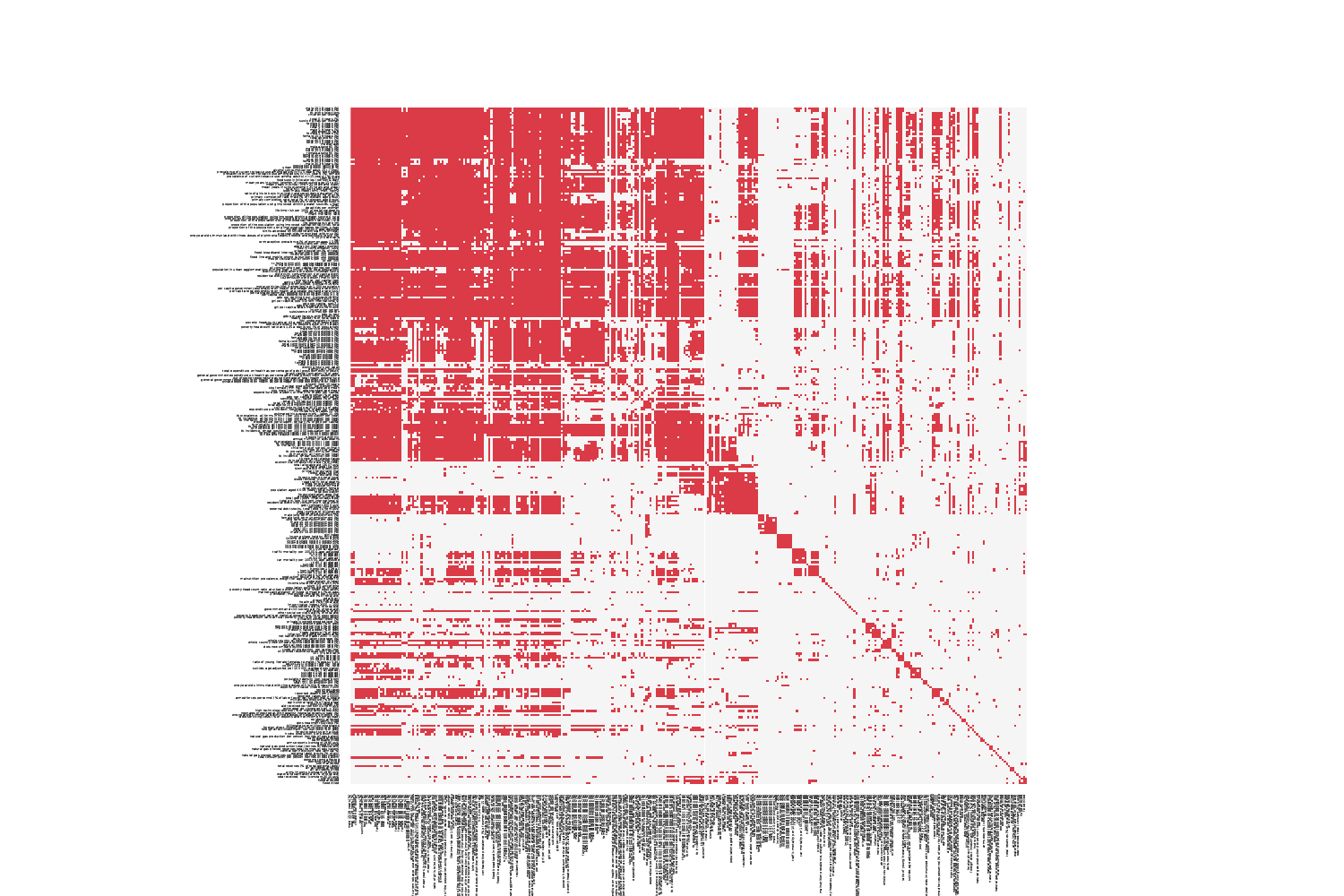} &
   \includegraphics[width=.31\linewidth,trim={3cm 0.0cm 2cm 0},clip]{{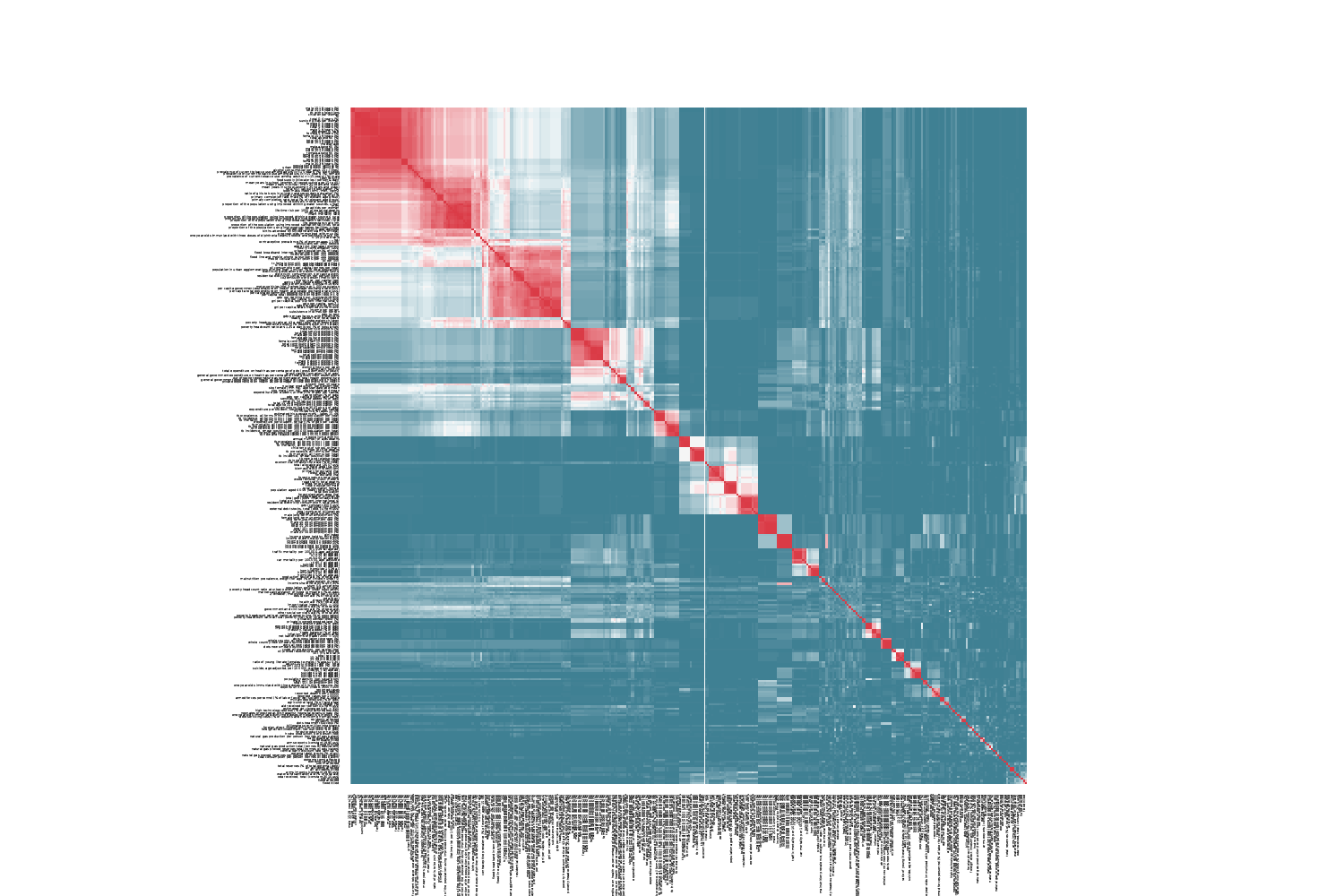}} &
    \includegraphics[width=.31\linewidth,trim={3cm 0.0cm 2cm 0},clip]{{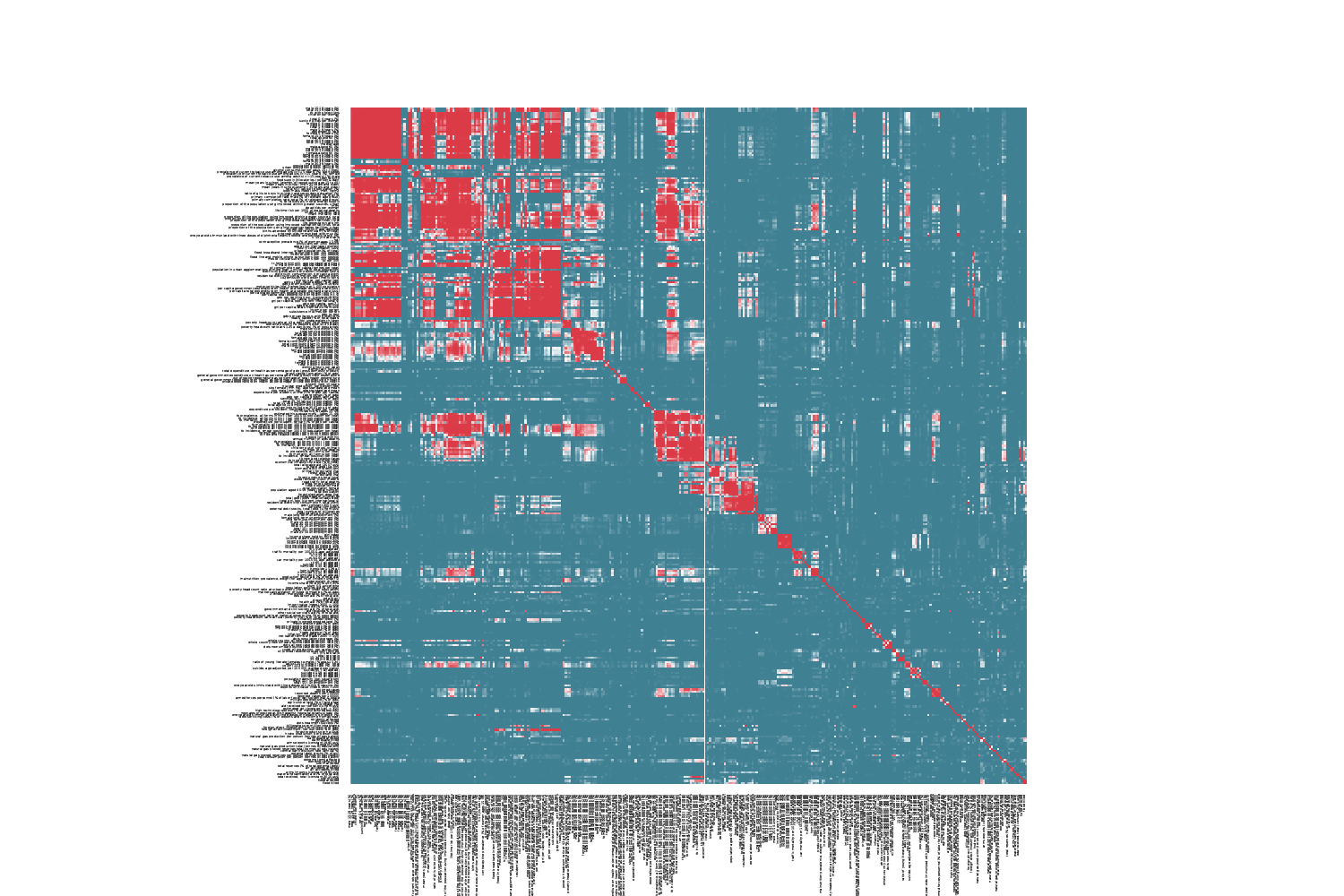}}\\
         \small HSIC & MI-Crosscat & \small BKR  
                 \end{tabular}  
          \begin{tabular}{ll}
         \includegraphics[width=.48\linewidth,trim={0.6cm 0.0cm 1.1cm 1.0cm},clip]{{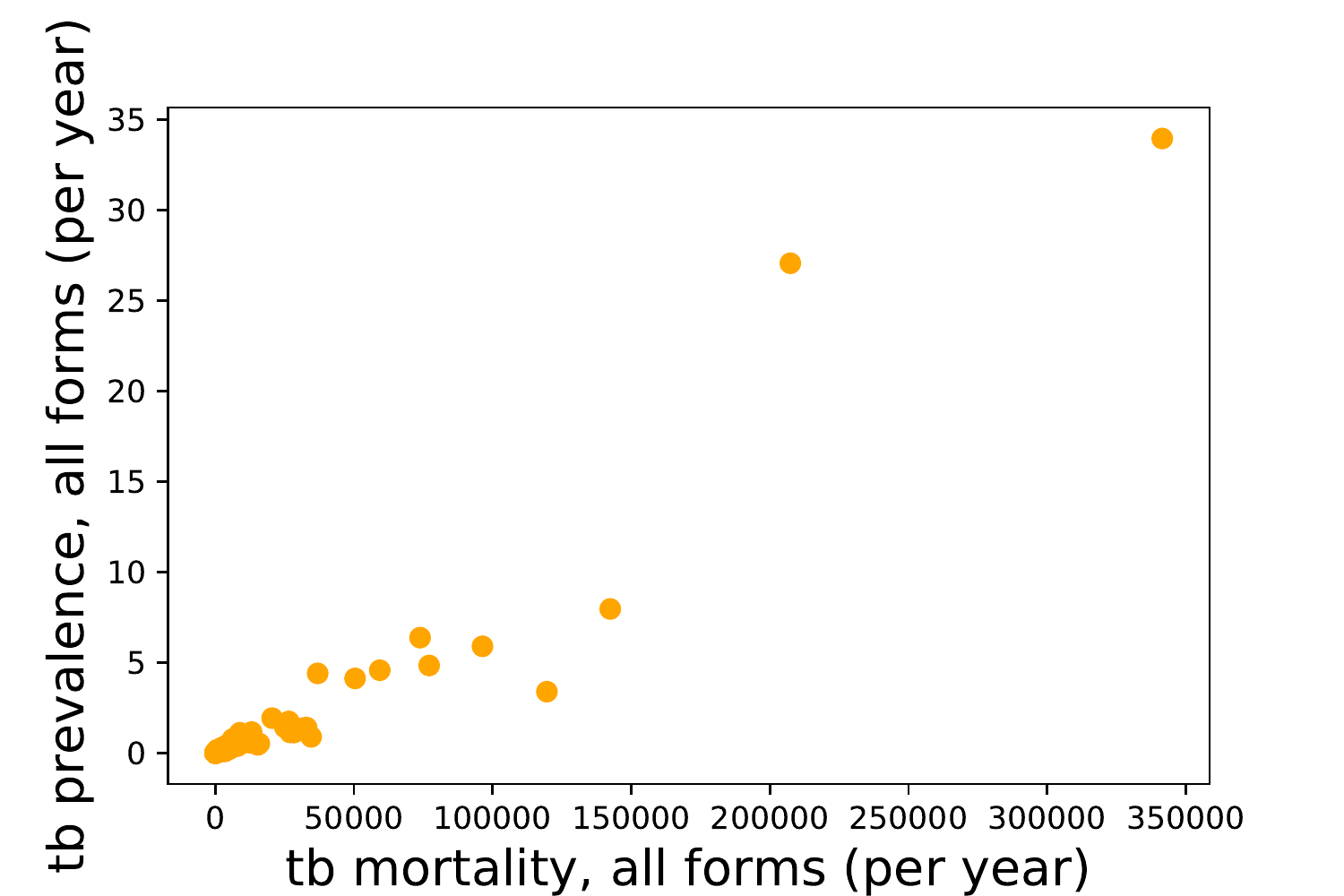}} &
    \includegraphics[width=.48\linewidth,trim={0.6cm 0.0cm 0.7cm 1.0cm},clip]{{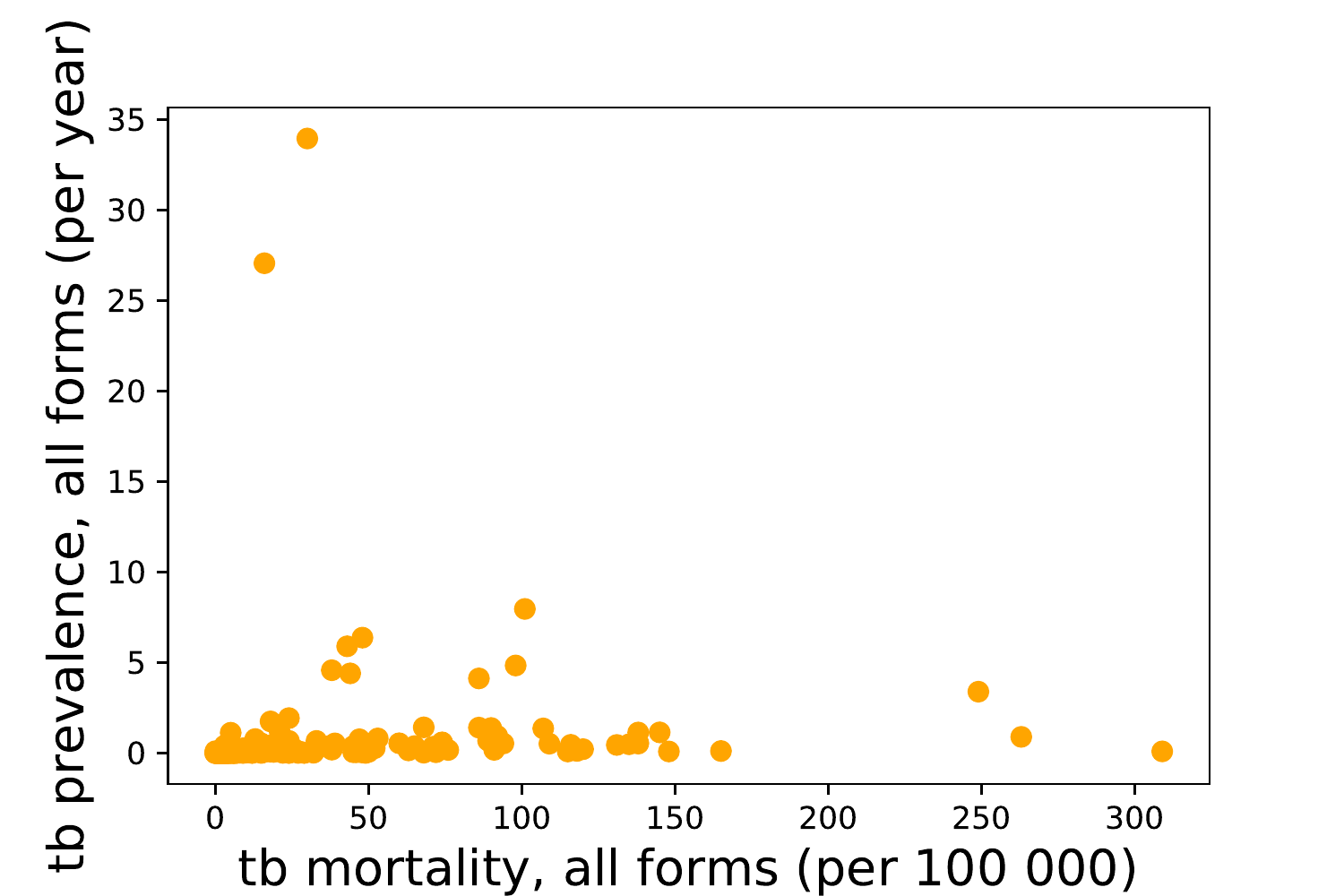}}
  \end{tabular}            
\caption{Gapminder dataset results.}
\label{Fig:gapmind1}
\end{figure}
Figure \ref{Fig:gapmind1}(top-row) shows the pairwise probability that the mutual information (MI), computed using Crosscat, exceeds zero, that is, $P[MI(X_i,X_j ) > 0]$. Note that the metric $P[MI(X_i,X_j ) > 0]$ only indicates the existence of a predictive relationship between $X_i$ and $X_j$, but it does not quantify the strength of such relationship \cite{pmlr-v54-saad17a}.
By comparing MI-Crosscat and BKR it is evident that both Bayesian methods provide a closer insight while HSIC can only reject independence. There are also evident differences: Crosscat is not able to detect some dependences due to outliers. Consider for instance the variables A=\textit{'tb mortality, all forms (per 100 000 population)'}, B=\textit{'tb prevalence, all forms (per year)'} and C=\textit{'tb mortality, all forms (per year)'}: the scatter plots for (A,B) and (B,C) are showed in Figure \ref{Fig:gapmind1}.
Variables are pairwise dependent and
BKR assigns probability $1$ to both the comparisons, while Crosscat assigns probability $1$ for (B,C) and probability $0.04$ for (A,B) due to outliers. More examples are reported in the Supp.\ Material. Crosscat is a more complex and so more computationally costly model than the one used to derive BKR, but BKR offers an effective and fast way to compute dependence/independence for mixed-type variables.

%
%
%
%
\vspace{-0.2cm}
\paragraph{Predicting classifier performance.}
We have shown that $\BdCor$
is a commensurable measure of association by comparing it with MIC and MI-Crosscat. In machine learning, dependence measures are often used to assess what features are relevant for a certain prediction task. Here we use $\BdCor$
to quantify the association between all features of a dataset together ($X$) and the class variable $Y$ in 84 datasets (details in the Supp. Material) from the
Penn Machine Learning Benchmarks~\citep{Olson2017PMLB}. We have used the Nystr\"{o}m low-rank approximation with a RBF kernel for $X$ and a indicator kernel for $Y$.
For each dataset, we have also trained a logistic classifier and computed the  averaged 10-fold cross-validation log-loss. Figure \ref{fig:classif} 
shows the regression plot of $\BdCor$ vs.\ averaged log-loss:  larger values for $\mathcal{E}(\BdCor)$ correspond to lower values for log-loss. This again confirms that  $\mathcal{E}(\BdCor)$ is an effective measure of association.
\begin{figure}
\centering
 \includegraphics[width=5.0cm,trim={0.cm 0.0cm 0.2cm 0.0cm},clip]{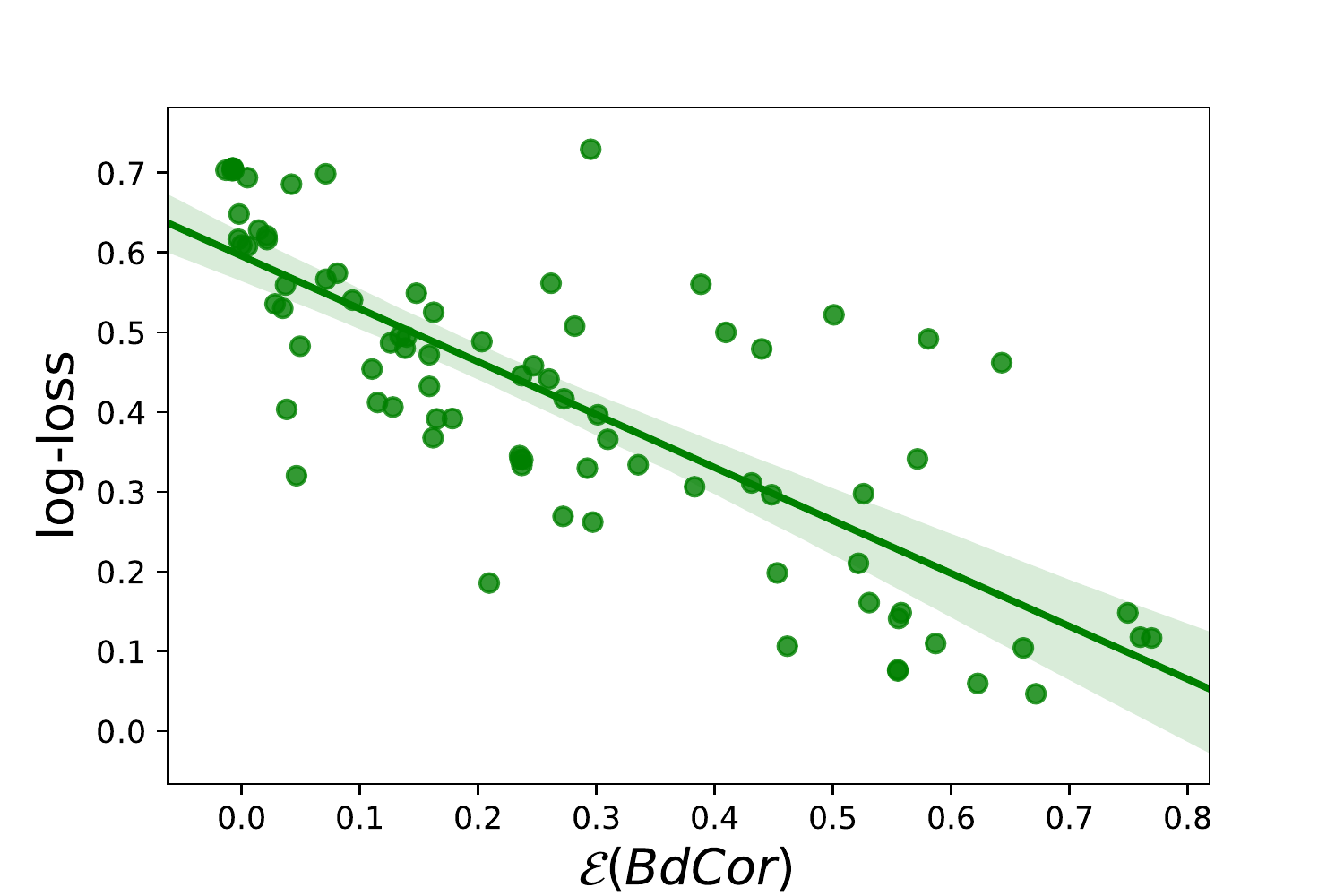}
 \caption{Predicting classifiers performance}
 \label{fig:classif}
\end{figure}

\vspace{-0.2cm}
\paragraph{DGT-Translation Memory.}
DGT-TM is a translation memory (sentences and their manually produced translations) in 24 languages. It contains segments of all the treaties, regulations, and directives adopted by the European Union.
 The documents in different languages are aligned in accordance with well-defined segmentation rules. 
Figure \ref{Fig:EU} shows an example of the English and Italian versions of Document 52015BP0930; the rectangles bound the segments.
\begin{figure}[htp]
 \centering
  \begin{tabular}{cc}
    \includegraphics[width=.4\linewidth,trim={2.cm 0.0cm 1.8cm 3cm},clip]{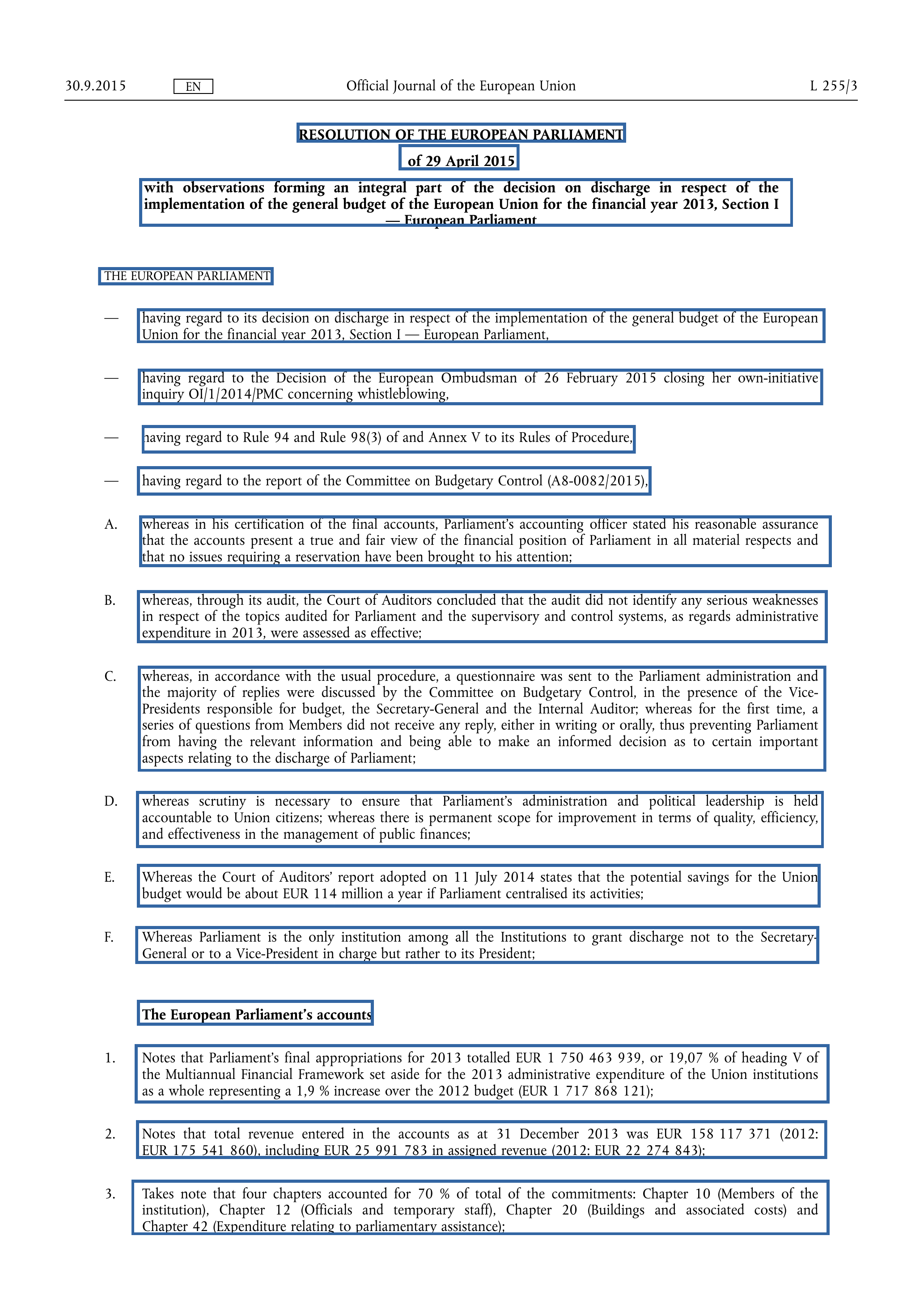} &
   \includegraphics[width=.4\linewidth,trim={2.cm 0.0cm 1.8cm 3cm},clip]{{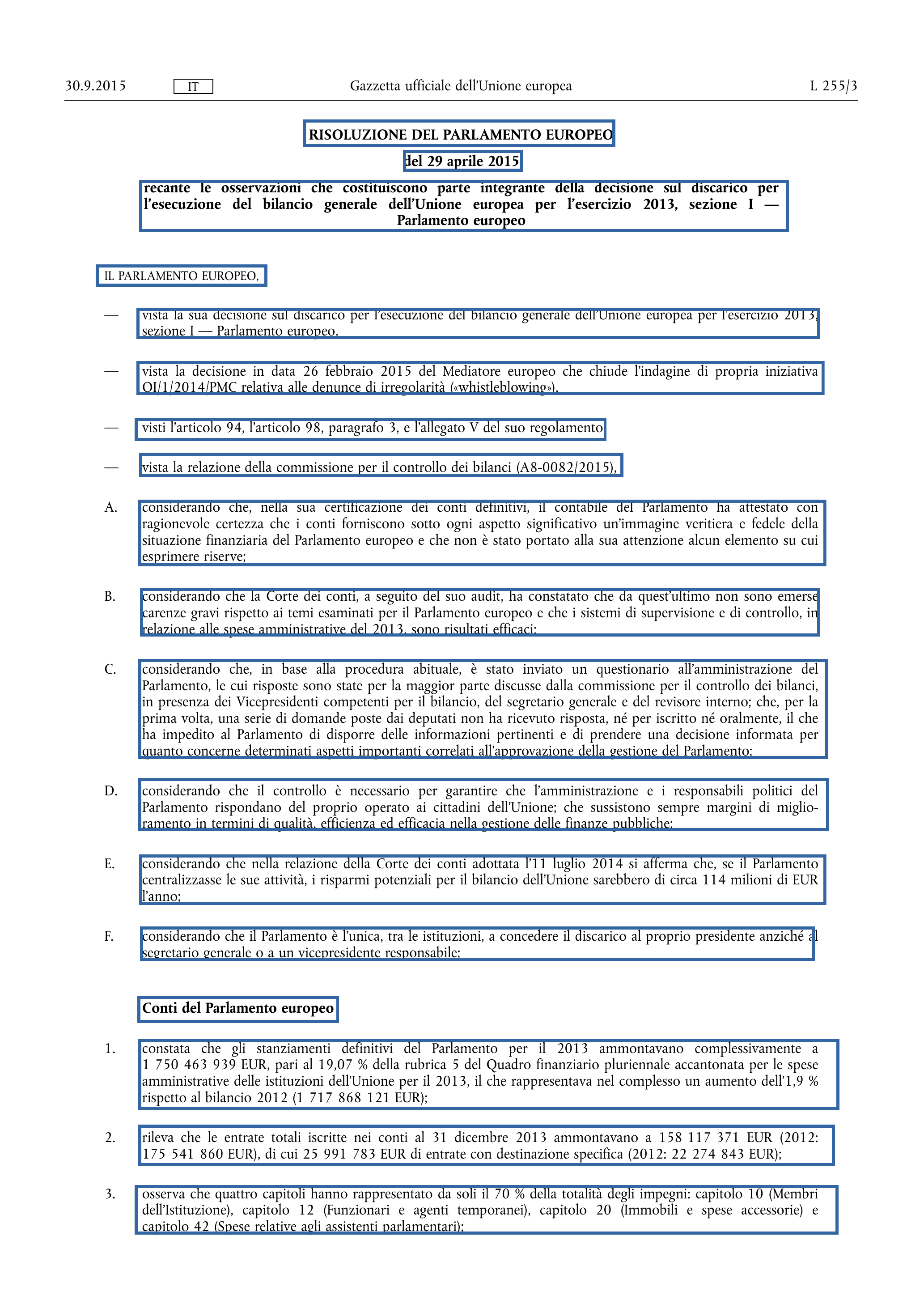}} \\
         \small EN & \small IT 
                 \end{tabular}  
          \begin{tabular}{c}
         \includegraphics[width=.93\linewidth]{{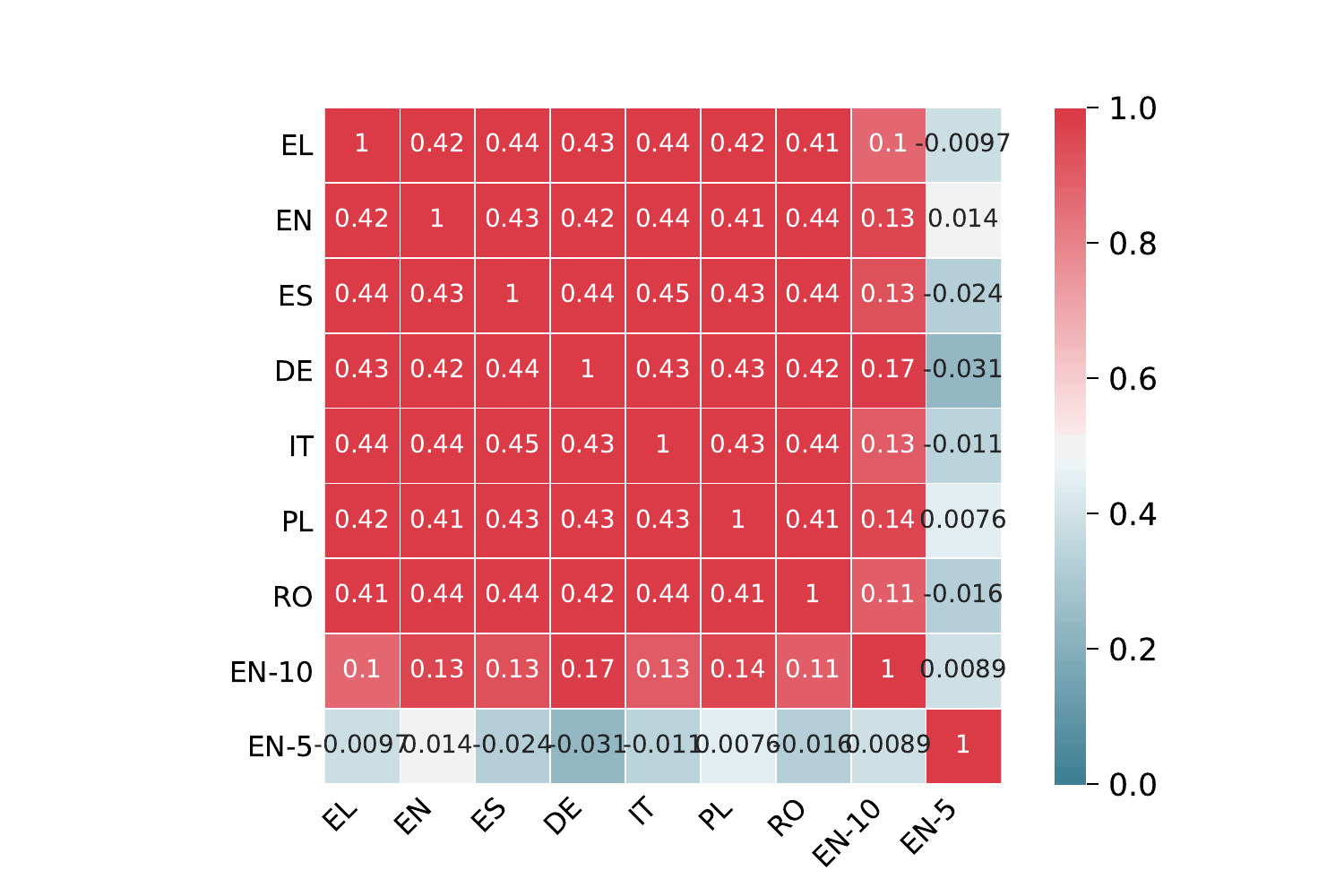}} 
\vspace{-0.2cm}
  \end{tabular}        
\caption{DGT-Translation Memory results. Two document examples (on top) and heatmap among languages.}
\label{Fig:EU}
\end{figure}
We apply BKR to test the association between 7 different translations ('EL','EN','ES','DE','IT','PL','RO') of the above document by considering the 18 segments in Figure~\ref{Fig:EU}. We have also generated two additional documents 
in English obtained by randomly selecting 10 ('EN-10')
and 5 ('EN-5') words in each segment.
We have used the Edit-distance-based RBF kernel.\footnote{Edit-distance is a way of quantifying how dissimilar two strings are to one another by counting the minimum number of operations required to transform one string into the other. We set the kernel length-scale  to the median distance.} Figure \ref{Fig:EU}(bottom)
reports a summary of the results for all pairwise comparisons performed using BKR; each number in the heatmap cell is the posterior mean of $\BdCor$, and the color of the cell is the posterior probability of practical dependence $p(\BdCor > 0.025|Data)$.
BKR is able to detect the strong association among the documents. Note that Spanish ('ES') and Italian ('IT') have the largest value of $\BdCor$ (as expected, because of their similarity). 
BKR can detect dependence even using only 10 words per segment ('EN-10'). With 5 words per segment, BKR (understandably) declares weak dependence with higher probability.

\section{Conclusions}
We proposed a novel Bayesian kernelised association measure by using a kernel embedding of probability measures
based on a Dirichlet Process prior. 
By using this setting, we  derived a new Bayesian kernel test of independence/dependence.
This test overcomes the limitations of null hypothesis significance testing (NHST) and, in particular, allows the data analyst to ``accept'' the null hypothesis and not only to reject it.
We illustrated the effectiveness of our approach by first comparing  this new test with
the kernel independence test developed by~\cite{gretton2008kernel}, with  a test based on the maximal information coefficient, and with a Bayesian test of independence based on the Crosscat model.
We showed that our test is able to declare dependence and independence
and that, for this reason, it takes more decisions than NHSTs do.
An important advantage of kernel tests is that they can be applied on any (un)structured data, being for instance able to detect the dependence
of passages of text and their translation or the dependence between images and text. Such flexibility opens doors for many possible usages.
%
%
%

\bibliography{biblio}

\bibliographystyle{plain}

\newpage
\appendix

\section{Supplementary Material}

\subsection{Experimental method for the \textit{House} dataset}
In this section we outline the methodology used to
produce the pairwise heatmaps shown
in Figure \ref{Fig:1}.  
The location counts for the \textit{House} dataset are quite unbalanced (some locations have only 1 or 2 houses), so we removed all locations with less than 10 houses per location. For the remaining 462 entries, we  resized
the four images (frontal, kitchen, bathroom, bedroom)
to $32\times 32$ pixels, converted the figures in gray-scale and tiled the four images together so that the first
image goes in the top-right corner, the second image in the
top-left corner, the third image in the bottom-right corner, and the final image in the bottom-left corner
(we used an alphabetical order based on the filename).
The final image is $64\times 64$. 
For BKR, we used a square exponential kernel with kernel length-scale set to the median distance
between points in input space for the variables
number of bedrooms, number of bathrooms, area, image and image-size, and we used the indicator function for the categorical variable location.
We selected ROPI=0.025 and we sampled 1000 samples
from the posterior in order to compute the quantities of interest and to plot the posterior densities.
\subsection{Proofs}~\\
We briefly sketch the proofs of our results.
We start by proving this lemma.
\begin{lemma}
\label{lem:lem0}
Let $X,X'\sim P$ and $P$  have distribution $w_0 P_0 + \sum_{i=1}^n w_i \delta_{X_i}$ with $W=[w_0,w_1,\dots,w_n]^T\sim Dir(s,1,\dots,1)$ and $P_0\sim Dp(s,\nu^*)$, then 
 \begin{align}
   \nonumber
  &E_{X,X'}(K(X,X'))= w_0^2\int K(x,x') d\left(P_0(x)P_0(x')\right) \\
  \label{eq:exx}
      &+ 2\sum_{i=1}^n w_0 w_i \int K(x,X_i) d P_0(x)+      \sum_{i=1}^n\sum_{j=1}^n w_iw_j K(X_i,X_j)\\
  \label{eq:exx1}
&=W \mathbb{K}^{XX} W^T
 \end{align}
  where $\mathbb{K}^{XX}$ is a symmetric $(n+1)\times (n+1)$ matrix such that $\mathbb{K}^{XX}_{00}=\int K(x,x') d(P_0(x)P_0(x'))$,
  $\mathbb{K}^{XX}_{0i}=\int K(x,X_i) d P_0(x)$ for $i>0$ and $\mathbb{K}^{XX}_{ij}=K(X_i,X_j) $ for $i=1,\dots,n$ and $j\geq i$.
 \end{lemma}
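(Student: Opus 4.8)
The plan is to exploit the fact that, conditionally on $P$ (equivalently, on the pair $(W,P_0)$), the two copies $X$ and $X'$ are independent and each distributed according to $P$, so that
\[
E_{X,X'}\big(K(X,X')\big)=\iint K(x,x')\,dP(x)\,dP(x').
\]
First I would substitute the mixture representation $P=w_0P_0+\sum_{i=1}^n w_i\delta_{X_i}$ into both factors and expand the double integral by bilinearity (Fubini applies since $K$ is bounded). This produces three groups of terms: the $(P_0,P_0)$ term with coefficient $w_0^2$, which equals $\int K(x,x')\,d(P_0(x)P_0(x'))$; the mixed terms $(P_0,\delta_{X_i})$ and $(\delta_{X_i},P_0)$, each with coefficient $w_0w_i$, which by $\int K(\cdot,x')\,d\delta_{X_i}(x')=K(\cdot,X_i)$ and the symmetry $K(x,X_i)=K(X_i,x)$ collapse into $2\sum_{i=1}^n w_0w_i\int K(x,X_i)\,dP_0(x)$; and the atomic terms $(\delta_{X_i},\delta_{X_j})$ with coefficient $w_iw_j$, giving $\sum_{i,j=1}^n w_iw_j K(X_i,X_j)$. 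Summing these three groups yields exactly \eqref{eq:exx}.

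The remaining step is to recognise the right-hand side of \eqref{eq:exx} as the quadratic form $W\mathbb{K}^{XX}W^T$. Indexing $W=[w_0,\dots,w_n]^T$ from $0$ to $n$, the $w_0^2$ coefficient matches $\mathbb{K}^{XX}_{00}$, the coefficient $2w_0w_i$ is absorbed by the pair of off-diagonal entries $\mathbb{K}^{XX}_{0i}=\mathbb{K}^{XX}_{i0}=\int K(x,X_i)\,dP_0(x)$, and $w_iw_j$ is multiplied by $\mathbb{K}^{XX}_{ij}=K(X_i,X_j)$ for $i,j\geq 1$; symmetry of $\mathbb{K}^{XX}$ is immediate from the symmetry of $K$ and the invariance of the defining integrals. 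This gives \eqref{eq:exx1} and completes the proof.

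There is no genuinely hard step here: the only points requiring care are the bookkeeping of the factor $2$ coming from the two symmetric cross-terms, and noting that the $00$- and $0i$-entries of $\mathbb{K}^{XX}$ are themselves (random) integrals against $P_0$ rather than fixed numbers, so the identity holds conditionally on $(W,P_0)$ and both sides are random variables. Boundedness of $K$ (or, more generally, $P_0$-integrability of $K(x,\cdot)$ and joint $P_0\otimes P_0$-integrability of $K$) is what legitimises interchanging the integrations with the finite sums. This lemma will then be the workhorse for Theorem~\ref{th:HSIC}, applied separately to the $\mathcal{X}$-kernel, the $\mathcal{Y}$-kernel, and the product kernel on $\mathcal{X}\times\mathcal{Y}$, together with the analogous computation for the mixed expectation $E_{XY}(E_{X'}K_{\mathcal X}(X,X')\,E_{Y'}K_{\mathcal Y}(Y,Y'))$.
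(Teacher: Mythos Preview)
Your proof is correct and follows essentially the same approach as the paper: write $E_{X,X'}(K(X,X'))=\iint K(x,x')\,dP(x)\,dP(x')$, plug in the mixture representation of $P$, expand by bilinearity, and then identify the resulting expression as the quadratic form $W\mathbb{K}^{XX}W^T$. Your version is simply more explicit about the bookkeeping (the factor $2$ from the symmetric cross-terms, the randomness of the $0$-row/column via $P_0$, and the integrability hypothesis), but the argument is the same.
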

 \begin{proof}
  Since we have assumed that $X,X'\sim P$, then we have that
  $$
  E_{X,X'}(K(X,X'))= \int K(x,x') d\left(P(x)P(x')\right).
  $$
  By exploiting  $P=w_0 P_0 + \sum_{i=1}^n w_i \delta_{X_i}$,  we obtain \eqref{eq:exx}.
  The last equality \eqref{eq:exx1}  follows from vector algebra.  
 \end{proof}

\paragraph{Proof of Theorem \ref{th:HSIC}}~\\
Theorem \ref{th:HSIC}  can be obtained by Proposition \ref{prop:HSIC_as_expectations}  and Lemma  \ref{lem:lem0}, by
noticing that, in this case,  the observations are paired $Z=[X,Y]$.
Under the hypothesis $X\nindep Y$, we assume that $X,Y$ are jointly distributed with posterior distribution
$w_0 P_0 + \sum_{i=1}^n w_i \delta_{Z_i}$,  with $W=(w_0,w_1,\dots,w_n)\sim Dir(s,1,\dots,1)$ and $P_0\sim Dp(s,\nu^*)$ \qed.

An empirical  estimate of the HSIC statistics from i.i.d.\ samples  $(x_1,y_1),\dots,(x_n,y_n)$ on $\mathcal{X}\times\mathcal{Y}$ can be obtained by replacing the  expectation operators in \eqref{eq:HSIC_as_expectations} with the sample mean \cite{gretton2008kernel} and it is given by
\begin{align}
\nonumber
\text{HSIC}_{obs}(X,Y)&= \frac{1}{n^2} \sum_{i,j=1}^n k_\mathcal{X}(X_i,X_j)k_\mathcal{Y}(Y_i,Y_j)\\
\nonumber
&+\frac{1}{n^4} \sum_{i,j,q,r=1}^n k_\mathcal{X}(X_i,X_j)k_\mathcal{Y}(Y_q,Y_r)\\
\label{eq:HSIC_as_expectationsemp}
&-\frac{2}{n^3} \sum_{i,j,q=1}^n k_\mathcal{X}(X_i,X_j)k_\mathcal{Y}(Y_i,Y_q).
\end{align}
 We now establish a connection of that with our Bayesian estimator for HSIC.
 
\paragraph{Proof of Theorem \ref{th:postmean}}~\\
We exploit the fact that for large $n$, we have that $P \approx  \sum_{i=1}^n w_i \delta_{Z_i}$.
The effect of the prior measure vanishes at the increase of $n$.
Hence, we have that
\begin{align}
\nonumber
\widehat{\text{HSIC}}(X,Y)&= \sum_{i,j=1}^n k_\mathcal{X}(X_i,X_j)k_\mathcal{Y}(Y_i,Y_j)w_iw_j\\
\nonumber
&+\sum_{i,j,q,r=1}^n k_\mathcal{X}(X_i,X_j)k_\mathcal{Y}(Y_q,Y_r)w_iw_jw_qw_r\\
\label{eq:HSIC_as_expectationsempx}
&-2 \sum_{i,j,q=1}^n k_\mathcal{X}(X_i,X_j)k_\mathcal{Y}(Y_i,Y_q)w_iw_jw_q.
\end{align}
The moments of the Dirichlet distributions are
\begin{align}
\nonumber
&\mathcal{E}\left(w_iw_j\right)=\dfrac{1+I_{i=j}}{n(n+1)},\\
\nonumber
&\mathcal{E}\left(w_iw_jw_q\right)=\dfrac{1+I_{i=j \vee i=q \vee j=q}+4 I_{i=j=q}}{n(n+1)(n+2)},\\
\nonumber
&\mathcal{E}\left(w_iw_jw_r w_q\right)=\dfrac{1+I_{i=j \vee i=q \vee j=r \vee j=r \vee j=q \vee r=q}}{n(n+1)(n+2)(n+3)}\\
&+\dfrac{4 I_{i=j=r \vee i=j=q \vee i=r=q \vee j=r=q }+18 I_{i=j=r=q}}{n(n+1)(n+2)(n+3)}.
\end{align}
For large $n$, \eqref{eq:HSIC_as_expectationsempx} converges to \eqref{eq:HSIC_as_expectationsemp}.
We can recover a similar result for the variance
of $\widehat{\text{HSIC}}(X,Y)$ and then use 
\cite[Th.1 and Th.2]{gretton2008kernel} to prove asymptotic consistency.\qed

\paragraph{Proof of Corollary \ref{co:fast}}~\\
We exploit the following property of the Schur product:
$$
W(K^{XX}W^T\circ K^{YY}W^T)=Tr(\text{diag}(W) K^{XX}W^T WK^{YY}).
$$
Hence, note that for $ R=\text{diag}(W)-W^TW$,
 \begin{align*}
Tr(K^{XX}RK^{YY}R)&=Tr(K^{XX}\text{diag}(W)K^{YY}\text{diag}(W))\\
&-2Tr(K^{XX}\text{diag}(W)K^{YY}W^TW)\\
&+Tr(K^{XX}W^TWK^{YY}W^TW)\\
&=W (K^{XX}\circ K^{YY})W^T\\
&-2Tr(K^{XX}\text{diag}(W)K^{YY}W^TW)\\
&+WK^{XX}W^TWK^{YY}W^T\\
&=W (K^{XX}\circ K^{YY})W^T\\
&-2W(K^{XX}W^T\circ K^{YY}W^T)\\
&+WK^{XX}W^TWK^{YY}W^T\, .
\end{align*}
\qed 
\paragraph{Proof of Theorem \ref{th:low-rank}}~\\
We provide the proof for the Nystr\"{o}m method, while the idea is similar for random Fourier features. 
The low-rank approximation approach provided by the Nystr\"{o}m method is achieved by randomly sampling $m$ data points (i.e. inducing variables) from the given $n$ samples and computing the approximate kernel matrix $\tilde{K}$:
$$
\begin{aligned}
\tilde{K}&=K_{n,m}K_{m,m}^{-1}K_{m,n}=(K_{n,m}K_{m,m}^{-0.5})(K_{n,m}K_{m,m}^{-0.5})^T \\
&=\tilde{\phi}\tilde{\phi}^T
\end{aligned}
$$
that provides a  feature  representation for $\tilde{K}$.
Hence, we have that
$$
\begin{aligned}
Tr(K^{XX}RK^{YY}R) &\approx Tr(\tilde{K}^{XX}R\tilde{K}^{YY}R) \\
&=Tr(\tilde{\phi}_{X}\tilde{\phi}_{X}^T R \tilde{\phi}_{Y}\tilde{\phi}_{Y}^TR)\\
&=||\tilde{\phi}_{X}^T R \tilde{\phi}_{Y}||_F^2.
\end{aligned}
$$

\subsection{Pseudo-code for BKR}
We provide an algorithm to perform the Bayesian non-parametric kernel test of independence ($N_{mc}$ denotes the number of Monte Carlos samples):

\begin{enumerate}
 \item Initialise the counter $\tau$ to $0$ and the array $V$ to empty;
 \item For $i=1,\dots,N_{mc}$ 
 \begin{enumerate}
  \item Sample $(\omega_0,\omega_1,\dots,\omega_n)\sim Dir(s,1,\dots,1)$ and $P_0$ from the prior Dirichlet process $Dp(s,\nu^*)$ (via stick-breaking);
  \item Compute $\widehat{HSIC}(X,Y)$, $\widehat{HSIC}(X,X)$, and $\widehat{HSIC}(Y,Y)$ as in Equation~\eqref{eq:whsicxy};
    \item Sample a permutation $\pi$ of the list $[1,2,\dots,n]$ and compute $\tau=\tau+\widehat{HSIC}(X,Y_\pi)/n$;
  \item Compute $$\widehat{HSIC}(X,Y)/\sqrt{\widehat{HSIC}(X,X)\widehat{HSIC}(Y,Y)}$$ and store the result in $V$;
 \end{enumerate}
 \item Compute the histogram of the elements in $\tilde{V}=(V-\tau)/(1-\tau)$ (this gives us the plot of the posterior of  $\BdCor$).
 \item Compute the posterior probability that  $\BdCor$ is greater than ROPI as $\mathcal{P}(\BdCor>ROPI)\approx \frac{\#(\tilde{V}>ROPI)}{N_{mc}}$.
\end{enumerate}

\subsection{Synthetic data}
Table \ref{tab:1} shows an instance of D1 including $n= 5$ observations  generated with $\rho=0.9999$.
\begin{table*}
\centering
\begin{tabular}{cccccc}
\toprule
      $X$ &  $Y$ &  $C_X^\rho$ &  $D_X^\rho$ &  $D_Y^\rho$ &                                $\mathbb{C}_X^\rho$ \\
\midrule
-0.99784 &  1.0 &     -0.99827 &           0.0 &           1.0 &  [0.15922, 0.15967, 0.159..] \\
  1.25424 &  0.0 &      1.25572 &           1.0 &           0.0 &  [0.89481, 0.89531, 0.895...] \\
 -0.65891 &  1.0 &     -0.65962 &           0.0 &           1.0 &  [0.25504, 0.25559, 0.255...] \\
  0.08530 &  0.0 &      0.08667 &           1.0 &           0.0 &  [0.53422, 0.53310, 0.533...] \\
  1.27031 &  0.0 &      1.26787 &           1.0 &           0.0 &  [0.89805, 0.89818, 0.898...] \\
\bottomrule
\end{tabular}
 \caption{An instance of D1 for $n=5$ and $\rho=0.9999$.}
\label{tab:1}
 \end{table*}

Figure~\ref{fig:clay} shows the results for the simulations with Gaussian Copula with ROPI=0.05 (top) and Clayton
Copula with ROPI=0.025 as discussed in Section~\ref{ss} of the paper.

\begin{figure}[htp]
\centering
\vspace{-0pt}
 \setlength\tabcolsep{0.1pt}
 \begin{tabular}{r}
\includegraphics[height=1.2in]{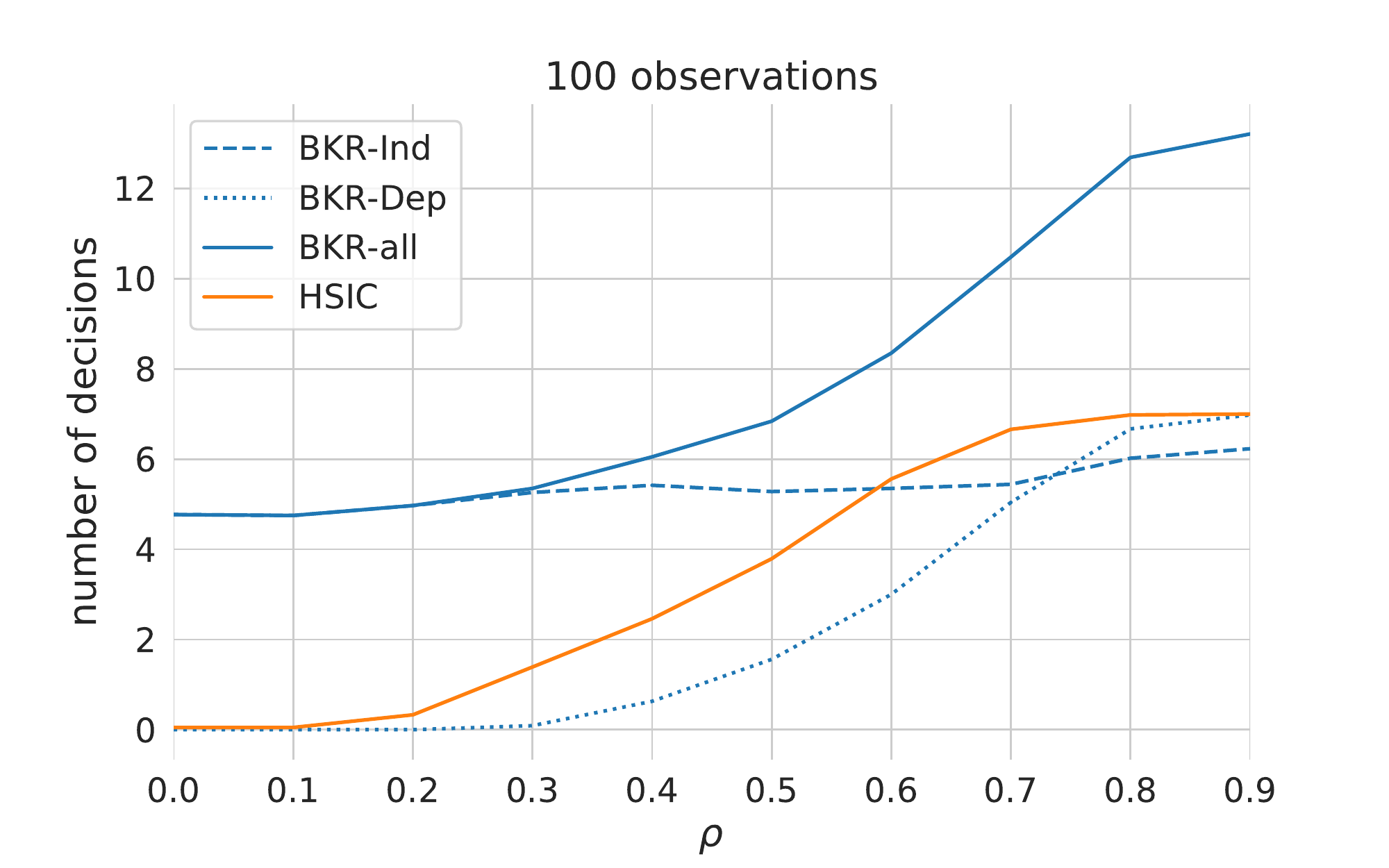}\\ 
\includegraphics[height=1.2in]{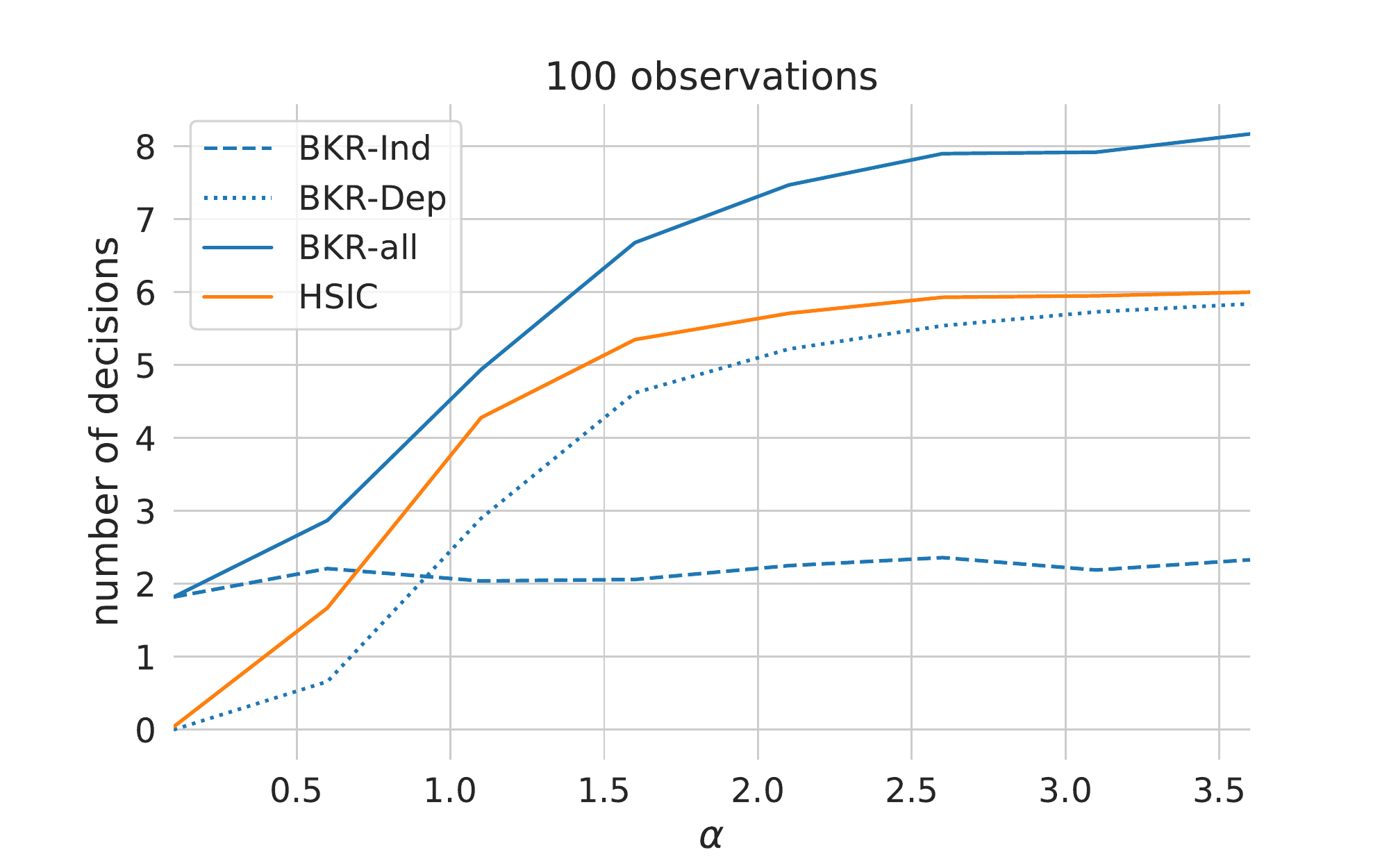} 
\end{tabular}
\vspace{-5pt}
\caption{Synthetic dataset D1 with ROPI=0.05 and synthetic dataset D2 with ROPI=0.025}
\label{fig:clay}
\vspace{-0pt}
\end{figure}

\subsection{Experimental method for the MIC dataset}
For BKR, we used a square exponential kernel with kernel length-scale set to the median distance
between points in input space.
For MIC, we have approximated the null distribution by randomly permuting one of the variables in the comparison, repeating $500$ times, and then used the approximated null distribution to compute p-values. We have used a similar approach for HSIC.

\subsection{Experimental method for the Gapminder dataset}
In this section we outline the methodology used to
produce the pairwise heatmaps shown
in Figures \ref{Fig:gapmind0} and \ref{Fig:gapmind1}. 
For BKR and HSIC, for each pairwise comparison, all records in which at least one of the variables is missing were dropped.
If the total number of remaining observations was less than
three, the returned decision was ``undecided''
(p-value=0.5 and posterior probability equal to 0.5).

Figure \ref{Fig:gapmindSupp} shows the scatter plots
of 8 pairwise comparisons for the Gapminder datasets.
The title of each plot reports the posterior probability of dependence computed by BKR (left) and Crosscat (right).
Both tests agree for most of the comparisons, even though there are differences in the presence of outliers. For instance, BKR is able to detect the dependence in the last two plots, while the Crosscat probability is close to indicate independence.
\begin{figure*}
\centering
  \begin{tabular}{ccc}
         \includegraphics[width=.30\linewidth]{{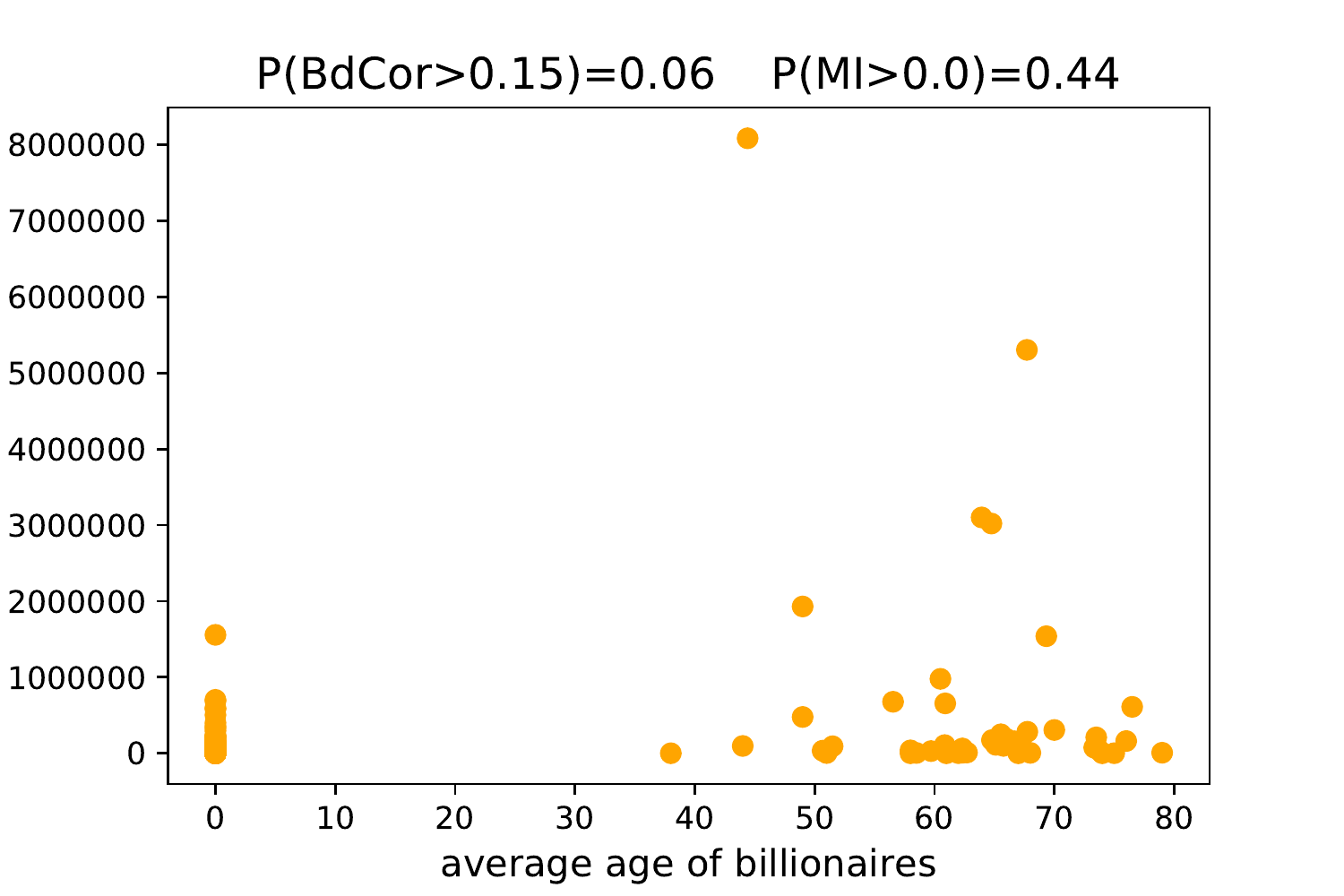}} &
    \includegraphics[width=.30\linewidth]{{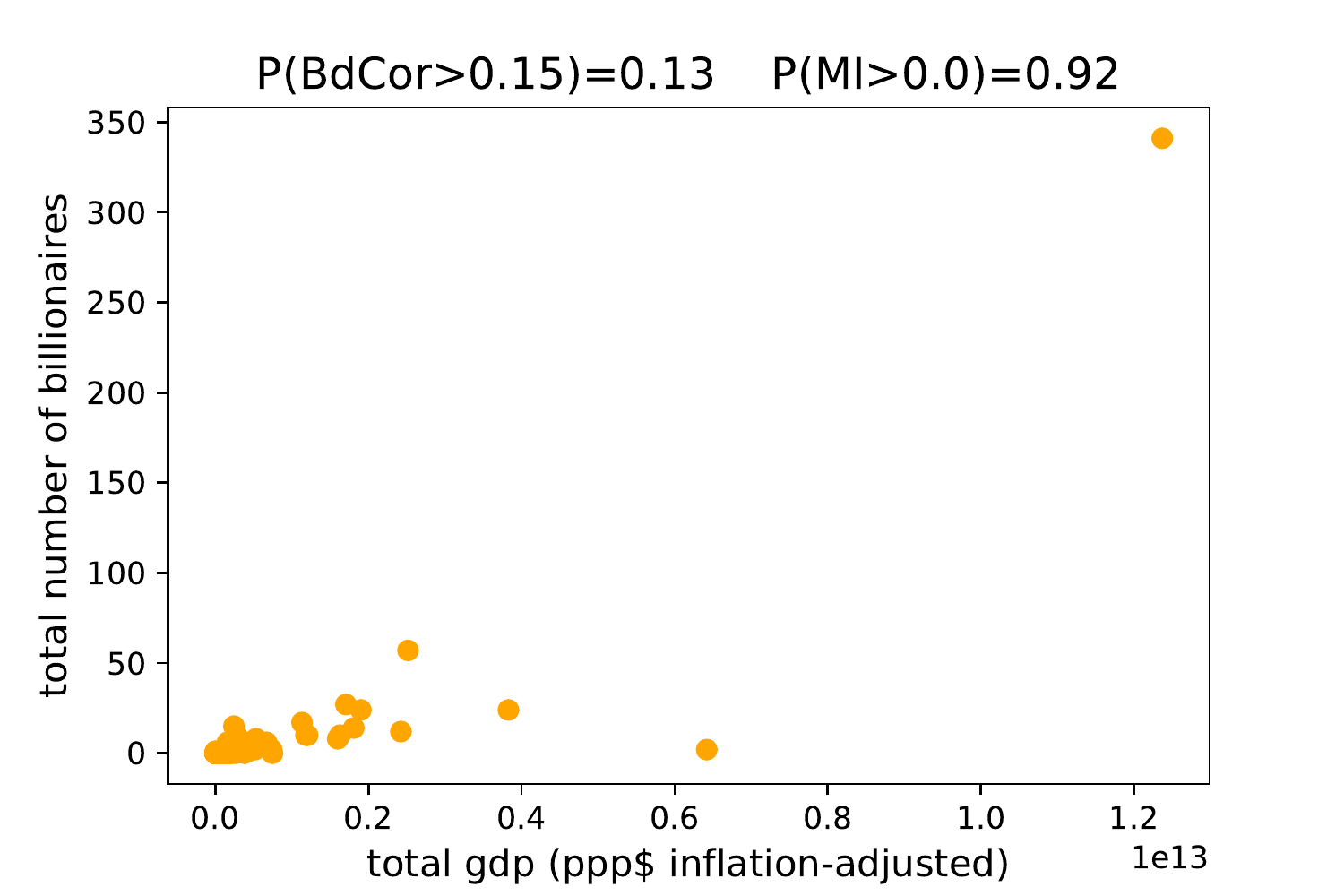}}
    &    \includegraphics[width=.30\linewidth]{{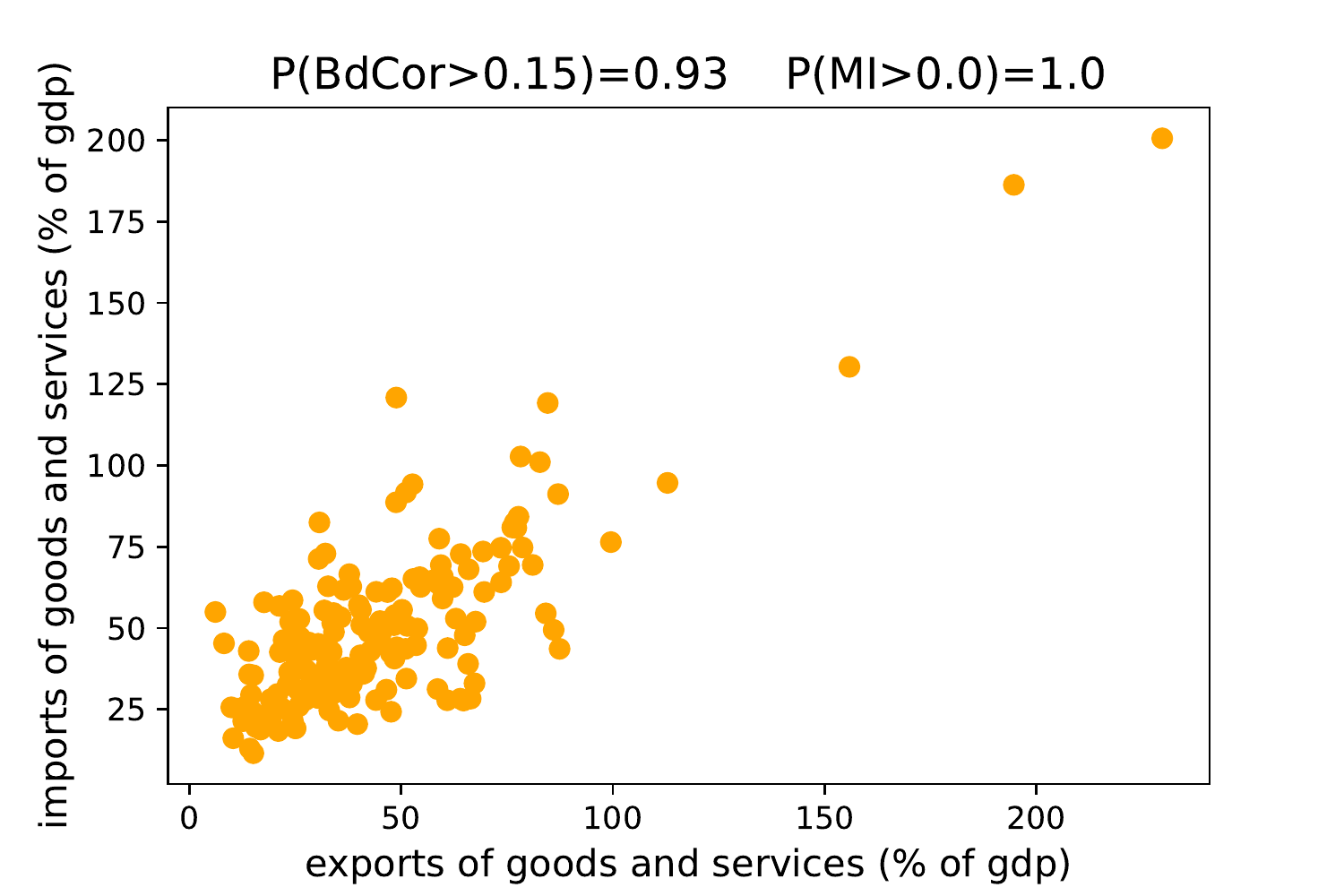}}\\
            \includegraphics[width=.30\linewidth]{{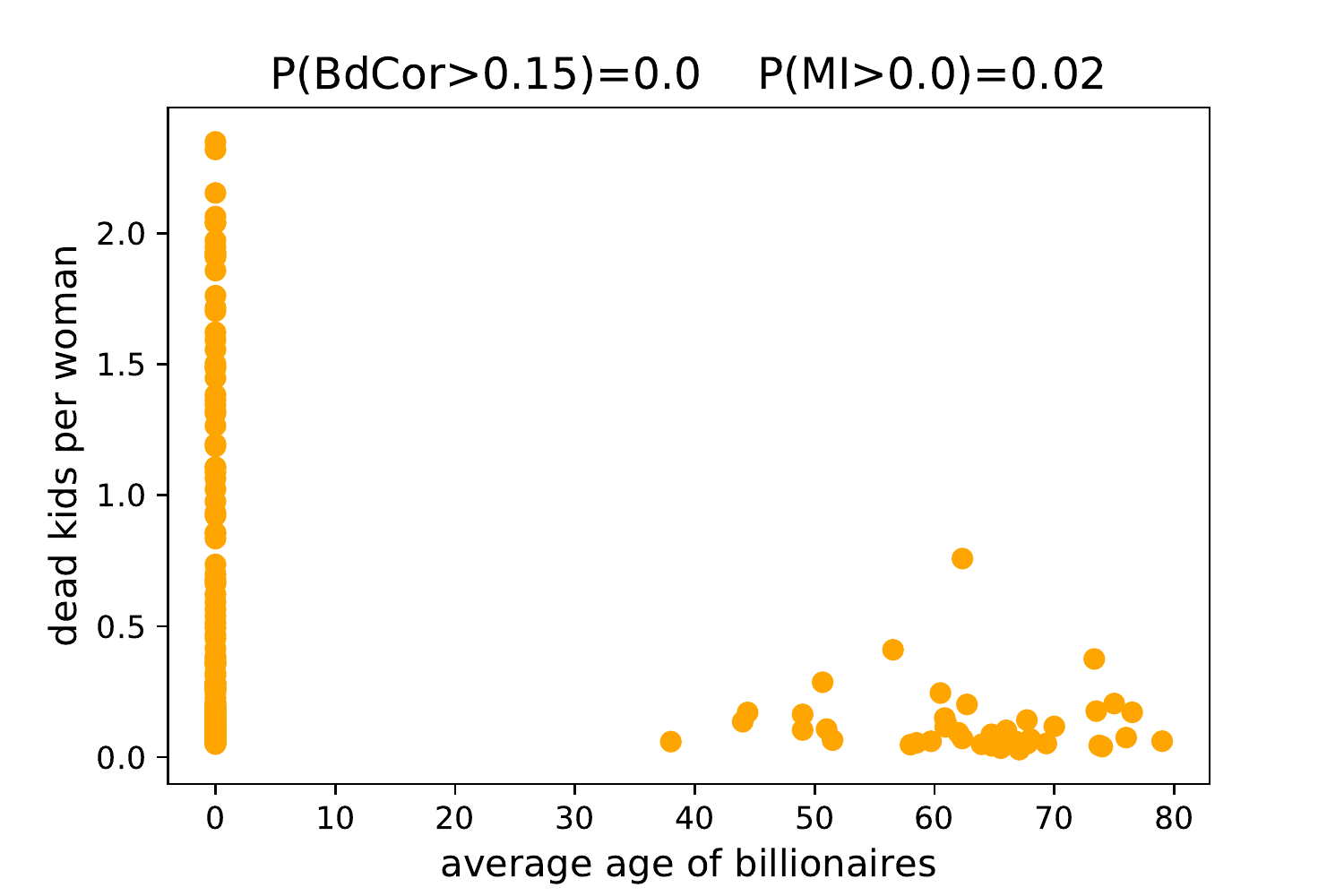}} &
    \includegraphics[width=.30\linewidth]{{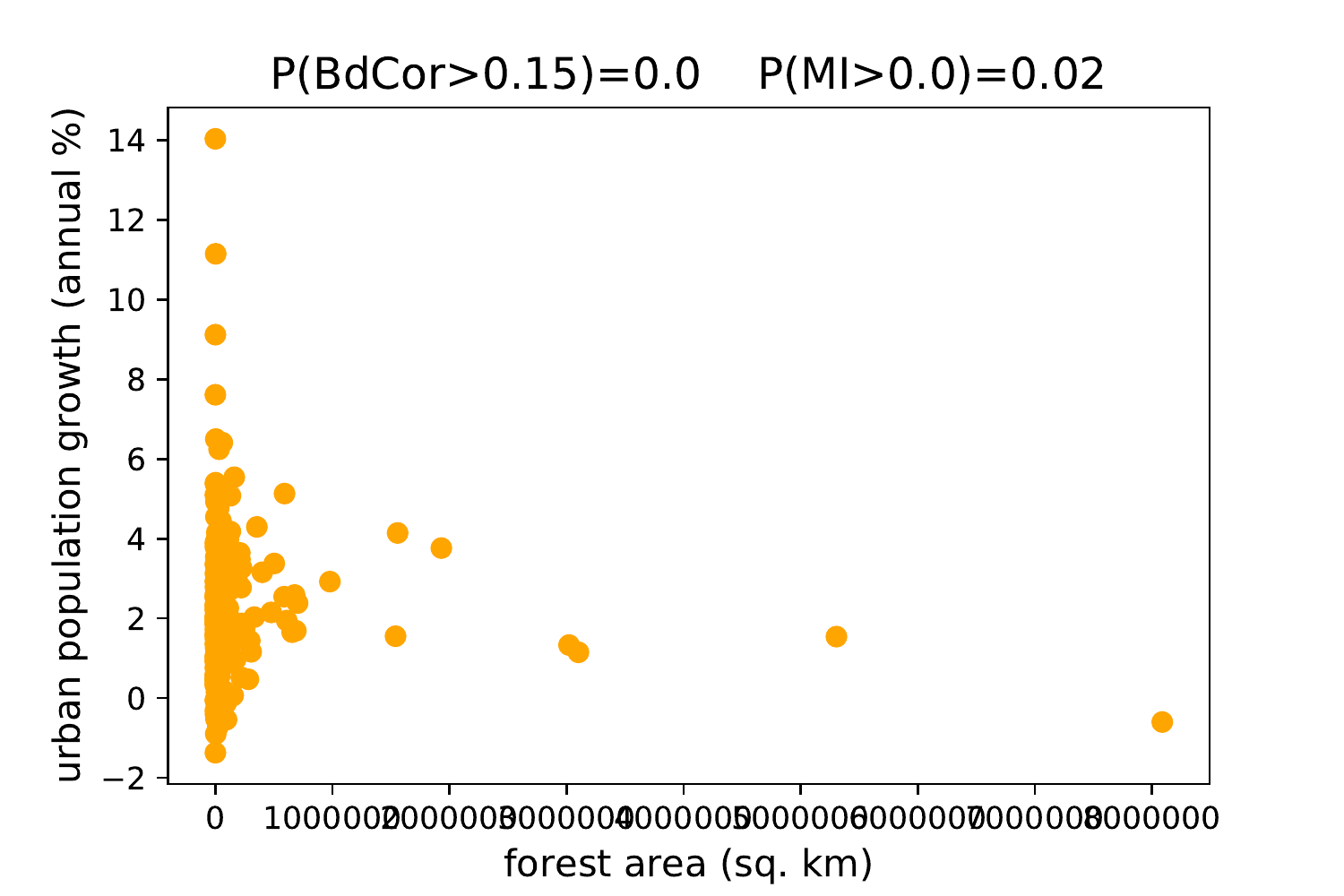}}
    &    \includegraphics[width=.30\linewidth]{{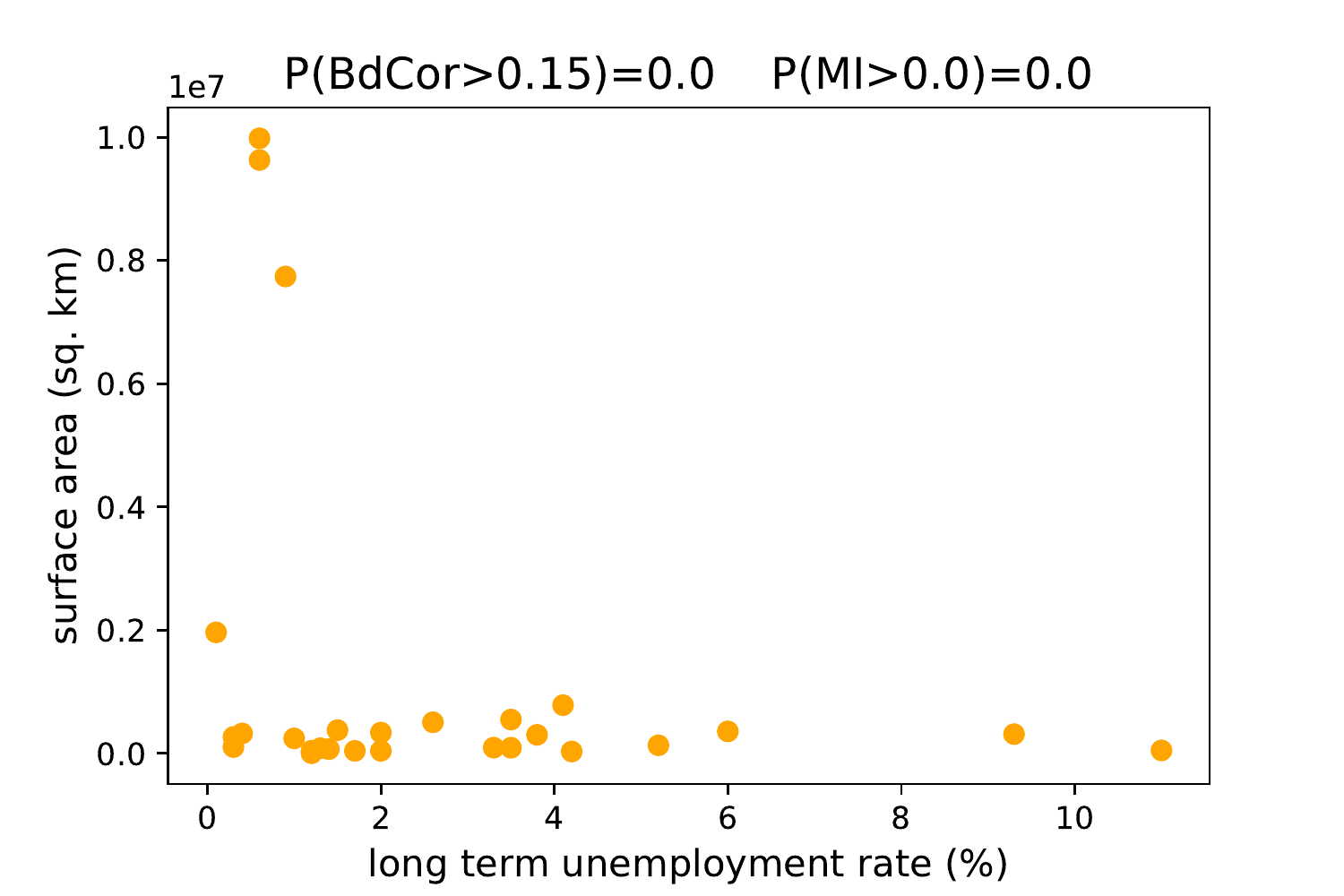}}\\
                \includegraphics[width=.30\linewidth]{{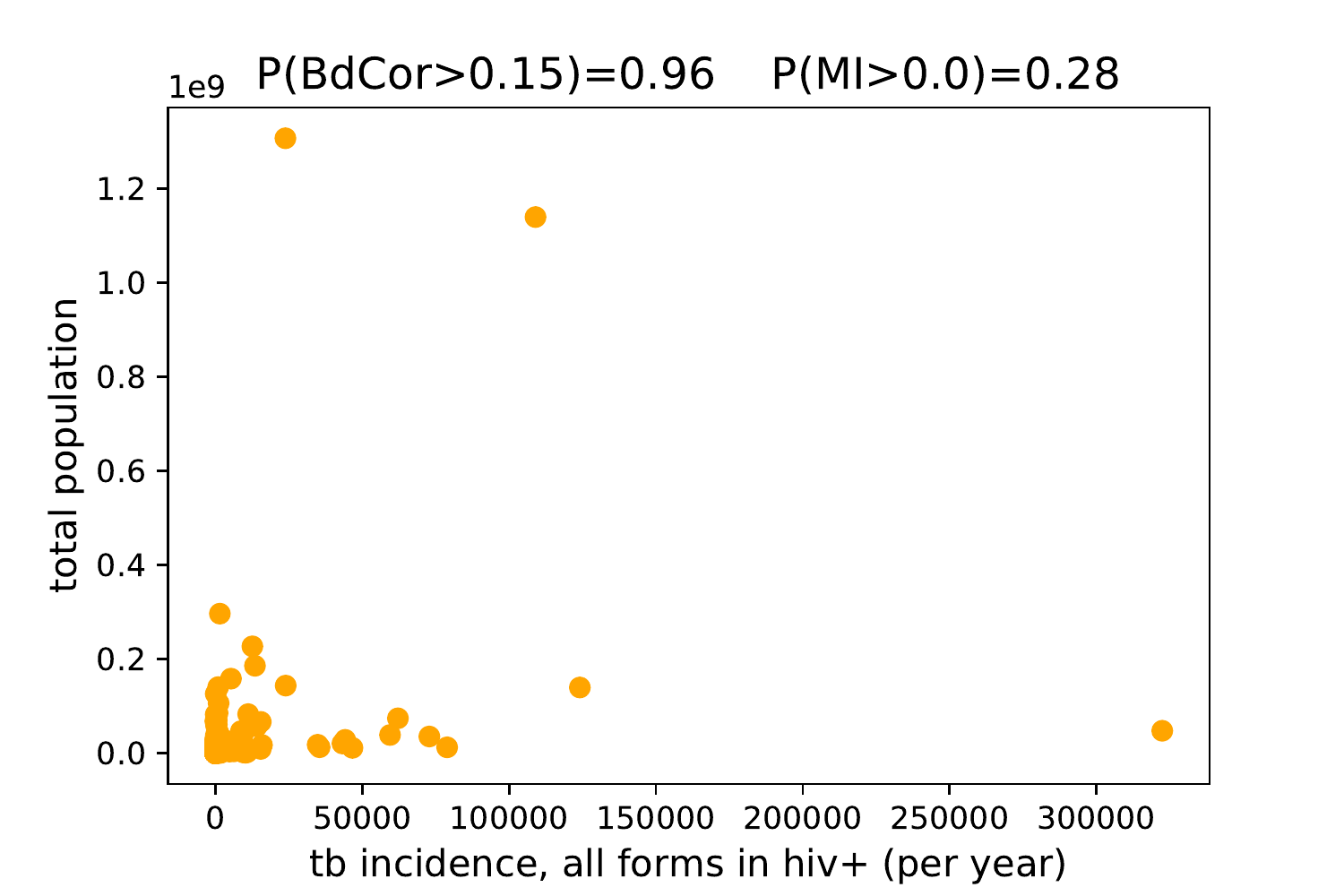}} &
    \includegraphics[width=.30\linewidth]{{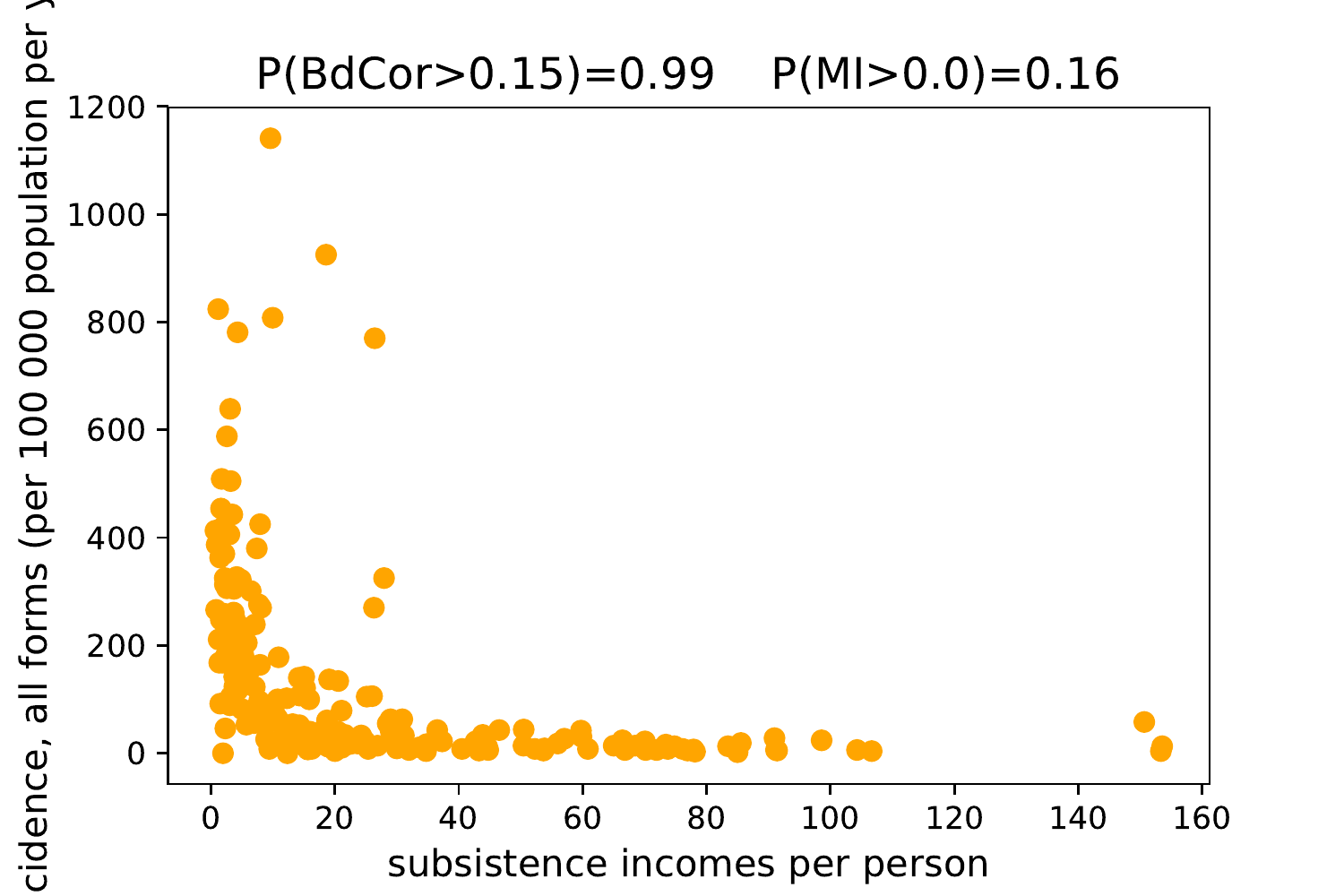}}\\
  \end{tabular}                                                                                                                                                                                                                    
\caption{Gapminder dataset}
\label{Fig:gapmindSupp}
\end{figure*}

\begin{table*}
\centering
{\tiny
\begin{tabular}{lllllrr}
\toprule
{} &                                               name & \#instances & \#features & \#classes &  $\mathcal{E}(BdCor)$ &  log-loss \\
\midrule
0  &        GAM\_Epi\_2-Way\_20atts\_0.1H\_EDM-1\_1 &        1600 &         20 &         2 &             -0.008035 &  0.702575 \\
1  &        GAM\_Epi\_2-Way\_20atts\_0.4H\_EDM-1\_1 &        1600 &         20 &         2 &             -0.007973 &  0.705753 \\
2  &        GAM\_Epi\_3-Way\_20atts\_0.2H\_EDM-1\_1 &        1600 &         20 &         2 &             -0.006651 &  0.703142 \\
3  &  GAM\_Het\_20atts\_1600\_Het\_0.4\_0.2\_... &        1600 &         20 &         2 &             -0.007961 &  0.705855 \\
4  &  GAM\_Het\_20atts\_1600\_Het\_0.4\_0.2\_... &        1600 &         20 &         2 &             -0.007268 &  0.705941 \\
5  &                             Hill\_Valley\_with\_noise &        1212 &        100 &         2 &             -0.000064 &  0.609377 \\
6  &                          Hill\_Valley\_without\_noise &        1212 &        100 &         2 &             -0.002888 &  0.616655 \\
7  &                             analcatdata\_authorship &         841 &         70 &         4 &              0.671466 &  0.046860 \\
8  &                                         australian &         690 &         14 &         2 &              0.292128 &  0.329723 \\
9  &                                           backache &         180 &         32 &         2 &              0.049330 &  0.482633 \\
10 &                                      balance-scale &         625 &          4 &         3 &              0.309459 &  0.365887 \\
11 &                                             banana &        5300 &          2 &         2 &              0.042040 &  0.685723 \\
12 &                                             biomed &         209 &          8 &         2 &              0.271644 &  0.269282 \\
13 &                                             breast &         699 &         10 &         2 &              0.759781 &  0.117858 \\
14 &                                      breast-cancer &         286 &          9 &         2 &              0.071111 &  0.566539 \\
15 &                            breast-cancer-wisconsin &         569 &         30 &         2 &              0.554668 &  0.076874 \\
16 &                                           breast-w &         699 &          9 &         2 &              0.769333 &  0.116817 \\
17 &                                           buggyCrx &         690 &         15 &         2 &              0.236791 &  0.333144 \\
18 &                                               bupa &         345 &          6 &         2 &              0.021190 &  0.620984 \\
19 &                                                car &        1728 &          6 &         4 &              0.070974 &  0.698769 \\
20 &                                               cars &         392 &          8 &         3 &              0.281463 &  0.507757 \\
21 &                                              cars1 &         392 &          7 &         3 &              0.261504 &  0.561553 \\
22 &                                              churn &        5000 &         20 &         2 &              0.046255 &  0.320298 \\
23 &                                              cleve &         303 &         13 &         2 &              0.236671 &  0.445662 \\
24 &                                              colic &         368 &         22 &         2 &              0.133850 &  0.495178 \\
25 &                                             corral &         160 &          6 &         2 &              0.296792 &  0.262120 \\
26 &                                           credit-a &         690 &         15 &         2 &              0.237597 &  0.340254 \\
27 &                                           credit-g &        1000 &         20 &         2 &              0.028159 &  0.535588 \\
28 &                                                crx &         690 &         15 &         2 &              0.235325 &  0.341087 \\
29 &                                        dermatology &         366 &         34 &         6 &              0.749185 &  0.148374 \\
30 &                                           diabetes &         768 &          8 &         2 &              0.138120 &  0.480159 \\
31 &                                                dna &        3186 &        180 &         3 &              0.209221 &  0.185771 \\
32 &                                              ecoli &         327 &          7 &         5 &              0.642567 &  0.461982 \\
33 &                                              flare &        1066 &         10 &         2 &              0.037990 &  0.403444 \\
34 &                                             german &        1000 &         20 &         2 &              0.034654 &  0.530094 \\
35 &                                             glass2 &         163 &          9 &         2 &              0.080849 &  0.574155 \\
36 &                                           haberman &         306 &          3 &         2 &              0.037010 &  0.559296 \\
37 &                                            heart-c &         303 &         13 &         2 &              0.272453 &  0.416660 \\
38 &                                            heart-h &         294 &         13 &         2 &              0.246680 &  0.458117 \\
39 &                                      heart-statlog &         270 &         13 &         2 &              0.301055 &  0.396709 \\
40 &                                          hepatitis &         155 &         19 &         2 &              0.158631 &  0.432231 \\
41 &                                        horse-colic &         368 &         22 &         2 &              0.125779 &  0.486803 \\
42 &                                     house-votes-84 &         435 &         16 &         2 &              0.555361 &  0.141234 \\
43 &                                          hungarian &         294 &         13 &         2 &              0.259716 &  0.441491 \\
44 &                                         ionosphere &         351 &         34 &         2 &              0.234897 &  0.345410 \\
45 &                                              irish &         500 &          5 &         2 &              0.203002 &  0.488340 \\
46 &                                     liver-disorder &         345 &          6 &         2 &              0.021365 &  0.616264 \\
47 &                                      mfeat-factors &        2000 &        216 &        10 &              0.557608 &  0.148499 \\
48 &                                      mfeat-fourier &        2000 &         76 &        10 &              0.388237 &  0.560296 \\
49 &                                     mfeat-karhunen &        2000 &         64 &        10 &              0.448040 &  0.296411 \\
50 &                                        mfeat-pixel &        2000 &        240 &        10 &              0.525751 &  0.297728 \\
51 &                                      mfeat-zernike &        2000 &         47 &        10 &              0.409329 &  0.499941 \\
52 &                                              monk1 &         556 &          6 &         2 &              0.004813 &  0.693910 \\
53 &                                              monk2 &         601 &          6 &         2 &             -0.002324 &  0.648324 \\
54 &                                              monk3 &         554 &          6 &         2 &              0.178117 &  0.391921 \\
55 &                                        new-thyroid &         215 &          5 &         3 &              0.521382 &  0.210466 \\
56 &                                          optdigits &        5620 &         64 &        10 &              0.530505 &  0.161148 \\
57 &                                          parity5+5 &        1124 &         10 &         2 &             -0.013396 &  0.703223 \\
58 &                                            phoneme &        5404 &          5 &         2 &              0.158538 &  0.471846 \\
59 &                                               pima &         768 &          8 &         2 &              0.139403 &  0.494266 \\
60 &                                         prnn\_synth &         250 &          2 &         2 &              0.335151 &  0.333982 \\
61 &                                              profb &         672 &          9 &         2 &              0.004817 &  0.608035 \\
62 &                                               ring &        7400 &         20 &         2 &              0.500697 &  0.521947 \\
63 &                                            saheart &         462 &          9 &         2 &              0.093666 &  0.540378 \\
64 &                                           satimage &        6435 &         36 &         6 &              0.580559 &  0.491798 \\
65 &                                       segmentation &        2310 &         19 &         7 &              0.571308 &  0.341357 \\
66 &                                      solar-flare\_2 &        1066 &         12 &         6 &              0.295000 &  0.729483 \\
67 &                                            soybean &         675 &         35 &        18 &              0.439609 &  0.479234 \\
68 &                                              spect &         267 &         22 &         2 &              0.164834 &  0.391502 \\
69 &                                             spectf &         349 &         44 &         2 &              0.161740 &  0.367843 \\
70 &                                             splice &        3188 &         60 &         3 &              0.110142 &  0.453967 \\
71 &                                            texture &        5500 &         40 &        11 &              0.461275 &  0.106539 \\
72 &                                           threeOf9 &         512 &          9 &         2 &              0.127778 &  0.406353 \\
73 &                                        tic-tac-toe &         958 &          9 &         2 &              0.014250 &  0.628244 \\
74 &                                            titanic &        2201 &          3 &         2 &              0.162253 &  0.525138 \\
75 &                                             tokyo1 &         959 &         44 &         2 &              0.452755 &  0.198396 \\
76 &                                            twonorm &        7400 &         20 &         2 &              0.622287 &  0.059936 \\
77 &                                            vehicle &         846 &         18 &         4 &              0.147601 &  0.549114 \\
78 &                                               vote &         435 &         16 &         2 &              0.586648 &  0.109903 \\
79 &                                        waveform-21 &        5000 &         21 &         3 &              0.431121 &  0.311201 \\
80 &                                        waveform-40 &        5000 &         40 &         3 &              0.382859 &  0.306393 \\
81 &                                               wdbc &         569 &         30 &         2 &              0.554726 &  0.075543 \\
82 &                                   wine-recognition &         178 &         13 &         3 &              0.660826 &  0.104399 \\
83 &                                                xd6 &         973 &          9 &         2 &              0.114803 &  0.412084 \\
\bottomrule
\end{tabular}}
\caption{Predicting classifier performance using  84 datasets from the Penn Machine Learning Benchmarks repository. Last column is the log-loss of the logistic regression classifier with $L_2$ penalty and fixed
reguralisation constant ($C=1$).}
\end{table*} 

\end{document}